
\documentclass{article}

\usepackage{microtype}
\usepackage{graphicx}
\usepackage{subfigure}
\usepackage{booktabs} 
\usepackage{hyperref}

\usepackage[accepted]{icml2025}

\usepackage{amsmath}
\usepackage{amssymb}
\usepackage{mathtools}
\usepackage{amsthm}

\usepackage[capitalize,noabbrev]{cleveref}

\theoremstyle{plain}
\newtheorem{theorem}{Theorem}[section]
\newtheorem{proposition}[theorem]{Proposition}
\newtheorem{lemma}[theorem]{Lemma}

\theoremstyle{definition}
\newtheorem{definition}[theorem]{Definition}

\theoremstyle{remark}

\theoremstyle{remark}
\newtheorem{example}[theorem]{Example}

\usepackage[textsize=tiny]{todonotes}

\usepackage{algorithm}
\usepackage{algorithmic}
\usepackage{nicefrac} 
\usepackage{enumitem}
\usepackage{color}
\usepackage{multirow}
\usepackage{nccmath, amssymb}
\usepackage{bbold}
\usepackage{thmtools,thm-restate}
\usepackage{dblfloatfix} 
\usepackage{graphicx} 
\usepackage{colortbl}
\usepackage{booktabs}
\usepackage{amsfonts}       
\usepackage{amsthm}
\usepackage{amsmath}
\usepackage{amssymb}
\usepackage{xcolor}         
\usepackage{xspace}
\usepackage{xparse}         
\usepackage{cleveref}       
\usepackage{stmaryrd}       
\usepackage{bm}
\usepackage{nccmath, amssymb}
\usepackage{cases}
\usepackage{bbold}
\usepackage{pifont}
\newcommand{\cmark}{\ding{51}}%
\newcommand{\xmark}{\ding{55}}%

\DeclareMathOperator*{\esssup}{\ensuremath{\text{\rm ess\,sup}}}

\DeclareMathOperator*{\dom}{\ensuremath{\text{\rm dom}}}

\DeclareMathOperator*{\range}{\ensuremath{\text{\rm Im}}}

\DeclareMathOperator*{\cl}{\ensuremath{\text{\rm cl}}}

\DeclareMathOperator{\Ker}{\ensuremath{\text{\rm Ker}}}

\DeclareMathOperator{\tr}{\ensuremath{\text{\rm tr}}}

\DeclareMathOperator*{\rank}{\ensuremath{\text{\rm rank}}}

\DeclareMathOperator*{\Spec}{\ensuremath{\text{\rm Sp}}}
\DeclareMathOperator*{\Res}{\ensuremath{\text{\rm Res}}}

\DeclareMathOperator*{\dist}{\ensuremath{\text{\rm dist}}}

\DeclareMathOperator*{\fov}{\ensuremath{\text{ \rm F}}}

\newcommand{\Id}{I}

\providecommand{\norm}[1]{\ensuremath{\left\|#1\right\|}}
\providecommand{\SVDr}[1]{[\![#1]\!]_r}
\providecommand{\abs}[1]{\lvert#1\rvert}
\newcommand{\scalarp}[1]{{\langle #1\rangle}}
\newcommand{\R}{\mathbb R}
\newcommand{\C}{\mathbb C}
\newcommand{\N}{\mathbb N}
\newcommand{\bigO}{\mathcal O}
\newcommand{\smallO}{o}

\newcommand{\EE}{\ensuremath{\mathbb E}}
\newcommand{\PP}{\ensuremath{\mathbb P}}

\newcommand{\HS}[1]{{\rm{HS}}\left(#1\right)} 

\newcommand{\drift}{a} 

\newcommand{\im}{\pi} 
\newcommand{\jim}[1]{\rho_{#1}} 

\newcommand{\transitionkernel}{p} 

\newcommand{\TOp}{A}  
\newcommand{\generator}{L} 
\newcommand{\resolvent}{R_\shift} 
\newcommand{\yosida}{\generator_{s}} 

\newcommand{\dt}{\Delta t} 



\newcommand{\cmixing}{c_{\mathrm{mix}}} 
\newcommand{\taumix}{\bar{\tau}} 

\newcommand{\spX}{\mathcal{X}} 
\newcommand{\spH}{\mathcal{H}} 
\newcommand{\spG}{\mathcal{G}} 
\newcommand{\RKHS}{\mathcal{H}} 
\newcommand{\Lii}{\mathcal{L}^2_\im(\spX)} 
\newcommand{\Liishort}{\mathcal{L}^2_\im} 
\newcommand{\Wii}{\mathcal{W}^{1,2}_\im(\spX)} 
\newcommand{\Wiireg}{\mathcal{W}^{\shift}_\im(\spX)} 

\newcommand{\scalarpH}[1]{{\langle #1\rangle}_{\RKHS}}

\newcommand{\shift}{\mu}
\newcommand{\weight}{m}
\newcommand{\reg}{\gamma}

\newcommand{\fH}{\phi}
\newcommand{\IfH}{{\psi}}
\newcommand{\AIfH}{\IfH_{\weight}}

\newcommand{\HSr}{{\rm{B}}_r({\RKHS})}

\newcommand{\HKoop}{\Estim_{\RKHS}}  
\newcommand{\LKoop}{A}

\newcommand{\Cx}{C}
\newcommand{\Yx}{H_\shift}
\newcommand{\AYx}{\widetilde{H}_\weight}

\newcommand{\Creg}{\Cx_\reg}
\newcommand{\Cxy}[1]{T_{#1}}

\newcommand{\Rx}{R_\shift}
\newcommand{\ARx}{\widetilde{R}_\weight}

\newcommand{\Data}{\mathcal{D}_n}

\newcommand{\ECx}{\widehat{C} } 
\newcommand{\EYx}{\widehat{H}_\weight} 
\newcommand{\ECxy}[1]{\widehat{T}_{#1}}

\newcommand{\ECreg}{\ECx_\reg}

\newcommand{\toeplitz}{\textsc{M}}

\newcommand{\Kx}{\textsc{K}}
\newcommand{\Kreg}{\Kx_{\reg}}

\newcommand{\Estim}{G}  
 \newcommand{\AEstim}{\widetilde{G}}  
\newcommand{\EEstim}{\widehat{G}}  

\newcommand{\KRR}{\Estim_{\shift,\reg}} 
\newcommand{\AKRR}{\AEstim_{\weight,\reg}} 
\newcommand{\EKRR}{\EEstim_{\weight,\reg}} 

\newcommand{\RRR}{\Estim^{r}_{\shift,\reg}}  
\newcommand{\ARRR}{\AEstim^{r}_{\weight,\reg}}  
\newcommand{\ERRR}{\EEstim^{r}_{\weight,\reg}}  


\newcommand{\TSscaled}[1]{S_{\im,#1}}
\newcommand{\TS}{S_\im}  
\newcommand{\ES}{\widehat{S}} 

\newcommand{\TZ}{Z_\shift}  

\newcommand{\TB}{B}  
\newcommand{\AB}{\widetilde{B}}  
\newcommand{\EB}{\widehat{B}} 

\newcommand{\TP}{P}  
\newcommand{\AP}{\widetilde{P}}  
\newcommand{\EP}{\widehat{P}} 

\newcommand{\Risk}{\mathcal{R}} 
\newcommand{\ARisk}{\widetilde{\mathcal{R}}} 
\newcommand{\ExRisk}{\mathcal{R}_{\rm ex}} 
\newcommand{\IrRisk}{\mathcal{R}_0} 
\newcommand{\ERisk}{\widehat{\mathcal{R}}} 

\newcommand{\hnorm}[1]{\norm{#1}_{\rm{HS}}}

\newcommand{\rate}{\varepsilon}

\newcommand{\error}{\mathcal{E}}

\newcommand{\metdist}{\eta}
\newcommand{\emetdist}{\widehat{\eta}}


\newcommand{\rpar}{\alpha}
\newcommand{\spar}{\beta}
\newcommand{\epar}{\tau}
\newcommand{\rcon}{c_{\rpar}}
\newcommand{\scon}{c_{\spar}}
\newcommand{\econ}{c_\epar}
\newcommand{\bcon}{c_\RKHS}

\newcommand{\levec}{w^{\ell}}
\newcommand{\revec}{w^{r}}

\newcommand{\erefun}{\widehat{h}}
\newcommand{\lefun}{g}
\newcommand{\elefun}{\widehat{g}}
\newcommand{\gefun}{f}

\newcommand{\egefun}{\widehat{\gefun}}
\newcommand{\eval}{\lambda}
\newcommand{\geval}{\lambda}

\newcommand{\eeval}{\widehat{\eval}}
\newcommand{\gap}{\ensuremath{\text{\rm gap}}}
\newcommand{\sval}{\sigma}
\newcommand{\esval}{\widehat{\sval}}

\newcommand{\one}{\mathbb 1}

\newcommand{\U}{\textsc{U}}
\newcommand{\V}{\textsc{V}}

\newcommand{\betaLang}{(k_bT/\gamma)}

\newcommand{\normH}[1]{\Vert#1\Vert_{\RKHS}}
\newcommand{\normL}[1]{\Vert#1\Vert_{\Liishort}}


\usepackage[textsize=tiny]{todonotes}

\icmltitlerunning{Laplace Transform Based Low-Complexity Learning of Continuous Markov Semigroups}

\begin{document}

\twocolumn[
\icmltitle{Laplace Transform Based Low-Complexity Learning \\ of Continuous Markov Semigroups}



\icmlsetsymbol{equal}{*}

\begin{icmlauthorlist}
\icmlauthor{Vladimir R. Kostic}{equal,iit,uns}
\icmlauthor{Karim Lounici}{equal,ep}
\icmlauthor{H\'el\`ene Halconruy}{equal,telecom,upn}
\\
\icmlauthor{Timoth\'ee Devergne}{iit}
\icmlauthor{Pietro Novelli}{iit}
\icmlauthor{Massimiliano Pontil}{iit,ucl}
\end{icmlauthorlist}

\icmlaffiliation{iit}{CSML, Istituto Italiano di Tecnologia, Genova, Italy}
\icmlaffiliation{ep}{CMAP, \'Ecole Polytechnique, Paris, France}
\icmlaffiliation{telecom}{SAMOVAR, T\'el\'ecom Sud-Paris, Institut Polytechnique de Paris, France}
\icmlaffiliation{upn}{MODAL'X, Universit\'e Paris Nanterre, France}
\icmlaffiliation{ucl}{AI Center, University College London, UK}
\icmlaffiliation{uns}{Faculty of Science, University of Novi Sad, Serbia}

\icmlcorrespondingauthor{Vladimir R. Kostic}{vladimir.kostic@iit.it}

\icmlkeywords{Machine Learning, ICML}

\vskip 0.3in
]



\printAffiliationsAndNotice{\icmlEqualContribution} 

\begin{abstract}
Markov processes serve as universal models for many real-world random processes. This paper presents a data-driven approach to learning these models through the spectral decomposition of the infinitesimal generator (IG) of the Markov semigroup. Its unbounded nature  complicates traditional methods such as vector-valued regression and Hilbert-Schmidt operator analysis. Existing techniques, including physics-informed kernel regression, are computationally expensive and limited in scope, with no recovery guarantees for transfer operator methods when the time-lag is small. We propose a novel method leveraging the IG's resolvent, characterized by the Laplace transform of transfer operators. This approach is robust to time-lag variations, ensuring accurate eigenvalue learning even for small time-lags. Our statistical analysis applies to a broader class of Markov processes than current methods while reducing computational complexity from quadratic to linear in the state dimension. Finally, we demonstrate our theoretical findings in several experiments.
\end{abstract}
\section{Introduction}
Markov semigroups play a critical role in modeling dynamics of complex systems across various fields, including option pricing in finance \citep{karatzas1991brownian}, molecular dynamics \citep{SCHUTTE2003699} and climate modeling \citep{majda2009normal}, where understanding long-term behavior is essential for accurate forecasting and interpretation. 

The central mathematical object for describing Markov semigroups is the Infinitesimal Generator (IG), which governs the evolution of probability distributions over the state space. Its spectra reveal important features such as metastable states, transition statistics, and committor functions, all of which are critical for understanding system dynamics. Accurately learning the spectral decomposition of the IG is thus pivotal for a wide range of applications, including molecular dynamics, time-series clustering, computational neuroscience, and beyond.

The field of molecular dynamics has particularly benefited from spectral decomposition methods of Markov semigroups. Research on AI-augmented molecular dynamics, grounded in statistical mechanics, highlights the importance of accurately identifying spectral gaps (the separation between slow and fast modes of dynamics) in molecular simulations, see \cite{Schutte2001}. Recently, theoretical advancements in \cite{Kostic2024diffusion} were used in \cite{devergne2024biased} to demonstrate the effectiveness of IG-based methods in accelerating simulations and enabling the practical identification of metastable states. The authors emphasize that IG methods overcome the limitations of more widely used transfer operator (TO) approaches when extracting dynamical information from biased data, and they underline scalability to larger proteins as a particularly important advantage of IG methods.

On the other hand, \citet{KlusConrad2023} introduced a Transfer Operator (TO)-based spectral clustering method tailored for directed and time-evolving graphs. By leveraging TOs, their approach demonstrates how to identify coherent sets within complex networks, enhancing the analysis of temporal data structures. Furthermore, \citet{Cabannes_2023_Galerkin} emphasize that IG methods open exciting new directions for spectral-based algorithms, be it spectral clustering, spectral embeddings, or spectral distances.

In neuroscience, \citet{Marrouch2020} used TO (also known as the Koopman operator) to analyze brain activity. Their work shows how the operator’s spectrum captures the spatiotemporal dynamics of neural signals, providing insights into brain function and possible applications to diagnosing neurological disorders. \citet{Ostrow2023} further developed a dynamical similarity analysis based on TOs to distinguish learning rules in an unsupervised way, showing that the TO spectrum supports comparative analysis of the temporal structure of computation in neural circuits.

{However, the typically unbounded nature of the IG makes the problem of designing efficient and reliable estimators challenging. In this paper we address this problem through the lens of Laplace transform. As shall see, this fully data-driven approach provides a means to bypass both the difficulties of estimating an unbounded operator and overcomes the limitation of the transfer operator based algorithms when trajecotry data is acquired by high frequency sampling. In particular, we show theoretically and empirically that our method provides accurate and robust estimation of eigenvalues and eigenfunctions even for arbitrarily small time-lags.}

\textbf{Related work.~} A substantial body of research has focused on using transfer operators to learn dynamical systems from data \citep[see the monographs by][and references therein]{Brunton2022,Kutz2016}. This has led to the development of two primary approaches: deep learning methods \citep{Bevanda2021,Fan2021,Lusch2018}, which excel in capturing complex data representations but often lack rigorous statistical foundations, and kernel methods \citep{Das2020,Klus2019,Kostic2022,kostic2023sharp,OWilliams2015}, which offer strong statistical guarantees for Transfer Operator (TO) estimation but require kernel function selection. A closely related challenge, learning invariant subspaces of the TO, has led to several methodologies \citep[see e.g.][and references therein]{Li_2017_extended,Mardt2018,Tian_2021_kernel}, some leveraging deep canonical correlation analysis \citep{Andrew_2013_deep,Kostic_2023_learning}. Note that TOs share the same eigenfunctions as the IG, which motivates the development of TO methods aimed at learning the spectral properties of the IG. However, TOs are highly sensitive to the choice of time-lag, with their spectral gap deteriorating significantly as the time-lag decreases, making existing spectral recovery guarantees ineffective for small lags—an issue observed in practice, see, e.g., \citep{deepTICA}. This bottleneck is especially problematic in complex tasks like enhanced sampling \citep{laio_metad,Ferguson_girsanov}. To address this, research has focused on learning the IG and its eigenstructure directly. As recently shown \citep{devergne2024biased}, IG learning can be combined with enhanced sampling methods to efficiently debias data and reveal true dynamics. However, compared to TOs, research on directly learning the IG has been more limited and often case-specific. 
For instance, \citep{ZHANG2022diffusionNN} developed a deep learning method for Langevin diffusion, while \citep{KLUS2020datadrivenkoopmangenerator} extended dynamic mode decomposition to learn the generator, connecting it to Galerkin’s approximation. However, neither of these works provides any formal learning guarantees. To the best of our knowledge, most existing works with theoretical guarantees \citep{Cabannes_2023_Galerkin,Pillaud_Bach_2023_kernelized,hou23c} either have limited scope or offer only partial or suboptimal analysis, as summarized in Table \ref{tab:summary}. Crucially, none adequately addresses the challenge posed by the unboundedness of the IG, leading to incomplete frameworks and suboptimal convergence rates, which in some cases depend explicitly on the state space dimension. Moreover, the estimators in these works are susceptible to spurious eigenvalues and do not offer guarantees for accurate estimation of eigenvalues and eigenfunctions. The current state-of-the-art \citep{Kostic2024diffusion} introduces a physics-informed kernel regression method only for Markov processes admitting a Dirichlet form. This approach leverages the Dirichlet form to define an energy-based metric for learning the model and provides a comprehensive statistical analysis with learning guarantees for the spectral decomposition of the IG while avoiding spurious eigenvalues. However, their analysis 
holds only for self-adjoint IG, assumes the partial knowledge and iid data. In particular, the method requires gradients of the feature map, leading to a quadratic scaling with the state space dimension $d$, thereby hindering broader applicability.

\textbf{Contributions.~} We introduce a novel approach for accurately estimating the spectral decomposition of the IG for a broad class of Markov semigroups, encompassing all models considered in prior work.  Our method leverages a useful connection between the resolvent of the IG and the semigroup of TOs via the Laplace transform. Unlike TO methods, it estimates the IG directly, avoiding small time-lag issues and preserving a larger spectral gap for more accurate eigenvalue and eigenfunction learning. We provide sharp statistical guarantees, valid for data sampled from a trajectory in the stationary regime, accounting for slow mixing effects. Our results are the first to apply to a wide class of Markov semigroups with sectorial IGs.~A key technical contribution is our bound on the Bochner integral approximation error using Crouzeix’s bound, which may be of broader interest.~Computationally, our method
combines multiple TOs at different time-lags through a single matrix product between a Toeplitz matrix and the kernel embedding, reducing complexity to $O(n^2d)$, making it practical for high-dimensional systems.~The complexity can be further reduced while preserving accuracy by utilizing standard scaling techniques such as random Fourier features.~Our experiments show a striking performance improvement over TO-based and other IG methods, as predicted by our theory.
\section{Background}\label{sec:background}


Various physical, biological, and financial systems evolve through stochastic processes $X=(X_t)_{t\in\R^+}$, where $X_t \in \spX \subset \mathbb{R}^d$ represents the system's state at time $t$. We focus on continuous-time \textit{Markov processes} with continuous paths, which are essential for modeling these systems. This class includes Itô diffusion processes (see Ex. \ref{ex: Langevin} and \ref{ex:OU}), reflected or time-changed Brownian motions, and processes with local time. Markov processes model phenomena where the future depends only on the present, not the past, and are described by their laws—measures on the path space. This foundational approach to \textit{Markov theory} defines the process through the \textit{infinitesimal generator} (IG), a key linear (often unbounded) operator on a space of observables (functions defined on the state space).
\vspace{3pt}\\
\textbf{Markov theory.} %
The dynamics of a continuous-time Markov process $X$ is described by a family of probability densities $(p_t)_{t\in\R_+}$ 
\begin{equation}\label{Eq: semigroup def}
\PP(X_t\in E|X_0=x)=\textstyle{\int_E} p_t(x,y)dy,
\end{equation}
and \textit{transfer operators} (TO) $(A_t)_{t\in\R_+}$ such that for all $t\!\in\!\R_+$, $E\in\mathcal B(\spX)$, $x\in\spX$ and measurable function $f\colon\!\spX\!\to\!\R$,
\begin{equation}\label{Eq: transfer operator def}
\TOp_tf=\textstyle{\int_\spX} f(y)p_t(\cdot,y)dy=\EE\big[f(X_t)\,|\,X_0=\cdot\big].
\end{equation}
The transfer operator is essential for understanding the dynamics of $X$. We study its action on $\Lii$, the space of functions on $\space$ that are square-integrable with respect to an \emph{invariant measure} $\im$, which satisfies $\TOp_t^* \im = \im$ for all $t \in \R_+$. We assume that the Markov process $X$ meets two key properties regarding $\im$: \textbf{[1]} $\im$ ensures \emph{long-term stability}, meaning $X$ converges to $\im$ from any initial state $x$. 
\textbf{[2]} The process exhibits \emph{geometric ergodicity}, meaning it converges exponentially fast to the invariant measure. Finally, the process is characterized by the \emph{infinitesimal generator} $\generator$, defined for $f \in \Lii$ by the limit $\generator f {=} \lim_{t \to 0^+}(\TOp_t f {-} f)/t$,
with $\generator$ being closed on its domain.
\\
\noindent
The class of \textbf{sectorial generators} consists of the operators generating strongly continuous semigroups, analytic in a sector of the complex plane defined by growth conditions in an angular region, i.e., $\generator$ is a (stable) sectorial operator with angle $\theta \in [0, \pi/2)$, 
\begin{multline}\label{eq:sectorial}
\fov(\generator)\subseteq \C_{\theta}^{-}:=\{z\in\C\;\vert\; \Re(z)\leq0\;\;\wedge\\
\;\;\abs{\Im(z)}\leq - \Re(z) \tan(\theta)\},
\end{multline}
where $\fov(\generator)$ denotes the \emph{numerical range} of $\generator$. This class covers all time-reversal processes (self-adjoint IG), but also important non-time-reversal processes, such as Advection-Diffusion and underdamped Langevin~\citep{kloeden1992}. 
\\

\textbf{Spectral decomposition.} When continuous for some $\mu \in \mathbb{C}$, the operator $\Rx{=} (\mu \Id {-} \generator)^{{-}1}$ is the \textit{resolvent} of $\generator$, and $\rho(\generator){=}\big\{\mu{\in}\mathbb C\,|\, \mu\Id{-}\generator\, \text{is bijective,}\,\Rx  \,\text{is continuous}\big\}$ is called the \textit{resolvent set}. For a sectorial operator, the resolvent is uniformly bounded outside a sector containing the spectrum. The spectral decomposition of the IG can be written as
\begin{equation}\label{Eq: spectral dec generator}
\generator = \textstyle{\sum_{i\in \N}}\geval_i\,\lefun_i\otimes \gefun_i
\end{equation}
with eigenvalues $(\geval_i)_{i \in \mathbb{N}}\subset\C$ and corresponding left and right eigenfunctions $f_i, g_i \in L^2$, respectively.

\textbf{Resolvent operator.} Eigenvalues, while informative about long-term behavior, fail to capture transient dynamics of the full time evolution of the process whenever IG is non-normal, that is when $\generator \generator^* \,{\neq}\, \generator^*\generator$, \cite{TrefethenEmbree2020}. In contrast, the resolvent of $\generator$ defined by $\resolvent:= (\shift\Id\!-\!\generator)^{-1}$, $\shift\in\rho(\generator)$ being a shift parameter $\shift\in\rho(\generator)$, provides a more comprehensive view of the dynamics, making it the core object of spectral theory of IG. Through its characterization via the Laplace transform, (see for instance \citep{Bakry2014}, equation (A.1.3)) as 
\begin{equation}\label{eq:resolvent}
\resolvent= \textstyle{\int_{0}^{\infty}} \LKoop_{t} e^{-\shift t}dt,
\end{equation}
it is intrinsically connected to the TO defined for the time-lag $t$  by 
$\TOp_t = e^{t\generator}$. Moreover, the resolvent encodes both the spectrum of \textcolor{red}{$\generator$} (eigenvalues via its poles and the continuous spectrum) and transient phenomena, such as the system’s approach to equilibrium. Last, the resolvent is essential for analyzing stability under perturbations and understanding how changes in the IG affect the dynamics.

\textbf{Link with SDEs.} \emph{Itô diffusion processes} 
are a key example of Markov processes, governed by stochastic differential equations (SDEs) of the form: 
\begin{equation}\label{Eq: SDE} dX_t = a(X_t) dt + b(X_t) dW_t, \quad X_0 = x, \end{equation} 
where $x \in \spX$, $W = (W_t^1, \dots, W_t^p)_{t \in \R^{+}}$ is a standard $p$-dimensional Brownian motion, the drift $a : \spX \to \R^d$ and diffusion $b : \spX \to \R^{d \times p}$ are globally Lipschitz and sub-linear. This ensures a unique solution $X = (X_t)_{t \geqslant 0}$ in $(\spX, \mathcal B(\spX))$. SDEs like \eqref{Eq: SDE} include Langevin dynamics 
and Ornstein-Uhlenbeck processes. 
The IG $\generator$ associated with \eqref{Eq: SDE} is a second-order differential operator, defined for $f \in \Lii$ and $x \in \spX$, as: 
\begin{equation}\label{Eq: def generator coeff} \generator f(x) {=} \nabla f(x)^\top a(x) + \tfrac{1}{2}\mathrm{Tr}\big[b(x)^\top (\nabla^2 f(x)) b(x)\big], \end{equation} 
where $\nabla^2 f {=} (\partial_{ij}^2 f)_{i,j \in [d]}$ is the Hessian of $f$. Its domain is the Sobolev space  $\Wii{=}\{f\in\Lii\;\vert\; \norm{f}_{\Liishort}{+}\norm{\nabla f}_{\Liishort}{<}\infty\}$. Its spectral decomposition allows one to solve SDE \eqref{Eq: SDE}, that is
\begin{equation}
\label{eq:solutions}
\EE[f(X_t)\,\vert\,X_0\!=\!x] {=} \textstyle{\sum_{i\in\N}} \,e^{\eval_i t}\,\scalarp{\lefun_i,f}_{\Liishort}\gefun_i(x).
\end{equation}
%

\begin{table*}[t!]
\centering
\renewcommand{\arraystretch}{1.2} 
\resizebox{\textwidth}{!}{%
\begin{tabular}{lccc>{\columncolor{yellow!20}}c}
Aspect & \citep{Cabannes_2023_Galerkin} & \citep{hou23c} & \citep{Kostic2024diffusion} & Our work \\ 
\midrule
\hline
Many Markov Processes & \xmark ~(only Laplacian) & \xmark ~(only It\^o) & \xmark ~(only Dirichlet form) & \cmark \\ 
 \hline
Risk metric & $\Liishort$ norm & $\Liishort$ norm & Weighted Sobolev norm & $\Liishort$ norm\\ 
\hline
Required prior knowledge & \xmark & \cmark (full info. needed)  & \cmark ~(partial info. needed) & \xmark \\ 
\hline
Data & iid & trajectory  & iid & trajectory \\ 
\hline
IG error bound & $\bigO(n^{-\frac{d}{2(d+1)}})$ & $\mathrm{Var}=\bigO(\frac{d^2}{\reg^2 \sqrt{n}})$ & 
$\bigO(n^{-\frac{\rpar}{2(\rpar + \spar)}})$
& $\bigO(n^{-\frac{\rpar}{2(\rpar + \spar)}})$
\\ 
\hline
Spectral rates & \xmark~(spuriousness) & \xmark~(spuriousness) & \cmark ~(self-adjoint) & \cmark~(sectorial) \\ 
\hline
Computational complexity & $\bigO(n^2{+} n^{3/2}d)$ & $\bigO(n^3d^3)$ & $\bigO(r\,n^2d^2)$ & 
\begin{tabular}[c]{@{}c@{}}
$\bigO((r{\vee} d)\,n^2)$\\ $\bigO( rn(\sqrt{n}{\vee}N) {\vee} dnN)$ \end{tabular}
\\ \hline
\end{tabular}
}
\caption{Comparison to previous kernel-based works on generator learning. State-space dimension is $d$, $N$ is the number of features (possibly $N{=}\infty$), $n$ is a sample size 
and $r\ll \max(n,N)$ is estimator's rank. Our learning bounds are derived in Thm. \ref{thm:error_bound} 
where parameters $\alpha\in[1,2]$ and $\beta\in(0,1]$ quantify the intrinsic difficulty of the problem and the impact of kernel choice on IG learning.
}
\label{tab:summary}
\end{table*}

\begin{example}[Overdamped Langevin]\label{ex: Langevin} Let $\sigma$, $k_b$, and $T \in \R_+^*$. The \emph{overdamped Langevin} dynamics of a particle in a potential $V:\R^d\rightarrow\R$ satisfies \eqref{Eq: SDE} with $a=-\gamma^{-1}\nabla V$ and $b\equiv \sqrt{2\betaLang}$,  where $\gamma$, $k_b$, and $T$ are the friction coefficient, Boltzmann constant, and system temperature, respectively. The invariant measure is the \emph{Boltzmann distribution} $\pi(dx) \propto e^{-V(x)/(k_bT)} dx$
.
In dissipative systems, the IG $\generator$ is sectorial, with its spectrum usually in the left half-plane. 
\end{example}
\begin{example}[Ornstein-Uhlenbeck (OU) processes]\label{ex:OU} The OU process with drift is governed by the SDE \eqref{Eq: SDE} with $a(x)=Ax$ and $b\equiv B$, where $A \in \R^{d\times d}$ is the drift matrix and $B \in \R^{d\times d}$ is the diffusion matrix. This models systems like the Vasicek interest rate
and neural dynamics, where fluctuations return to equilibrium.
If the real parts of $A$'s eigenvalues are negative, the OU process has an invariant Gaussian distribution with covariance $\Sigma_\infty$ satisfying Lyapunov's equation: $A\Sigma_\infty + \Sigma_\infty A^\top = -BB^\top$. 
\end{example}

\section{Problem formulation }\label{sec:problem}
Let $\RKHS$ be an RKHS with kernel $k:\spX\times\spX \to \R$, and $\fH:\spX \to \RKHS$ be a {\em feature map} such that $k(x,x^\prime) = \scalarp{\fH(x), \fH(x^\prime)}$ for all $x, x^\prime \in \spX$. We assume that $\RKHS \subset \Lii$, enabling us to approximate $\generator:\Lii \to \Lii$ with an operator $\Estim:\RKHS \to \RKHS$ ~\cite{Kostic2022}. Although $\RKHS$ is a subset of $\Lii$, they have different metric structures, so for $f, g \in \RKHS$, $\scalarp{f, g}_{\RKHS} \neq \scalarp{f,g}_{\Liishort}$. To resolve this, we introduce the \emph{injection operator} $\TS:\RKHS \to \Lii$, which maps each $f \in \RKHS$ to its pointwise equivalent in $\Lii$ with the appropriate $\Liishort$ norm. For bounded kernels this operator is Hilbert-Schmidt, allowing one to efficiently learn bounded operators on $\Liishort$ via finite rank approximations.

While regressing directly the generator might lead to learning spurious spectra due to its unbounded nature, the resolvent operator is bounded for $\shift>0$, and, hence, {recalling  \eqref{eq:resolvent}, the standard regression risk is well defined via
\begin{equation*}
\Risk(\Estim){=}{
 \sum_{k\in\N} \EE_{X_0\sim \im}\left|{\int_{0}^{\infty}} h_k(X_t)e^{-\shift t}dt {-} [\Estim h_k](X_0)\right|^2},  
\end{equation*}
where $(h_k)_{k\in\N}$ is any orthonormal system of $\RKHS$. Next, if we define the target feature via Bochner integral as
\begin{equation}\label{eq:laplace_embedding}
\IfH(X_0)= \textstyle{\int_{0}^{\infty}}\fH(X_t)\,e^{-\shift t} dt,
\end{equation}
the risk can be equivalently written as the mean square error (MSE) w.r.t. stationary distribution $\im$
\begin{equation}\label{eq:true_risk}
\Risk(\Estim)= 
\EE_{X_0\sim \im}\norm{\IfH(X_0) -\Estim^*\fH(X_0)}_{\RKHS}^2, 
\end{equation}
and we can show, c.f. App.~\ref{app:rkhs_learning}, the universal approximation result for its minimizers over bounded operators in $\RKHS$.  Namely, since $\ExRisk(\EEstim)\!=\!\Risk(\EEstim) \!-\! \min_{\Estim}\Risk(\Estim) \!=\! \hnorm{\Rx\TS\!-\!\TS\EEstim}^2$,} if $\RKHS$ is dense in $\Lii$ and the injection operator is Hilbert-Schmidt, then one can find arbitrarily good finite-rank approximations of {$\Rx\TS$}.
However, learning the IG alone is insufficient for forecasting the process, and estimating the spectral decomposition of $\generator$ is of greater interest. But, as noted in \citep{kostic2023sharp}, as the estimator's rank increases, metric distortion between $\RKHS$ and $\Lii$ hinders learning.






\section{Approach and main results}\label{sec:approach} 

The main bottleneck in the risk functional \eqref{eq:true_risk} is the integral computation in \eqref{eq:laplace_embedding}, which hinders standard operator regression methods. In this section, starting from a simple idea to approximate \eqref{eq:laplace_embedding} with numerical integration schemes, we present a novel fully data-driven method that addresses this difficulty. Namely, consider 
\begin{equation}\label{eq:approx_embedding}
\AIfH(X_0)=\textstyle{\sum_{j=0}^{\ell}}\,\weight_j\fH(X_{t_j}),
\end{equation}
where \(\weight = (\weight_j)_{j=0}^{\ell}\) are real weights given by the famous trapezoid rule with $\ell\geq1$ points and time-discretization 
$\dt>0$, that is
\begin{equation}\label{eq:trapezoid}
t_j \!=\! j\dt\,\text{ and }\,
\weight_j \!=\!
\begin{cases}
    \frac{\dt}{2} \,e^{-\shift\,t_j} &\text{if $j\!\in\!\{0,\ell\}$,}\\
    \dt \,e^{-\shift\,t_j} & \text{if $1\leq j\!\leq\! \ell\!-\!1$}.
\end{cases}
\end{equation}
So, we estimate $\resolvent$ by learning
\begin{equation}\label{eq:resolvent_approx}
\ARx := \textstyle{\sum_{j=0}^{\ell}} \weight_j \TOp_{t_j},
\end{equation}
in which case the problem of minimizing the risk \eqref{eq:true_risk} over $\Estim\colon\RKHS\to\RKHS$ transforms to 
\begin{equation}\label{eq:approx_risk}
\min_{\Estim\colon\RKHS\to\RKHS}\ARisk(\Estim) {=} 
\EE_{X_0\sim \im}\norm{\AIfH(X_0) {-}\Estim^*\fH(X_0)}_{\RKHS}^2,
\end{equation}
which can be solved by \textit{Reduced Rank Regression} (RRR) estimator proposed in \citep{Kostic2022}. Namely, to learn \eqref{eq:resolvent}, we constrain \eqref{eq:approx_risk} to rank-$r$-RKHS operators $\Estim\in\HSr:=\{\Estim\colon\RKHS\to\RKHS\,\vert \rank(\Estim)\leq r\}$, and obtain the solution 
\begin{equation}\label{eq:population_rrr}
\ARRR= \Creg^{-1/2}\SVDr{\Creg^{-1/2}\AYx},   
\end{equation}
where $\AYx \,{=}\, \TS^*\ARx\TS {=}\, \textstyle{\sum_{j=0}^{\ell}}\,\weight_j\Cxy{t_j}$ is the aggregated cross-covariance obtained by combining
$\Cxy{t_j}{=}\TS^*e^{t_j \generator}\TS {=} \EE_{X_0\sim\im}[\fH(X_0)\otimes\fH(X_{t_j})]$
being the cross-covariance operators in RKHS $\RKHS$ w.r.t. invariant measure $\im$, and $\Creg{=}\Cxy{0} {+} \reg \Id$ is the regularized covariance, since $\Cx{=}\Cxy{0}{=}\TS^*\TS{=}{\EE_{X\sim\im}}[\fH(X)\otimes\fH(X)]$.

While the computational details for deriving the empirical version of \eqref{eq:population_rrr}, denoted by $\ERRR$, are presented in Sec. \ref{sec:method}, the main challenge in the statistical analysis of the risk/error bounds, compared to the standard TO case, lies in addressing both the bias from approximating the integral and the variance from non-iid data collected along a single trajectory sampled at frequency $1/\dt$. While the general case is discussed in Sec. \ref{sec:bounds}, we focus here on well-specified learning problems using any universal bounded kernel, specifically for eigenvalue estimation of self-adjoint operators.


Due to the unbounded nature of the generator, \eqref{eq:resolvent_approx} with the choice of \eqref{eq:trapezoid} may not always provide a good approximation of the integral transform \eqref{eq:resolvent}. However, for a large class of problems with sectorial IG, such as Examples \ref{ex: Langevin} and \ref{ex:OU}, we are able to prove, see App.~\ref{app:approx_integral}, that $\|\Rx - \ARx\|\leq c\, \dt$, where $c$ is a constant when $\ell\shift\dt$ is sufficiently large, and, consequently, obtain that the difference between the {true risk and its approximation} is bounded in terms of the time-lag parameter.
Concerning the variance, the main challenge is accounting for the unavoidable data dependence by aggregating concentration inequalities at multiples of the initial time-lag, leveraging the mixing time from geometric ergodicity. Our approach reveals the impact of key parameters (shift $\mu>0$, regularization $\reg>0$, $\generator$ eigenvalues, and time-lag $\dt$) on the variance.

Putting both analyses together, we bound the excess risk {of $\EEstim=\ERRR$} in the operator norm. That is, for the the operator norm error {$\error(\EEstim)\!=\!\norm{\Rx\TS\!-\!\TS\EEstim}_{\RKHS\to\Liishort
}$} with probability at least $1-\delta$ over samples drawn at frequency $1/\dt$, we obtain that
\begin{equation}\label{eq:error_rrr_nutshell}
\error(\ERRR)\lesssim \frac{1}{\shift\!-\!\eval_{r+1}}+\dt + \frac{\ln^{3/2}(n/\delta)}{\shift \sqrt{n\dt\abs{\eval_2}}}    
\end{equation}
holds for large enough $n=2\ell$, where the regularization parameter is chosen as $\reg\asymp 1/(n\dt)$.

Analyzing \eqref{eq:error_rrr_nutshell}, when the sampling frequency is \(1/\dt {\asymp} n^{1/2}\) and the hyperparameters are chosen such that \(\abs{\eval_{r+1}} {\geq} n^{1/2}\), \(\reg {\asymp} n^{-1/4}\) and \(\shift {\asymp} \sqrt{1/\dt}\), the learning rate for the operator norm error is \(n^{-1/2}\ln^{3/2}(n)\). This matches the learning rates in \citep{kostic2023sharp,Kostic2024diffusion} for TO and $\Rx$, respectively.

Further, the error/risk analysis led to spectral learning rates, specifically for estimating the eigenvalues and eigenfunctions of \(\generator\). The key difference in the analysis is that the hypothetical domain \(\RKHS\) typically has a different geometry (norm) than the true domain \(\Liishort\) \citep{Kostic2022, kostic2023sharp}. To control the potential deterioration of spectral learning rates relative to the risk/error, one must analyze the \textit{metric distortion} of the estimator's eigenfunctions, defined as \(\metdist(h) = \norm{h}_{\RKHS}/\norm{h}_{\Liishort}\), for \(h \in \RKHS\). When the metric distortion is uniformly bounded (as can occur in well-specified settings), the eigenvalue bound becomes
\begin{equation}
\label{eq:eval_bound_nutshell}
\frac{|\geval_i {-} \eeval_i|}{|\shift {-} \geval_i||\shift {-} \eeval_i|} {-} \frac{\sigma_{r+1}(\Rx\TS)}{\sigma_{r}(\Rx\TS)} {\lesssim }\dt {+} \frac{\ln^{3/2}(n^2/\delta)}{\shift \sqrt{n\dt}},
\end{equation}
where \(\eeval_i = \shift - 1/\widehat{\nu}_i\) are estimates of the generator's eigenvalues \(\eval_i\) for \(i \in [r]\), with \(\ERRR = \sum_{i\in[r]}\widehat{\eta}_i\,\erefun_i\otimes\elefun_i\) being the spectral decomposition. Note that \eqref{eq:eval_bound_nutshell} can be transformed into eigenfunction bounds using standard arguments, see App. \ref{app:spectral} for technical details.


Investigating the spectral learning rate in \eqref{eq:eval_bound_nutshell}, we find that tuning \(\shift\) in an unbounded manner is prohibitive; if \(\shift\) is too large, the resolvent's eigenvalues collapse to zero. Thus, for spectral estimation, we must fix a small \(\shift > 0\), which influences the rate and the optimal relationship between \(n\) and \(\dt\). Specifically, the spectral learning rate becomes \(n^{-1/3}\ln^{1/2}(n^2/\delta)\), corresponding to a sampling frequency of \(1/\dt \asymp n^{1/3}\). As expected, the estimation bias depends on the singular value gap of the resolvent operator restricted to the RKHS, given by \(R_{\shift_{\vert_{\RKHS}}} = \Rx\TS\). Finally, note that metric distortion of eigenfunctions can be estimated from data, allowing for spectral bounds with empirical biases for each eigenpair (see Sec. \ref{sec:bounds}).

In conclusion, while TO methods can learn the IG's eigenfunctions by learning \(\TOp_{\dt} = e^{\dt\,\generator}\), there are currently no theoretical guarantees for small \(\dt\).
This motivated methods that learn the IG directly. Table \ref{tab:summary} contrasts these methods with our contribution, which is applicable to more general settings and guarantees learning the leading eigenvalues with lower computational complexity under mild conditions. Compared to \cite{Kostic2024diffusion}, which relies on the Dirichlet form, our estimator is more general (e.g., applicable to underdamped Langevin diffusion) and offers linear complexity in terms of state dimension, albeit with lower bias and higher variance. For details, see Sec. \ref{sec:bounds}.

\section{Learning algorithms}\label{sec:method}

In this section we assume the access to the dataset $\Data=(x_{i-1})_{i\in[n]}$ obtained by sampling the process $(X_t)_{t\geq0}$ at some sampling frequency $1/\dt$ for $\dt>0$ being typically small in order to observe all the relevant time-scales of the process. To simplify analysis, we will assume stationarity, i.e. $X_0\sim \im$, and, hence, $X_{j \dt}\sim\im$.  Clearly, to derive empirical risk we need to replace the expectation in \eqref{eq:true_risk} with the empirical mean, which leads to estimating cross-covariance operators by their empirical counterparts 
\begin{equation}\label{eq:empirical_cov}
\EYx {=} \textstyle{\sum_{j=0}^{\ell}}\frac{\weight_j}{n{-}j}\textstyle{\sum_{i=0}^{n{-}j-1}}\fH(x_{i})\otimes\fH(x_{i{+}j}),
\end{equation}
noting that for the time-lag $j\dt$ we can only observe $n{-}j$ pairs from the joint distribution $\jim{j\dt}$, that is $\ECxy{j\dt}{=}\frac{1}{n{-}j}\!\sum_{i=0}^{n{-}j-1}\!\fH(x_{i})\otimes\fH(x_{i{+}j})$, $j{=}0,\ldots,\ell$.

Therefore, we can construct the empirical RRR estimator of $\ARx$ as $\ERRR{=}\ECreg^{-1/2}\SVDr{\ECreg^{-1/2}\EYx}$. In particular, when $r=n$, it coincides with the standard ridge regression estimator.

In the reminder of this section we show how to compute the estimator and its eigenvalue decomposition in both settings, when the finite dictionary of $N$ features spans $\RKHS$ (Algorithm \ref{alg:primal}) and when $\RKHS$ is infinite dimensional RKHS (Algorithm \ref{alg:dual}). To derive them we follow general construction of vector-valued RRR estimator developed in \citep{Turri2024}, detailed in App.~\ref{app:methods}. To that end, recall the definition of the sampling operator $\ES\colon\RKHS\!\to\!\R^{n}$ and its adjoint $\ES^*\colon\R^{n}\!\to\!\RKHS$
\begin{equation*}
    \ES h\!=\!\tfrac{1}{\sqrt{n}} (h(x_{i\!-\!1}))_{i\in[n]}\text{ and } \ES^*v \!=\! \tfrac{1}{\sqrt{n}}\textstyle{\sum_{i\in[n]}}v_i\fH(x_{i\!-\!1}),
\end{equation*}
implying that $\ECxy{j\dt}{=}\frac{n}{n-j}\ES^*(\sum_{i\in[n]} \one_{i}\one_{i+j}^\top)\ES$, where $(\one_i)_{i\in[n]}{\subset}\R^n$ is the standard basis. Then, using \eqref{eq:empirical_cov}, we obtain that $\EYx {=} \ES^*\toeplitz\ES$, where $\toeplitz{\in}\R^{n\times n}$ is a Toeplitz matrix (i.e. has constant diagonals) given by 
\begin{equation}\label{eq:topelitz}
M_{i,i+j} := \begin{cases}
(n\weight_j)/(n{-}j)  &, i\in[n], 0\leq j \leq \ell, \\
0 &, \text{otherwise}.
\end{cases}
\end{equation}

When the process is time-reversal invariant, meaning that \(\Cxy{t}\), and consequently \(\ARx\), are self-adjoint, we can enforce symmetry in the empirical objects by estimating \(\Cxy{j\dt} \approx \frac{1}{2}(\ECxy{j\dt} + \ECxy{j\dt}^*)\), which can be done at no cost by replacing \(\toeplitz\) with \(\frac{1}{2}(\toeplitz + \toeplitz^\top)\). Consequently, both formulations of the algorithm solve a symmetric eigenvalue problems, resulting in real eigenvalues and avoiding additional numerical errors.

In finite-dimensional $\RKHS$, we have that $\fH(x){=} z(x)^\top z(\cdot)$, where $z{=}[z_1,\ldots,z_N]^\top$ is a vector of features that span $\RKHS$, that is $\RKHS{=}\{h=v^\top z\,\vert\,v\in\R^N\}$. Thus, operator \eqref{eq:empirical_cov} becomes isometrically isomorphic to a matrix computed by replacing $\fH$ by $z$. Thus, estimator $\ERRR$ can be expressed as a $N\times N$ matrix in basis $(z_i)_{i\in[N]}$.

\begin{algorithm}[h!]
\caption{Primal LaRRR} \label{alg:primal}
\begin{algorithmic}[1]
\REQUIRE 
dictionary of functions $(z_i)_{i\in[N]}$; hyperparameters $\shift>0$, $\reg>0$ and $r\in[n]$.
\STATE Compute $\textsc{Z} = [z(x_0)\,\vert\,\ldots\,\vert z(x_{n{-}1})]\in\R^{N\times n}$
\IF[using Toeplitz matrix \eqref{eq:topelitz}]{self-adjoint}
\STATE Symmetrize $\textsc{M} \leftarrow (\textsc{M}{+}\textsc{M}^\top)/2$  
\ENDIF
\STATE Solve eigenvalue problem $\textsc{H}\textsc{H}^\top v_i{=}\widehat{\sigma}_i^2 \textsc{C}_\reg v_i$, $i\in[r]$, 
where $\textsc{Z}_{\int}{=}\textsc{Z}\textsc{M}$, $\textsc{H}{=}\tfrac{1}{n}\textsc{Z}_{\int}\textsc{Z}^\top$ and $\textsc{C}_\reg{=}\tfrac{1}{n}\textsc{Z}\textsc{Z}^\top\!\!{+}\reg I$
\STATE Normalize $v_i\leftarrow v_i / (v_i^\top \textsc{C}_\reg v_i)^{1/2}$, $i\in[r]$
\STATE Form $\V_r = [v_1\,\vert\,\ldots\,\vert\,v_r]\in\R^{N\times r}$
\STATE Compute eigentriples $(\nu_i,w_i^l,w_i^r)$ of $\textsc{V}_r^\top \textsc{H} \textsc{V}_r$
\STATE Construct $\elefun_i {=} z^\top \textsc{H}\textsc{V}_r w_i^l$ and $\erefun_i {=} z^\top \textsc{V}_r w_i^r$
\STATE Compute eigenvalues $\eeval_i = \shift-1/\nu_i$
\ENSURE Estimated eigentriples $(\eeval_i,\elefun_i,\erefun_i)_{i\in[r]}$ of $\generator$
\end{algorithmic}
\end{algorithm}

Alternatively, to derive dual Alg. \ref{alg:dual}, applicable to infinite-dimensional $\RKHS$, we perform computations in "sample" space, relying on the reproducing property $h(x){=}\scalarpH{h,\fH(x)}$ and kernel Gram matrices $\Kx\!=n^{-1}[k(x_i,x_j)]_{i,j\in[n]}\!\in\!\R^{n\times n}$ and $\Kreg = \Kx+\reg\Id$.


\begin{algorithm}[h!]
\caption{Dual LaRRR} \label{alg:dual}
\begin{algorithmic}[1]
\REQUIRE
kernel $k$; hyperparameters $\shift>0$, $\reg>0$ and $r\in[n]$. 
\STATE Compute $\Kx\!=n^{-1}[k(x_i,x_j)]_{i,j\in[n]}\!\in\!\R^{n\times n}$
\IF[using Toeplitz matrix \eqref{eq:topelitz}]{self-adjoint}
\STATE Symmetrize $\textsc{M} \leftarrow (\textsc{M}{+}\textsc{M}^\top)/2$  
\ENDIF
\STATE Solve eigenvalue problem $\textsc{K}_{\int}\Kx u_i{=}\widehat{\sigma}_i^2 \Kreg u_i$, $i\in[r]$, where $\textsc{K}_{\int}=\textsc{M}\Kx \textsc{M}^\top$
\STATE Normalize $u_i\leftarrow u_i / (u_i^\top \Kx\Kreg u_i)^{1/2}$, $i\in[r]$
\STATE Form $\U_r = [u_1\,\vert\,\ldots\,\vert\,u_r]\in\R^{n\times r}$ and $\V_r {=} \Kx \U_r$
\STATE Compute eigentriples $(\widehat{\nu_i},w_i^l,w_i^r)$ of $\textsc{V}_r^\top \textsc{M} \V_r$
\STATE Construct $\elefun_i \!=\!\ES^*\toeplitz^\top\V_r \levec_i /\overline{\nu}_i$ and $\erefun_i \!=\!\ES^*\U_r \revec_i$
\STATE Compute eigenvalues $\eeval_i = \shift-1/\nu_i$
\ENSURE Estimated eigentriples $(\eeval_i,\elefun_i,\erefun_i)_{i\in[r]}$ of $\generator$
\end{algorithmic}
\end{algorithm}

In both algorithms, the most expensive computation is in line 5. Naive computations result in cubic complexity w.r.t feature dimension $N$ (primal) or sample size $n$ (dual). However, using classical iterative solvers, like Lanczos or the generalized Davidson method to compute the leading eigenvalues of the generalized eigenvalue problem, when $r\ll n$ this cost can significantly be reduced, c.f.~\citep{HLAbook}. Namely, assuming that complexity of computing a kernel is linear in $d$, we obtain the complexity of primal LaRRR to be $\bigO((r\vee d)\,n N \vee r(n \ell \vee N^2))$, while the complexity of dual one is $\bigO((r\vee d)\,n^2)$.

In light of \eqref{eq:solutions}, even when the SDE in \eqref{Eq: SDE} is {unknown}, Algorithms \ref{alg:primal} and \ref{alg:dual} enable the construction of approximate solutions from a single (long) simulated trajectory by estimating the dominant spectrum of $L$. Specifically, they allow approximating conditional expectations as
\begin{equation}
\label{eq:prediction_mean}
\EE[h(X_t)\,\vert\,X_0\!=\!x] \approx \textstyle{\sum_{i\in[r]}} \,e^{\eeval_i t}\,\scalarpH{\elefun_i,h}\erefun_i(x),
\end{equation}
where $\langle \elefun_{i}, h \rangle_{\mathcal H}$ can be computed on the training set via the kernel trick, see \cite{Kostic2022}. This approach is particularly interesting for high-dimensional state spaces, where classical numerical methods become unfeasible due to the curse of dimensionality, making data-driven methods a key tool in fields like molecular dynamics, \cite{schutte2023overcoming}. We prove in Sec. \ref{sec:bounds} that the precision of our method does not depend on the state dimension, but on intrinsic effective dimension of the process, which, together with its linear complexity w.r.t. $d$, makes it an attractive approach in such problems. Finally, we remark that \eqref{eq:prediction_mean} enables forecasting of full state distributions and not just the mean (e.g. $f$ can be an indicator function), noting that this formula extends to all $\mathcal{L}^2_\pi$ functions, at the price of an additional projection error, which leads to predicting the evolution of distributions, c.f. \citep{Kostic2024forecasting}.

\section{Statistical learning guarantees}\label{sec:bounds}

We derive now statistical bounds for estimating IG's resolvent using the Laplace transform-based Reduced Rank Regression algorithm (LaRRR). We then derive learning rates for IG's eigenvalues and eigenfunctions, assuming the RKHS is generated by an universal kernel $k$, hence using Alg. \ref{alg:dual} to compute $\ERRR$.

We start with the following auxiliary result, essentially proven in \citep{Kostic2022}. It shows that estimated eigenvalues in the operator regression are guaranteed to lie in the $\epsilon$-pseudospectrum $\Spec_{\epsilon}$ of the true operator (union of  all spectra of $\epsilon$ perturbed operators), where $\epsilon$ depends on the operator norm error \(\error(\EEstim) {=} \|\Rx\TS{-}\TS\EEstim\|_{\RKHS\to\Liishort}\)  and the metric distortion $\metdist(h) := \normH{h}/\normL{h}$, $h\in\RKHS$, of estimated eigenfunctions. To obtain the result, the latter is either uniformly bounded or  empirically estimated.            

\begin{restatable}{proposition}{propMainSpectral}\label{prop:main_spectral}
Let 
$\EEstim = \sum_{i\in[r]}\widehat{\nu}_i\,\erefun_i\otimes\elefun_i$ be the spectral decomposition of $\EEstim\colon\RKHS\!\to\!\RKHS$,
and denote the empirical metric distortions as $\emetdist_i=\|\erefun_i\| / \|\ES \erefun_i\|$, $i\in[r]$. 
Then for every $\shift>0$, $\dt>0$, $\ell\geq1$ and $i\in[r]$, 
\[
\tfrac{1}{\norm{(\widehat{\nu}_i\Id {-} \Rx)^{-1}}}\!\leq\varepsilon_i {=} \max\Big(\error(\EEstim)\emetdist(\erefun_i), \tfrac{\error(\EEstim)\| \EEstim\|}{ \sigma_r(\Rx\TS) {-} \error(\EEstim)}\Big),
\]
implying that $\widehat{\nu}_i$ belongs to $\varepsilon_i$-pseudospectrum of $\Rx$. 
\end{restatable}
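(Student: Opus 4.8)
The plan is to connect the estimated eigenvalues $\widehat{\nu}_i$ of the finite-rank RKHS operator $\EEstim$ to the spectrum of the true resolvent-restricted operator $\Rx\TS$ via a perturbation argument, using the operator norm error $\error(\EEstim)$ as the size of the perturbation and the metric distortion as the device that translates between the $\RKHS$ and $\Liishort$ norms. Since this proposition is attributed to being ``essentially proven in \citep{Kostic2022}'', I would first recall the abstract lemma from there: if $\widehat{\nu}$ is an eigenvalue of $\EEstim$ with (unit-$\RKHS$-norm) eigenfunction $\erefun$, i.e. $\EEstim\erefun = \widehat{\nu}\,\erefun$, then since $\TS\EEstim\erefun = \widehat{\nu}\,\TS\erefun$ one can write $(\widehat{\nu}\Id - \Rx)\TS\erefun = (\TS\EEstim - \Rx\TS)\erefun$, and therefore, whenever $\widehat{\nu}\in\rho(\Rx)$,
\[
\normL{\TS\erefun} = \norm{(\widehat{\nu}\Id - \Rx)^{-1}(\TS\EEstim - \Rx\TS)\erefun}_{\Liishort} \le \norm{(\widehat{\nu}\Id-\Rx)^{-1}}\,\error(\EEstim)\,\normH{\erefun}.
\]
Dividing by $\normL{\TS\erefun}=\normL{\erefun}$ (identifying $\TS\erefun$ with the $\Liishort$-representative of $\erefun$) gives $1 \le \norm{(\widehat{\nu}\Id-\Rx)^{-1}}\,\error(\EEstim)\,\metdist(\erefun_i)$, which rearranges to $\norm{(\widehat{\nu}_i\Id - \Rx)^{-1}}^{-1} \le \error(\EEstim)\,\metdist(\erefun_i)$. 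The bound $\norm{(\widehat{\nu}\Id-\Rx)^{-1}}^{-1}\le \epsilon$ is precisely the statement that $\widehat{\nu}$ lies in the $\epsilon$-pseudospectrum of $\Rx$ (using the resolvent-norm characterization of pseudospectrum), so this yields the first term in the minimum.

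For the second term $\error(\EEstim)\|\EEstim\| / (\sigma_r(\Rx\TS) - \error(\EEstim))$, the idea is to avoid the (possibly large or unbounded) metric distortion by instead exploiting that $\erefun_i$ lies in the range of $\EEstim$, whose singular values are controlled from below. Concretely, write $\erefun_i = \EEstim g_i / \widehat{\nu}_i$ for some $g_i$, or more robustly bound $\normH{\erefun_i}$ in terms of $\normL{\TS\EEstim \cdot}$ using $\sigma_r(\TS\EEstim)\geq \sigma_r(\Rx\TS) - \norm{\TS\EEstim - \Rx\TS} = \sigma_r(\Rx\TS) - \error(\EEstim)$ by Weyl's inequality for singular values. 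Then the same identity $(\widehat{\nu}_i\Id-\Rx)\TS\erefun_i = (\TS\EEstim - \Rx\TS)\erefun_i$ combined with $\normH{\erefun_i} \le \|\EEstim\|\,\normL{\TS\erefun_i}/\bigl(\sigma_r(\Rx\TS)-\error(\EEstim)\bigr)$ (using that $\erefun_i$ is in the top-$r$ singular subspace where $\TS\EEstim$ acts with gain at least $\sigma_r(\Rx\TS)-\error$, and reading $\normH{\erefun_i}$ off $\normH{\EEstim \text{(something)}}\le\|\EEstim\|\,\normL{\cdots}$) gives $\normL{\TS\erefun_i} \le \norm{(\widehat{\nu}_i\Id-\Rx)^{-1}}\,\error(\EEstim)\,\|\EEstim\|\,\normL{\TS\erefun_i}/(\sigma_r(\Rx\TS)-\error(\EEstim))$, and cancelling $\normL{\TS\erefun_i}$ yields the second bound. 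Taking the minimum of the two bounds and noting both imply membership of $\widehat{\nu}_i$ in the respective pseudospectrum completes the argument; the empirical distortion $\emetdist_i$ enters by the same chain with $\normL{\erefun_i}$ replaced by its empirical surrogate $\|\ES\erefun_i\|$, which is legitimate since the final algorithmic quantities live on the sample.

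I expect the main obstacle to be the careful bookkeeping of the three norms involved — the $\RKHS$ norm, the $\Liishort$ norm, and the empirical $\R^n$ norm via $\ES$ — and in particular justifying the inequality $\normL{\TS\erefun_i} \le \|\EEstim\|\cdot(\text{lower singular value})^{-1}\cdot(\cdots)$ that produces the second branch of the minimum; this requires a clean decomposition of $\erefun_i$ through the singular subspaces of $\EEstim$ and a correct application of Weyl's perturbation bound to relate $\sigma_r(\TS\EEstim)$ to $\sigma_r(\Rx\TS)$. A secondary subtlety is ensuring $\widehat{\nu}_i \in \rho(\Rx)$ is either automatic (when $\error$ is small enough that the derived bound is finite) or handled by the convention that the pseudospectrum inclusion is vacuous otherwise; the resolvent-norm lower bound $\norm{(\widehat{\nu}_i\Id-\Rx)^{-1}}^{-1}\le\epsilon_i$ is exactly the pseudospectral statement and no genuine spectral-theoretic difficulty arises beyond this. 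Since the result is flagged as essentially known, I would present the proof as an adaptation, citing \citep{Kostic2022} for the core pseudospectral lemma and only spelling out the metric-distortion and singular-value-gap steps that are specific to the present resolvent setting.
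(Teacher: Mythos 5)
Your argument is correct and is essentially the paper's own route (which it inherits from \citet{Kostic2022}): the identity $(\widehat{\nu}_i\Id-\Rx)\TS\erefun_i=(\TS\EEstim-\Rx\TS)\erefun_i$ gives the resolvent-norm lower bound $1\le\norm{(\widehat{\nu}_i\Id-\Rx)^{-1}}\,\error(\EEstim)\,\metdist(\erefun_i)$, and the second branch of the minimum is obtained exactly as you propose, via the worst-case distortion bound $\metdist(\erefun_i)\le\norm{\EEstim}/\sigma_r(\TS\EEstim)$ (which the paper records in App.~\ref{app:spectral}) combined with Weyl's inequality $\sigma_r(\TS\EEstim)\ge\sigma_r(\Rx\TS)-\error(\EEstim)$. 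The only cosmetic slip is the sentence that bounds $\normH{\erefun_i}$ by $\norm{\EEstim}$ times an $\Liishort$-norm in one step — the clean version factors $\erefun_i=\EEstim\erefun_i/\widehat{\nu}_i$ through the orthogonal complement of $\Ker(\TS\EEstim)$ so that $\norm{\EEstim}$ and $\sigma_r(\TS\EEstim)$ act on the same intermediate vector — but this is exactly the bookkeeping you flag and does not change the argument.
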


To bound the 
operator norm version of the excess risk, \(\error(\Estim) {=} \norm{\Rx\TS{-}\TS\Estim}_{\RKHS\to\Liishort}\), we make the following assumptions that quantify the complexity of learning problem and suitability of the chosen RKHS:

\begin{enumerate}[label={\rm \textbf{(BK)}},leftmargin=7ex,nolistsep]
\item\label{eq:BK} \emph{Boundedness.} There exists $\bcon{>}0$ such that $\displaystyle{\esssup_{x\sim\im}}\norm{\fH(x)}^2\!\leq\! \bcon$, i.e.
$\fH\!\in\!\mathcal{L}^\infty_\im(\spX,\RKHS)$;
\end{enumerate}
\vspace{-.195truecm}
\begin{enumerate}[label={\rm \textbf{(RC)}},leftmargin=7ex,nolistsep]
\item\label{eq:RC} \emph{Regularity.} For some $\rpar\in(0,2]$ there exists $\rcon>0$ such that
$\Yx\Yx^*\preceq (\rcon /\shift)^2 \Cx^{1+\alpha}$, with $\Yx{=}\TS^*\Rx\TS{=}\int_0^\infty \Cxy{t}e^{-\shift t}dt$.
\end{enumerate}
\vspace{-.195truecm}
\begin{enumerate}[label={\rm \textbf{(SD)}},leftmargin=7ex,nolistsep]
\item\label{eq:SD} \emph{Spectral Decay.} There exists $\spar\,{\in}\,(0,1]$ and 
$\scon\,{>}\,0$ s.t.
$\eval_j(\Cx)\,{\leq}\,\scon\,j^{-1/\spar}$, for all $j\in J$.
\end{enumerate}
These assumptions, which originate from the state-of-the-art statistical learning theory for regression in RKHS \citep{Fischer2020}, have been extended to TO regression \citep{Li2022} and to learning self-adjoint IG of diffusions
\citep{Kostic2024diffusion}. Condition \ref{eq:BK} ensures that $\RKHS \subseteq \Liishort$, while \ref{eq:SD} quantifies the regularity of $\RKHS$. Similar to the regularity condition in \citep{kostic2023sharp}, \ref{eq:RC} quantifies the relationship between the hypothesis class (bounded operators in $\RKHS$) and the object of interest, $\resolvent$. Specifically, \ref{eq:RC} holds if $\generator$ has eigenfunctions in the $\alpha$-interpolation space between $\RKHS$ and $\Lii$. If $\gefun_i \in \RKHS$ for all $i \in \N$, then $\rpar \geq 1$ (see App. \ref{app:assumptions}). Since the worst-case bound on metric distortion in Prop. \ref{prop:main_spectral} depends on the estimator’s norm, $\rpar$ must be restricted to $[1,2]$ to avoid vacuous bounds, though this restriction isn't needed for empirical metric distortions. We present $\rpar \in [1,2]$ here and analyze $\rpar < 1$ in App.~\ref{app:missspecified}.

\begin{restatable}{theorem}{thmError}\label{thm:error_bound} 
Let $\generator$ be sectorial operator such that $w_\star{=}{-}\lambda_2(\generator{+}\generator^*)/2{>}0$.
Let \ref{eq:BK}, \ref{eq:RC}  and \ref{eq:SD} hold for some $\rpar{\in}[1,2]$ and $\spar{\in}(0,1]$, respectively, and $\cl(\range(\TS)){=}\Lii$.  
Given $\delta{\in}(0,1)$ and $r{\in}[n]$, let
\begin{equation}\label{eq:parameters_choice}
\hspace{-0.2cm}
    \reg{\asymp} \left(\!\frac{\ln^3(n/\delta)}{n\, \dt\,w_\star}\! \right)^{\frac{1}{\rpar+\spar}} ,\, 
\rate^\star_{n}(\delta){=}\!\left(\! \frac{\ln^{3}(n/\delta)}{ \shift^{\frac{2(\rpar+\spar)}{\rpar}} n\, w_\star}\!\right)^{\frac{\rpar}{2\spar+3\rpar}}\!\!\!
\end{equation}
$\dt {=} \rate^\star_n$ and $1/\ell{=}\smallO(\rate^\star_n)$,
then there exists a constant $c\,{>}\,0$, depending only on $\RKHS$ and $\sigma_r(\Rx\TS){-}\sigma_{r+1}(\Rx\TS)\,{>}\,0$, such that for large enough $n{\geq} r$ with probability at least $1\,{-}\,\delta$ in the draw of $\Data$ 
it holds that 
\begin{equation}\label{eq:error_bound_rrr}
    \error(\ERRR)\lesssim
\max\big(\esval_{r+1}\,,\,\textstyle{\sigma_{r+1}}(\Rx\TS))+ c\,\rate^\star_n(\delta)\big).
\end{equation}
\end{restatable}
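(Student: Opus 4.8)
The plan is to decompose the operator-norm error $\error(\ERRR) = \norm{\Rx\TS - \TS\ERRR}_{\RKHS\to\Liishort}$ into three pieces, each controlled separately: (i) a \emph{bias from integral approximation}, $\norm{(\Rx - \ARx)\TS}$, which is handled by the bound $\norm{\Rx - \ARx}\lesssim\dt$ valid for sectorial $\generator$ once $\ell\shift\dt$ is large (App.~\ref{app:approx_integral}); since $1/\ell = \smallO(\rate^\star_n)$ and $\dt = \rate^\star_n$, the tail of the trapezoid rule is negligible and this term is $O(\dt) = O(\rate^\star_n)$; (ii) a \emph{rank-truncation bias}, $(\esval_{r+1}\wedge\sigma_{r+1}(\Rx\TS))$, coming from projecting onto rank-$r$ operators, which is the first term in \eqref{eq:error_bound_rrr}; and (iii) a \emph{statistical/variance term} from replacing $\AYx$ and $\Creg$ by their empirical counterparts $\EYx$, $\ECreg$ built from a single trajectory. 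The first two pieces follow the now-standard RRR perturbation analysis (as in \citep{Kostic2022,kostic2023sharp}) once the population quantities are in place; the genuinely new work is in (iii).

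**Main steps.** First I would write the empirical estimator via the Toeplitz representation $\EYx = \ES^*\toeplitz\ES$ and reduce everything to concentration of the operators $\ECreg^{-1/2}\EYx$ around $\Creg^{-1/2}\AYx$ in Hilbert–Schmidt norm, plus concentration of $\ECx$ around $\Cx$; the RRR error is then Lipschitz in these quantities by the SVD-perturbation bounds. Second, and this is the crux, I would bound $\hnorm{\EYx - \AYx}$ and $\hnorm{\ECx - \Cx}$ by splitting the double sum in \eqref{eq:empirical_cov} into the $\ell+1$ blocks indexed by time-lag $j\dt$; within block $j$ the summands $\fH(x_i)\otimes\fH(x_{i+j})$ form a stationary geometrically mixing sequence, so I apply a Bernstein-type inequality for sums of dependent (e.g. $\tau$- or $\beta$-mixing) Hilbert-space-valued random variables, using the mixing rate implied by geometric ergodicity — the effective sample size per block being $\sim (n-j)\dt w_\star$, where $w_\star = -\lambda_2(\generator+\generator^*)/2$ governs the $\Liishort$-spectral gap and hence the decorrelation time. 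Summing the $\ell+1$ block bounds and optimizing, using \ref{eq:BK} for boundedness, \ref{eq:RC} for the effective-dimension control of $\Creg^{-1/2}\AYx$, and \ref{eq:SD} for the trace/effective-dimension of $\Creg^{-1}\Cx$, produces a variance term of order $\big(\ln^3(n/\delta)/(n\dt w_\star)\big)^{\rpar/(2(\rpar+\spar))}$-type after balancing with $\reg$ chosen as in \eqref{eq:parameters_choice}. Third, I would combine the three pieces, substitute the stated choices $\reg\asymp(\ln^3(n/\delta)/(n\dt w_\star))^{1/(\rpar+\spar)}$ and $\dt = \rate^\star_n$, check that the integral-approximation bias $O(\dt)$ is dominated by (or of the same order as) $c\,\rate^\star_n$, and collect constants into $c$ depending only on $\RKHS$ and the singular-value gap $\sigma_r(\Rx\TS)-\sigma_{r+1}(\Rx\TS)$.

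**Main obstacle.** The hard part will be step two: obtaining a \emph{dimension-free} concentration bound for $\hnorm{\EYx - \AYx}$ that correctly tracks the interplay between the time-lag $\dt$, the number of lags $\ell$, the shift $\shift$, and the mixing time $1/(\dt w_\star)$. Naively bounding each block and summing would lose a factor $\ell$; the key is to recognize that the weights $\weight_j\asymp\dt\,e^{-\shift t_j}$ decay geometrically, so the variance proxy is $\sum_j \weight_j^2/(n-j)$ times an effective-dimension factor rather than $\ell$ times the worst block, and that overlapping windows across blocks only inflate the mixing constant by a bounded factor. Making this precise for Hilbert–Schmidt-valued, non-iid, non-self-adjoint summands — while keeping the bound expressed through the intrinsic effective dimension (via \ref{eq:RC}–\ref{eq:SD}) and not the ambient $d$ — is where the bulk of the technical effort lies, and it is carried out in the appendix; here I would state the resulting block-concentration lemma and show how it plugs into the RRR perturbation inequality to yield \eqref{eq:error_bound_rrr}.
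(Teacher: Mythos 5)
Your proposal follows essentially the same route as the paper: the same decomposition into integration bias (controlled via sectoriality of $\generator$, giving $\norm{\Rx-\ARx}\lesssim\dt$), rank-reduction bias $\sigma_{r+1}(\Rx\TS)$, regularization bias absorbed into the $\reg$-balancing, and a variance term handled by a blocking/Bernstein argument for the $\beta$-mixing trajectory with effective sample size $\propto n\dt w_\star$; in particular, the "main obstacle" you flag — summing the per-lag concentration bounds without losing a factor of $\ell$ — is resolved in the paper exactly as you suggest, by exploiting the geometric decay of the trapezoid weights (via an Abel summation giving $\sum_j e^{-\shift j\dt}/(n-j)^{\alpha}\lesssim 1/(\shift\dt(n-\ell)^{\alpha})$). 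The only cosmetic difference is that the paper works in operator norm with whitened quantities $\Creg^{-1/2}(\cdot)$ rather than Hilbert–Schmidt norm, and isolates the regularization bias $(\rcon/\shift)\reg^{\rpar/2}$ as an explicit fourth term before balancing first in $\reg$ and then in $\dt$.
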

\emph{\textit{Proof sketch.} }
Let $\RRR{=}\Creg^{-1/2}\SVDr{\Creg^{-1/2}\Yx}$ {and $\ARRR{=}\Creg^{-1/2}\SVDr{\Creg^{-1/2}\AYx}$}, and set for brevity $\norm{\cdot}:=\norm{\cdot}_{\RKHS\to\Liishort}$. Then we have 
\begin{align*}
    &\error(\ERRR)\leq \underbrace{\norm{\Rx\TS \!\!-\! \TS \KRR}}_{\text{\textbf{(I)} regularization bias}}\!+\! \underbrace{\norm{\TS(\KRR \!\!-\! \RRR)}}_{\text{\textbf{(II)} rank reduction bias}}\\
    &\hspace{1.5cm}\!+\! \underbrace{\norm{\TS( \RRR \!-\! \ARRR)}}_{\text{\textbf{(III)} integration bias}} 
    \!+\!  \underbrace{\norm{\TS( 
\ARRR\!-\! \ERRR)}}_{\text{\textbf{(IV)} estimator variance}}.
    \label{eq:dec}
\end{align*}
Regularity assumption \ref{eq:RC} guarantees that $\textbf{(I)} \leq \tfrac{\rcon}{\shift}\reg^{\rpar/2}$. From definition of RRR, we immediately get $\textbf{(II)}\lesssim \sigma_{r+1}(\ARx\TS)$. Applying proposition in App. \ref{app:approx_integral} yields  
 $\textbf{(III)}\lesssim \dt$ . Results of App.~\ref{app:concentration} gives the control on \textbf{(IV)}. Hence we get w.p.a.l. $1-\delta$, 
\begin{equation*}
    \error(\ERRR){\lesssim} \tfrac{\reg^{\rpar/2}}{\shift} {+} \dt {+} 
\sigma_{r+1}(\Rx\TS){+} \tfrac{\ln^{3/2}\left(\frac{n \,\ell}{\delta}\right)}{\shift \sqrt{(n{-}\ell)\,\dt\, w_\star\, \reg^{\spar}}},
\end{equation*}
noting that the same result holds when $\sigma_{r+1}(\Rx\TS)$ is replaced by $\esval_{r+1}$ computed in line 5 of Algorithms \ref{alg:primal}-\ref{alg:dual}.  
By balancing with respect to \(\reg\) first and then with respect to \(\dt\), we derive the final bound.
$\Box$

Since Prop.~\ref{prop:main_spectral} transforms the estimation error via metric distortion into the pseudospectral perturbation level,  it is a starting point to derive estimation of eigenvalues and eigenvectors of $\Rx$, and consequently of $\generator$. The quality of such bounds depends on the properties of $\generator$, in particular on the conditioning of eigenvalues, that is on the angles between its eigenfunctions, or more generally its spectral projectors. The nicest case is for normal generators, where we derive the following. 

\begin{restatable}{theorem}{thmSpectral}\label{thm:spectral_bound} 
Under the assumptions of Thm. \ref{thm:error_bound}, let $(\eeval_i,\elefun_i,\erefun_i)_{i\in[r]}$ be the output of Algorithm \ref{alg:dual}. If $\generator^*\generator{=}\generator\generator^*$, then for large enough $n{\geq} r$ with probability at least $1\,{-}\,\delta$ in the draw of $\Data$,
\begin{equation*}\label{eq:spectral_bound_rrr}
\frac{\abs{\geval_i\!-\!\eeval_i}}{|\shift\!-\!\geval_i||\shift\!-\!\eeval_i|}\!\leq\! \epsilon_{n,i}^\delta\text{ and }\norm{\egefun_i\!-\!\gefun_{i}}^2_{\Liishort}\!\!\leq\! \frac{2\epsilon_{n,i}^\delta }{[\gap_i  \!-\! \epsilon_{n,i}^\delta]_{+}},
\end{equation*}
where $\epsilon_{n,i}^\delta{=}(\esval_{r+1}\emetdist_i \wedge \sigma_{r+1}(\Rx\TS)/ \sigma_{r}(\Rx\TS)){+}\rate_n^\star(\delta)$, $\egefun_i\!=\!\TS\erefun_i \, / \, \|\TS\erefun_i\|_{\Liishort}$ and $\gap_i$ is the difference between $i$-th and $(i+1)$-th eigenvalue of $\Rx$, $i\in[r]$.
\end{restatable}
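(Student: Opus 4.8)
\emph{Proof sketch.} The plan is to combine Proposition~\ref{prop:main_spectral}, which places each estimated resolvent eigenvalue $\widehat\nu_i$ in a pseudospectrum of $\Rx$, with the operator-norm error bound of Theorem~\ref{thm:error_bound}, and then to exploit normality of $\generator$ to turn pseudospectral inclusions into genuine eigenvalue and eigenfunction estimates. First I would apply Proposition~\ref{prop:main_spectral} to $\EEstim{=}\ERRR{=}\sum_{i\in[r]}\widehat\nu_i\,\erefun_i\otimes\elefun_i$, obtaining for each $i\in[r]$ that $\norm{(\widehat\nu_i\Id{-}\Rx)^{-1}}^{-1}\leq\epsilon_i$ with $\epsilon_i=\error(\ERRR)\emetdist_i\wedge\error(\ERRR)\norm{\ERRR}/(\sigma_r(\Rx\TS){-}\error(\ERRR))$. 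Inserting the bound $\error(\ERRR)\lesssim(\esval_{r+1}\wedge\sigma_{r+1}(\Rx\TS))+c\,\rate_n^\star(\delta)$ from Theorem~\ref{thm:error_bound}, and using that for $n$ large the gap assumption keeps $\sigma_r(\Rx\TS){-}\error(\ERRR)\geq\tfrac12(\sigma_r(\Rx\TS){-}\sigma_{r+1}(\Rx\TS))>0$ while the choices of $\reg,\dt$ keep $\error(\ERRR)\norm{\ERRR}$ bounded, both branches of $\epsilon_i$ are seen to be dominated by $\esval_{r+1}\emetdist_i\wedge\sigma_{r+1}(\Rx\TS)/\sigma_r(\Rx\TS)$ plus a multiple of $\rate_n^\star(\delta)$; absorbing the ensuing $\RKHS$- and gap-dependent constants gives $\epsilon_i\leq\epsilon_{n,i}^\delta$.

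Second, since $\shift>0$ is real and $\generator\generator^*{=}\generator^*\generator$, the operator $\shift\Id{-}\generator$ is normal, hence so is $\Rx{=}(\shift\Id{-}\generator)^{-1}$; therefore $\norm{(\widehat\nu_i\Id{-}\Rx)^{-1}}^{-1}=\dist(\widehat\nu_i,\Spec(\Rx))$, so $\dist(\widehat\nu_i,\Spec(\Rx))\leq\epsilon_{n,i}^\delta$. Because $\Spec(\Rx)\setminus\{0\}=\{(\shift{-}\geval_j)^{-1}\}_j$, it remains to show that the spectral point closest to the $i$-th estimated eigenvalue $\widehat\nu_i=(\shift{-}\eeval_i)^{-1}$ is $(\shift{-}\geval_i)^{-1}$. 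This I would establish through a homotopy/continuity argument deforming $\ERRR$ into its population counterpart $\RRR$ (following \citep{Kostic2022}), in which the $r$-th singular-value gap $\sigma_r(\Rx\TS){-}\sigma_{r+1}(\Rx\TS)>0$ forbids the rank-$r$ estimated eigenvalues from drifting onto the tail $\{(\shift{-}\geval_j)^{-1}\}_{j>r}$ along the deformation. Then $\dist(\widehat\nu_i,\Spec(\Rx))=|(\shift{-}\eeval_i)^{-1}{-}(\shift{-}\geval_i)^{-1}|=\tfrac{|\geval_i-\eeval_i|}{|\shift-\geval_i||\shift-\eeval_i|}$, which is the first claim.

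Third, for the eigenfunction bound set $\egefun_i{=}\TS\erefun_i/\norm{\TS\erefun_i}_{\Liishort}$, a unit vector of $\Liishort$. Applying $\TS$ to the eigen-relation $\ERRR\erefun_i{=}\widehat\nu_i\erefun_i$ and using $\Rx\TS\erefun_i{-}\widehat\nu_i\TS\erefun_i=(\Rx\TS{-}\TS\ERRR)\erefun_i$ yields the residual estimate $\norm{(\Rx{-}\widehat\nu_i\Id)\egefun_i}_{\Liishort}\leq\error(\ERRR)\,\metdist(\erefun_i)$, which—after passing from the true metric distortion to its empirical version $\emetdist_i$ by the concentration already used for Proposition~\ref{prop:main_spectral}—is at most $\epsilon_{n,i}^\delta$. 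Thus $\egefun_i$ is an $\epsilon_{n,i}^\delta$-approximate eigenvector of the normal operator $\Rx$ for $\widehat\nu_i$, and by the second step $\widehat\nu_i$ lies within $\epsilon_{n,i}^\delta$ of the eigenvalue $(\shift{-}\geval_i)^{-1}$, whose eigenfunction is $\gefun_i$ and which is separated from the remainder of $\Spec(\Rx)$ by (at least) $\gap_i$. Writing $\egefun_i{=}\alpha\gefun_i{+}w$ with $w\perp\gefun_i$ and using that the normal operator $\Rx$ leaves $\C\gefun_i$ and $(\C\gefun_i)^{\perp}$ invariant, the two summands of $(\Rx{-}\widehat\nu_i\Id)\egefun_i$ are orthogonal and $\norm{(\Rx{-}\widehat\nu_i\Id)w}_{\Liishort}\geq(\gap_i{-}\epsilon_{n,i}^\delta)\norm{w}_{\Liishort}$, so $\norm{w}_{\Liishort}\leq\epsilon_{n,i}^\delta/(\gap_i{-}\epsilon_{n,i}^\delta)$ when $\gap_i>\epsilon_{n,i}^\delta$; choosing the phase of $\gefun_i$, $\norm{\egefun_i{-}\gefun_i}_{\Liishort}^2\leq 2(\norm{w}_{\Liishort}^2\wedge1)\leq 2\epsilon_{n,i}^\delta/[\gap_i{-}\epsilon_{n,i}^\delta]_{+}$ (the estimate being vacuous when $\gap_i\leq\epsilon_{n,i}^\delta$).

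The main obstacle is the eigenvalue-matching step: upgrading ``each $\widehat\nu_i$ sits near \emph{some} point of $\Spec(\Rx)$'' from Proposition~\ref{prop:main_spectral} to ``the $i$-th $\widehat\nu_i$ sits near $(\shift{-}\geval_i)^{-1}$'', which needs the deformation argument together with the $r$-th gap to preclude eigenvalue crossings; a secondary technicality is carrying the various $\RKHS$- and problem-dependent constants ($\norm{\ERRR}$, $\sigma_r(\Rx\TS)$, the gap, and the discrepancy between $\metdist(\erefun_i)$ and $\emetdist_i$) through the argument so that the conclusion reads exactly in terms of $\epsilon_{n,i}^\delta$.
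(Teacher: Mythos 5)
Your proposal is correct and follows essentially the same route as the paper: Proposition~\ref{prop:main_spectral} for the pseudospectral inclusion, the operator-norm bound of Theorem~\ref{thm:error_bound}, control of the (worst-case and empirical) metric distortion via $\norm{\ERRR}$ and $\sigma_r(\TS\ERRR)$ --- which is where the paper's explicit check that $\reg^{-1/2}\rate_n^\star(\delta)$ stays bounded for $\rpar\geq 1$ enters --- and finally a Davis--Kahan argument (Proposition~\ref{prop:davis_kahan}) for the eigenfunctions, with Bauer--Fike reserved for the non-normal case. The only real difference is presentational: you inline the Davis--Kahan decomposition and explicitly flag the eigenvalue-matching/indexing step, which the paper handles implicitly by stating Davis--Kahan for the eigenvalue of $\Rx$ closest to $\widehat\nu_i$ and relying on the gap condition.
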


Without further details on the rich theory of spectral perturbations, note that in non-normal setting, the conditioning of eigenvalues typically comes as multiplicative factor, e.g. Bauer-Fike theorem (App. \ref{app:perturbation}).

\textbf{Comparison to other approaches.} Assume for simplicity that the RKHS aligns perfectly with $\Lii$ and that $\generator {=} \generator^*$, meaning that $\sigma_j(\TOp_{\dt} \TS) {=}\sigma_j(\TOp_{\dt})$ and $\sigma_{j}(\Rx\TS){ =} \sigma_{r+1}(\Rx)$ for all $j{\leq} r{+}1$.~TO methods learn $\TOp_{\dt} = e^{\dt\,\generator}$, which shares the same eigenfunctions as the generator $\generator$.~Analysis from \citep[Thm.~3 and Eq.~8]{kostic2023sharp} for TO ensures that $\|\egefun_i{-}\gefun_{i}\|^2{\leq} 2\abs{{\widehat{e^{\eval_{i}\dt}}}\,\!\!{-}e^{\eval_{i}\dt}}/[\gap_{i}(\TOp_{\dt}) {-} \abs{{\widehat{e^{\eval_{i}\dt}}}\,\!\!{-}e^{\eval_{i}\dt}}]_{+}$, $\widehat{\eval_i\dt}$ denoting the empirical estimate of this product from data. However this bound becomes vacuous as $\dt{\rightarrow} 0$ since $\gap_i(\TOp_{\dt}){\to} 0$ {but $\abs{\widehat{e^{\eval_{i}\dt}}\,\!\!{-}e^{\eval_{i}\dt}}>0$ for finite sample size}. In contrast, our generator-based approach is not sensitive to time-lag, as \(\gap_j(\Rx)\) is independent of \(\dt\), guaranteeing recovery of eigenfunctions even as \(\dt {\to} 0\), which is crucial for capturing fast dynamics. We note that this analysis of the bounds indeed captures the reality of the both estimators in practice, see Fig. \ref{fig:eigenvalue_error} bellow and Fig. \ref{fig:eigenvalue_error_primal} in App. \ref{app:exp}.

We now compare our method to the physics-informed approach of \citep{Kostic2024diffusion}, using the same simplifying assumptions as above for clarity.
In their equation (24), they obtained:
\begin{equation*}
\frac{\abs{\geval_i\!-\!\eeval_i}}{|\shift\!-\!\geval_i||\shift\!-\!\eeval_i|}\!-\sqrt{\!\frac{\sigma_{r+1}(\Rx)}{\sigma_{r}(\Rx)}} \!\lesssim\!  n^{-\frac{\rpar}{2(\rpar+\spar)}}\ln(\delta^{-1}).
\end{equation*}
Note that their bias term ($\sqrt{\sigma_{r+1}(\Rx)/\sigma_{r}(\Rx)}$) is larger than ours ($\sigma_{r+1}(\Rx)/\sigma_{r}(\Rx)$), while their variance term (\(\propto n^{-\rpar/2(\rpar+\spar)}\)) is smaller than ours (\(\propto n^{-\rpar/(3\rpar+2\spar)}\)). This is expected, as their method is restricted to self-adjoint IGs exploiting their structure, while our results are structure agnostic and apply to the broader class of sectorial IGs.

\section{Experiments}


In this section, we demonstrate that our LaRRR Algorithms \ref{alg:primal} and \ref{alg:dual} successfully recover the eigenvalues and eigenfunctions of the process's IG. We present both a 1D and 2D experiments, where we measure error against the ground truth, as well as a higher dimensional molecular dynamics experiment, where we obtain results consistent with the independent study of \citet{Mardt2018}. Finally, to empirically support our claims in Tab. \ref{tab:summary}, in Fig. \ref{fig:baselines} in App. \ref{app:exp} we provide an additional experiment, showcasing that our method, despite not being physics-informed as \cite{Kostic2024diffusion}, does not suffer from spurious estimation of IG's eigenvalues as baselines \cite{Cabannes_2023_Galerkin, hou23c} do. The implementation of the methods, as well as all experiments are available in the \href{https://github.com/vladi-iit/LaRRR/}{GitHub repository}.

{\bf Non-normal process in 2D.} We evaluate the performance of our primal LaRRR algorithm on a one-dimensional Ornstein-Uhlenbeck with a (non-normal) $2\times 2$ matrix. As features we use 1000 random Fourier features, and test if we can recover eigenvalues of the non-normal linear drift. In Fig.~\ref{fig:OU} we compare to TO RRR estimator over ten trials and for two different time-discretizations. While high variability in the estimation for both methods is expected due to sensitiveness of the eigenvalues to perturbations due to non-normality, we can observe that LaRRR achieves better estimation.

\begin{figure}[h!]
    \centering
   \includegraphics[scale=0.45]{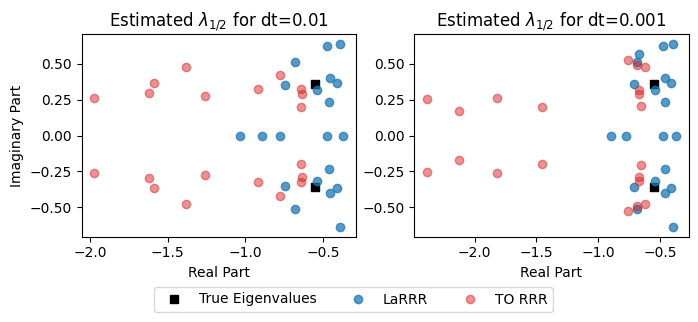}
   \vspace{-0.5truecm}
    \caption{The eigenvalues of the (non-normal) $2\times 2$ drift matrix in Ornstein-Uhlenbeck process are estimated via primal LaRRR and TO RRR methods with 1000 random Fourier features in ten trials for two different time-discretizations. Number of samples for $\Delta t = 1e-2$ is $n=1e4$, while for $\Delta t = 1e-3$ is $n=1e5$. This large number of samples is anticipated due to additional sensitivity of the true eigenvalues to perturbations.}
    \label{fig:OU}
\end{figure}

{\bf Overdamped Langevin dynamics in 1D.} We evaluate the performance of our dual LaRRR algorithm on a one-dimensional Langevin process with a triple-well potential~\citep{Schwantes2015}. Specifically, we simulate the dynamics with inverse temperature $\beta {=} 1$ and potential $U(x) {=} 4(x^8\! {+} 0.8e^{{-}80x^2}\!\! {+} 0.2e^{{-}80(x{-}0.5)^2} {+} 0.5e^{{-}40(x{+}0.5)^2})$ that consists of three Gaussian-like wells located at $x \in \{-0.5, 0, 0.5\}$, with a bounding term proportional to $x^8$ to confine the equilibrium distribution primarily within the interval $[-1, 1]$. We generate 10 independent trajectories from this process and apply LaRRR to each trajectory in order to estimate the leading eigenvalues of the generator. By fitting 10 separate models, we assess the distribution of the relative errors in the eigenvalue estimates, defined as $|\hat{\lambda}_{i} {-} \lambda_{i}|/|\lambda_{i}|$ for each eigenvalue $\lambda_i$. Dual algorithm with RBF kernel is assessed in Fig.~\ref{fig:eigenvalue_error}. Similar results obtained by the primal version using random Fourier features is shown in Fig.~\ref{fig:eigenvalue_error_primal} in App. \ref{app:exp}.
\begin{figure}[h!]
    \centering
   \includegraphics[scale=0.46]{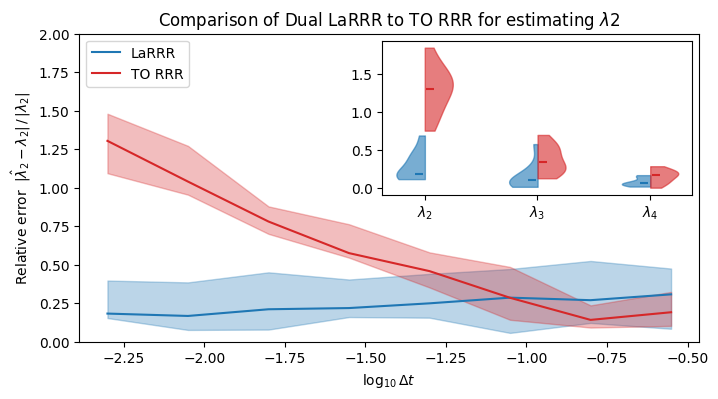}
   \vspace{-0.6truecm}
    \caption{Comparison of the LaRRR dual algorithm to the TO RRR one on the task of estimating the slowest timescales of the process. As predicted by our theory, while LaRRR (blue) is not affected by decreasing the time-lag $\Delta t$, the error of TO method (red) explodes as $\Delta {t}\to0$. The main figure shows three quartiles of relative error for $\lambda_2$ across 10 independent trajectories of a 1D Langevin process on a triple-well potential, while the inset figure shows distributions for the leading three eigenvalues for $\Delta t=0.005$.}
    \label{fig:eigenvalue_error}
\end{figure}


{\bf Alanine Dipeptide.} Alanine dipeptide in water serves as a benchmark for studying dynamics due to its simplicity and metastability, where the system predominantly occupies several metastable states, that can be identified using the dihedral angles $\phi$ and $\psi$. Under reasonable assumptions, the long-term dynamics can be treated as Markovian by integrating out the water degrees of freedom. In this study, we apply our method to alanine dipeptide, using the interatomic distances between heavy atoms as input. Notably, the state space dimension is $d=45$, which is intractable for the kernel based generator learning method of \citep{Kostic2024diffusion}. We use two independent trajectories from \cite{AD1} to train and validate the model, respectively. The recovery of metastable states corresponding to the leading two (non-trivial) eigenfunctions of IG is shown in Fig. \ref{fig:AD}: the eigenfunctions' values (color) are represented in the 2D plane of dihedral angles. Note that approximately constant values (red and blue) reveal two metastable states that align with the state-of-the-art expert knowledge in molecular dynamics, \cite{AD2}.
\begin{figure}[h!]
    \centering
    \includegraphics[scale=0.41]{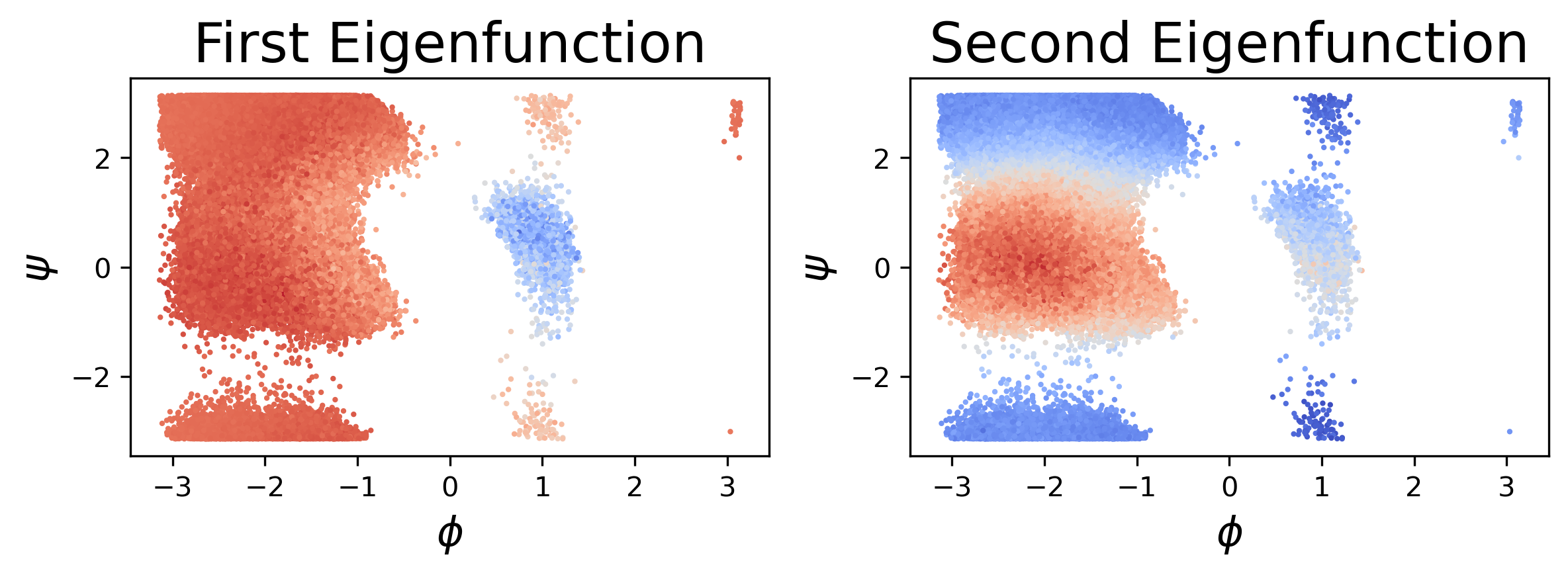}
    \vspace{-0.7truecm}
    \caption{The first two non trivial eigenfunctions of the alanine dipeptide in water trained on a 250 ns simulation with $\dt=50$ ps and displayed on an independent 250 ns test simulation. The color of the points corresponds to the values of the eigenfunctions.}
    \label{fig:AD}
\end{figure}

\section{Conclusion}
We presented a first-of-its-kind method to learn continuous Markov semigroups, offering both theoretical guarantees at any time-lag and linear computational complexity in the state dimension, enabling efficient exploration of high-dimensional complex systems. 
Notably, our method applies to a broad range of Markov processes previously unaddressed, and overcomes the problem of TO-based methods failing to capture slow dynamics when trained on data with high sampling frequency. The main limitation of the current results lies in the assumption of uniform sampling of the (full) state of the system. While the theory can be seamlessly adapted to multiple observations sharing the same non-uniform sampling, an important challenge is to extend it to non-uniformly sampled single trajectory data, as well as to the case of partially observed systems.

{\bf Acknowledgments.} This work was supported by the EU Project ELIAS (grant No. 101120237), and by the European Union – NextGenerationEU and the Italian National Recovery and Resilience Plan through the Ministry of University and Research (MUR), under Project PE0000013 CUP J53C22003010006. KL acknowledges support from the French National Research Agency for the DECATTLON project (ANR-24-CE40-3341). We thank the anonymous reviewers for their insightful and valuable feedback.

\section*{Impact Statement}
This paper aims to advance the field of Machine Learning. While the work may have societal implications, none require specific mention here.





\bibliographystyle{icml2025}
\bibliography{learning.bib}

\appendix
\onecolumn

\begin{center}
    {\Large \bf Supplementary Material}
\end{center}

\renewcommand{\arraystretch}{1.0}
\begin{table*}[h!]
\centering
\makebox[\textwidth]{
\begin{tabular}{c|c||c|c}
\toprule
notation & meaning & notation & meaning  \\ 
\midrule 
$\wedge$ & minimum & $\vee$ & maximum \\ \hline
$\Re$ & real part of a complex number & $\Im$ & imaginary part of a complex number \\ \hline
$\SVDr{\,\cdot\,}$ & $r$-truncated SVD of an operator & $\Id$ & identity operator \\\hline
$\HS{\spH,\spG}$ & space of Hilbert-Schmidt operators $\spH\to\spG$ & $\HSr$  & set of finite rank-$r$ operators on $\RKHS$\\\hline
$\norm{A}$ & operator norm of an operator $A$ & $\hnorm{A}$ & Hilbert-Schmidt norm of an operator $A$ \\\hline
$\fov(A)$ & numerical range an operator $A$ & $\nabla^2 f$ & Hessian of a function $f$ \\\hline
$\sigma_i(\cdot)$ & $i$-th singular value of an operator & $\eval_i(\cdot)$ & $i$-th eigenvalue of an operator \\\hline
$\spX$ & state space of the Markov process & $(X_t)_{t\geq0}$ & time-homogeneous Markov process \\\hline
$\transitionkernel$ & transition kernel of the Markov process & $\im$ & invariant measure of the Markov process  \\\hline
$\drift$ & drift of the Itô process & $b$ & diffusion of the Itô process  \\\hline
$\Lii$ & $\mathrm L^2$ space of functions on $\spX$ w.r.t. measure $\im$ & $\LKoop_t$ & transfer operator on $\Lii$ for time-step $t$\\\hline
$\Wii$ & Sobolev space w.r.t. measure $\im$ on $\spX$ & $\generator$ & generator of the semigroup on $\Wii$ \\\hline
$\shift$ & shift parameter & $\dt$ & time discretization  \\\hline
$\Rx$ & resolvent of $\generator$ & $\ARx$ & approximated resolvent of $\generator$ \\\hline
$R_{\shift_{\vert_{\RKHS}}}$ &resolvent operator restricted to $\RKHS$ & $\resolvent(\generator)$& resolvent set of $\generator$ \\\hline
$k(x,y)$ & kernel & $\phi$  & canonical feature map \\\hline
$\RKHS$ &reproducing kernel Hilbert space & $\TS$ & canonical injection $\RKHS \hookrightarrow\Lii$ \\\hline
$\one$ &  function in $\Lii$ constantly equal to $1$ & $\reg$ & regularization parameter \\\hline
$\Risk$ & true risk & $\error$ & operator norm error \\\hline
$\ExRisk$ & excess risk, i.e. HS norm error & $\IrRisk$ & irreducible risk \\\hline
 $\ARisk$ & approximated risk & $\ERisk$ & empirical risk\\\hline
$\IfH$ & Bochner integral & $\AIfH$& approximated Bochner integral \\\hline
$\ES$ & sampling operator w.r.t. $\Lii$ & $\rho$ & joint invariant measure of the Markov process \\\hline
$\Cx$ & covariance operator on $\Lii$ &$\ECx$ & empirical covariance operator on $\Lii$ \\\hline
$\Creg$ & regularized covariance operator on $\Lii$ &$\ECreg$ & regularized emp. covariance operator on $\Lii$ \\\hline
$\Cxy{t}$ & cross-covariance operator at time step $t$   & $\ECxy{t}$ & emp. cross-covariance operator at time step $t$ \\\hline
$H_\mu$ & integrated cross-covariance& $\EYx$ & empirical aggregated cross-covariance  \\\hline
$\AYx$ & aggregated cross-covariance & $\toeplitz$ & Toeplitz matrix associated to $\EYx$  \\\hline
$\Kx$ & kernel Gram matrix 
& $\Kreg$ & regularized kernel Gram matrix 
\\\hline
$\Estim$ & population estimator of $\resolvent$ on $\RKHS$ & $\EEstim$ & empirical  estimator of $\resolvent$ on $\RKHS$  \\\hline
$\KRR$ & population KRR estimator & $\EKRR$ & empirical KRR estimator \\\hline
$\RRR$ & population RRR estimator & $\ERRR$ & empirical RRR estimator \\\hline
$\ARRR$ & approximated population RRR estimator & $\AP$ & approximated projector \\\hline
$\TP$ & spectral projector & $\EP$ & empirical spectral projector \\\hline
$\metdist$ & metric distortion & $\emetdist$ & empirical  metric distortion \\\hline
$\geval_i$ & $i$-th generator eigenvalue  & $\eeval_i$ & $i$-th eigenvalue of the empirical estimator \\\hline
$\gefun_i$ & $i$-th generator right eigenfunction in $\Lii$ & $\egefun_i$ & $i$-th empirical right eigenfunction in $\Lii$ \\\hline
$\lefun_i$ & $i$-th generator left eigenfunction in $\Lii$ & $\elefun_i$ & $i$-th empirical left eigenfunction in $\Lii$ \\\hline
$\erefun_i$ & $i$-th generator right eigenfunction in $\Lii$ & $\erefun_i$ & $i$-th right empirical eigenfunction  \\\hline
$\widehat{\nu}_i$ & $i$-th empirical eigenvalue &$\bcon$ & boundedness constant\\ \hline
$\rpar$ &regularity parameter & $\rcon$  & regularity constant \\\hline
$\spar$ &spectral decay parameter& $\scon$ & spectral decay  constant\\\hline
$\epar$ &embedding parameter & $\econ$ & embedding constant\\ \hline
\bottomrule
\end{tabular}}
\end{table*}
\renewcommand{\arraystretch}{1}

\newpage

The supplementary material is organized as follows. App. \ref{app:background} reviews key results on Markov semigroups, RKHS operator regression, and spectral perturbation theory. App. \ref{app:methods} covers prior reduced rank regression methods and extends LaRRR to non-uniformly sampled data from multiple trajectories. App. \ref{app:theory} presents proofs of Theorems \ref{thm:error_bound} and \ref{thm:spectral_bound}, including guarantees for LaRRR with non-uniform sampling. The table above summarizes the notation.

\section{Background}
\label{app:background}

\subsection{Markov semigroups}
\label{app:semigroups}
Markov processes, where the future depends only on the present, are key to modeling physical, biological, and financial systems. This class includes Itô diffusions, time-dependent or singular drift processes with continuous paths, and jump processes like Poisson, Lévy, and Hawkes. Here, we focus on continuous-path Markov processes, primarily Itô diffusions. All Markov processes can be defined both as time-dependent random functions and by their laws, via measures on path space. A key tool is the infinitesimal generator (IG), a linear operator on observables.\\
We provide here some basics on operator theory for Markov processes. Let $\spX\subset\R^d$ ($d\in\N$) and $(X_t)_{t\in\R_+}$ be a $\spX$-valued time-homogeneous Markov process defined on a filtered probability space $(\Omega,\mathcal F,(\mathcal F_t)_{t\in\R_+},\mathbb P)$ where $\mathcal F_t=\sigma(X_s,\,s\leq t)$ is the natural filtration of $(X_t)_{t\in\R_+}$. The dynamics of a continuous-time Markov process $X$ is described by a family of probability densities $(p_t)_{t\in\R_+}$ 
\begin{equation}\label{Eq: semigroup def}
\PP(X_t\in E|X_0=x)=\int_E p_t(x,y)dy,
\end{equation}
and \textit{transfer operators} (TO) $(A_t)_{t\in\R_+}$ such that for all $t\in\R_+$, $E\in\mathcal B(\spX)$, $x\in\spX$ and measurable function $f:\spX\to\R$,
\begin{equation}\label{Eq: transfer operator def}
\TOp_tf=\int_\spX f(y)p_t(\cdot,y)dy=\EE\big[f(X_t)\,|\,X_0=\cdot\big].
\end{equation}
The transfer operator is essential for understanding the dynamics of $X$. We examine its action on $\Lii$, noting the presence of an \emph{invariant measure} $\im$, which satisfies $\TOp_t^* \im = \im$ for all $t \in \R_+$.
In theory of Markov processes, the family $(\TOp_t)_{t\in\R_+}$ is referred to as the \emph{Markov semigroup} associated to the process $X$. 
%
The process $X$ is then characterized by the \emph{infinitesimal generator} (IG) $\generator : \Lii\rightarrow \Lii$ of the family $(\TOp_t)_{t\in\R_+}$ defined by
\begin{equation}\label{Eq: def generator}
\generator=\underset{t\rightarrow 0^+}\lim\frac{\TOp_t-I}{t}.
\end{equation}
In other words, $\generator$ characterizes the linear differential equation $\partial_t\TOp_t f=\generator \TOp_tf$ satisfied by the transfer operator. 
%
%
The spectrum of the IG can be difficult to capture due to the potential unboundedness of $\generator$. To circumvent this problem, one can focus on an auxiliary operator, the resolvent, which shares the same eigenfunctions as $\generator$ and becomes compact under certain conditions. The following result can be found in Yosida's book (\cite{Yosida_2012_functional}, Chap.~IX) : For every $\mu>0$, the operator $(\mu I-\generator)$ admits an inverse $R_\mu=(\mu I-\generator)^{-1}$ that is a continuous operator on $\spX$ and
\begin{equation*}
(\mu I-\generator)^{-1}=\int_0^\infty e^{-\mu t}\TOp_t dt.
\end{equation*}
The operator $\generator_\mu$ is the \textit{resolvent} of $\generator$ and the corresponding \textit{resolvent set} of $\generator$ is defined by
\begin{equation*}
\rho(\generator)=\big\{\mu\in\mathbb C\,|\, (\mu\Id-\generator)\, \text{is bijective and}\,\generator_\mu  \,\text{is continuous}\big\}.
\end{equation*}
In fact, $\rho(\generator)$ contains all real positive numbers and $(\mu I-\generator)^{-1}$ is bounded. In particular, the resolvent of a \textit{sectorial operator} (see \eqref{eq:sectorial} below) is uniformly bounded outside a sector containing the spectrum. Its spectral decomposition writes
\begin{equation}\label{Eq: spectral dec generator}
\generator = \textstyle{\sum_{i\in \N}}\geval_i\,\lefun_i\otimes \gefun_i
\end{equation}
with eigenvalues $(\geval_i)_{i \in \mathbb{N}}\subset\C$ and corresponding left and right eigenfunctions $f_i, g_i \in L^2$, respectively.
\vspace{3pt}\\
We detail the two examples of processes with sectorial IG discussed in the paper: the overdamped Langevin process and the Ornstein-Uhlenbeck process.
\vspace{3pt}\\
\textbf{Example \ref{ex: Langevin} (Overdamped Lagenvin - detailed)} Let $\sigma$, $k_b$, and $T \in \R_+^*$. The \emph{overdamped Langevin} dynamics of a particle in a potential $V:\R^d\rightarrow\R$ is given by the SDE:
$dX_t=-\gamma^{-1}\nabla V(X_t)dt+\sqrt{2\betaLang} dW_t \quad \text{and} \quad X_0=x$,  where $\gamma$, $k_b$, and $T$ are the friction coefficient, Boltzmann constant, and system temperature, respectively. The invariant measure is the \emph{Boltzmann distribution} $\pi(dx)=Z^{-1} e^{-V(x)/(k_bT)} dx$, where $Z$ is a normalizing constant.
Its infinitesimal generator is $\generator f=-\gamma^{-1}\nabla V^\top \nabla f+\betaLang\Delta f$, for $f\in \Wii$. 
Since $\int (-\generator f)g\,d\pi
=-\int \Big[\nabla\Big(\betaLang\nabla f(x)\frac{e^{-{\betaLang}^{-1} V(x)}}{Z}\Big)\Big]g(x)dx
= \betaLang\int \nabla f^\top \nabla g\,d\pi=\int f(-\generator g)\,d\pi$, the generator $\generator$ is self-adjoint. 
In dissipative systems, the IG $\generator$ is sectorial, with its spectrum usually in the left half-plane. For confining potentials, the spectrum is discrete, featuring 0 as the largest eigenvalue, which corresponds to the distribution $\im$.\vspace{3pt}\\
\textbf{Example \ref{ex:OU} (Ornstein-Uhlenbeck - detailed)} The Ornstein-Uhlenbeck (OU) process with drift is governed by the SDE $dX_t=AX_t+BdW_t$, with $X_0=x$, where $A \in \R^{d\times d}$ is the drift matrix and $B \in \R^{d\times d}$ is the diffusion matrix. This models systems like the Vasicek interest rate, damped harmonic oscillators, 
and neural dynamics, where fluctuations return to equilibrium.
If the real parts of $A$'s eigenvalues are negative, the OU process has an invariant Gaussian distribution with covariance $\Sigma_\infty$ satisfying Lyapunov's equation: $A\Sigma_\infty + \Sigma_\infty A^\top = -BB^\top$. Its infinitesimal generator is defined, for $f \in L^2$, by $\generator f(x)=\nabla f(x)^\top Ax + \frac{1}{2}\mathrm{Tr}[B^\top (\nabla^2 f(x)) B]$. The IG 
has a discrete spectrum with eigenvalues related to the drift matrix $A$, where 0 corresponds to the distribution $\im$, and the rest are negative, reflecting relaxation rates.

\subsection{Operator Regression in RKHS spaces}
\label{app:rkhs_learning}

Recalling the operator regression problem for learning the resolvent of the generator, for the reader's convenience we state that the learning problem is well posed, the proof of which is essentially the same as in \cite{Kostic2022}.

\begin{proposition}\label{prop:main_risk}
Given $\shift>0$, let $\RKHS\!\subseteq\!\Lii$ be the RKHS associated to kernel $k:\spX\!\times\!\spX\to\R$ such that $\TS\in\HS{\RKHS,\Liishort}$, and let $P_\RKHS$ be the orthogonal projector onto the closure of $\range(\TS)\subseteq\Lii$.
Then for every $\varepsilon>0$ there exists a finite rank operator $\Estim\colon\RKHS\,{\to}\,\RKHS$ such that 
\[
\Risk(\Estim)\,{\leq}\, \underbrace{\hnorm{\TS}^2 - \hnorm{\Rx\TS}^2}_{\IrRisk} + \underbrace{\norm{(I\,{-}\,P_\RKHS)\Rx\TS}^2_{\HS{\RKHS,\Liishort}}\,{+}\,\varepsilon}_{\ExRisk(\Estim)}.
\]
Consequently, when $k$ is universal, the excess risk can be made arbitrarily small. 
\end{proposition}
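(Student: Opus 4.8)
\textbf{Proof proposal for Proposition \ref{prop:main_risk}.}

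The plan is to decompose the risk exactly as in the standard RKHS operator-regression analysis of \citep{Kostic2022}, and then invoke density of the range of the injection $\TS$ together with the Hilbert--Schmidt assumption to squeeze the excess risk below any prescribed $\varepsilon$. First I would observe that, since $\TS\in\HS{\RKHS,\Liishort}$ and $\Rx$ is bounded (because $\shift\in\rho(\generator)$), the operator $\Rx\TS\colon\RKHS\to\Liishort$ is itself Hilbert--Schmidt; moreover $\IfH(X_0)=\int_0^\infty \fH(X_t)e^{-\shift t}\,dt$ is the Bochner integral whose conditional expectation against $\im$ realizes exactly $\Rx\TS$ through the Laplace identity \eqref{eq:resolvent}. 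Concretely, for any $\Estim\colon\RKHS\to\RKHS$ one computes
\begin{equation*}
\Risk(\Estim)=\EE_{X_0\sim\im}\norm{\IfH(X_0)}_\RKHS^2 - 2\,\EE\scalarpH{\IfH(X_0),\Estim^*\fH(X_0)} + \EE\norm{\Estim^*\fH(X_0)}_\RKHS^2,
\end{equation*}
and identifying the cross term with $\tr(\Estim\, \Rx\TS\,\TS^*)$-type quantities (equivalently, rewriting $\Risk(\Estim)=\IrRisk + \hnorm{\Rx\TS-\TS\Estim}^2$, which is the excess-risk identity already used in the main text), the minimization reduces to approximating $\Rx\TS$ in Hilbert--Schmidt norm by operators of the form $\TS\Estim$ with $\Estim$ finite rank.

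Next I would handle the approximation in two stages. Stage one: project onto $\range(\TS)$. Writing $P_\RKHS$ for the orthogonal projector onto $\cl(\range(\TS))\subseteq\Liishort$, we split $\Rx\TS = P_\RKHS\Rx\TS + (I-P_\RKHS)\Rx\TS$ and note the two pieces are orthogonal in $\HS{\RKHS,\Liishort}$, so the irreducible error contributed by the second piece is exactly $\norm{(I-P_\RKHS)\Rx\TS}^2_{\HS{\RKHS,\Liishort}}$, which vanishes precisely when $k$ is universal (then $\range(\TS)$ is dense and $P_\RKHS=I$). Stage two: approximate $P_\RKHS\Rx\TS$ by $\TS\Estim$. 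Since $P_\RKHS\Rx\TS$ has range inside $\cl(\range(\TS))$ and is Hilbert--Schmidt, it can be approximated to within $\varepsilon/2$ in HS norm by a finite-rank operator $F$ whose range lies in $\range(\TS)$ exactly; factoring $F = \TS\Estim$ for a suitable finite-rank $\Estim\colon\RKHS\to\RKHS$ (using that on the finitely many range directions $\TS$ has a bounded right inverse, or more carefully a pseudo-inverse construction truncated to those directions) gives $\hnorm{P_\RKHS\Rx\TS - \TS\Estim}^2 \le \varepsilon$. Combining the two stages via the orthogonal decomposition yields $\ExRisk(\Estim) = \hnorm{\Rx\TS-\TS\Estim}^2 \le \norm{(I-P_\RKHS)\Rx\TS}^2_{\HS{\RKHS,\Liishort}} + \varepsilon$, and adding $\IrRisk = \hnorm{\TS}^2 - \hnorm{\Rx\TS}^2$ recovers the claimed bound.

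The main obstacle, and the step that deserves the most care, is the factorization $F=\TS\Estim$ in stage two: $\TS$ need not be surjective or have closed range, so one cannot invert it globally. The clean fix is to build the finite-rank approximant from within the range of $\TS$ from the start — e.g. pick an orthonormal system $(e_j)_{j\le m}$ in $\Liishort$ obtained as $e_j = \TS u_j/\norm{\TS u_j}$ for $u_j\in\RKHS$ (possible by density), expand $P_\RKHS\Rx\TS$ in this system up to truncation error $\varepsilon/2$, and read off $\Estim = \sum_j (\cdot)\otimes(\norm{\TS u_j}^{-1}u_j)$ composed with the coefficient functionals; then $\TS\Estim$ reproduces the truncation exactly. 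The bookkeeping that the truncation error is controlled uses only that $P_\RKHS\Rx\TS$ is Hilbert--Schmidt (finite HS norm $\Rightarrow$ tail of singular values summable), which we already established. Everything else is the routine bias/variance-free algebra of the risk identity, identical to \citep{Kostic2022}, so I would state it and refer there rather than reproduce it.
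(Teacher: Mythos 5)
Your proposal is correct and follows essentially the same route as the paper's proof: reduce the risk to the Hilbert--Schmidt approximation of $\Rx\TS$ by $\TS\Estim$, split off the component orthogonal to $\cl(\range(\TS))$ via $P_\RKHS$, truncate to finite rank using the Hilbert--Schmidt property, and use density of $\range(\TS)$ in its closure to factor the truncation through $\TS$ (the paper truncates the SVD of $\TS$ itself and approximates each $P_\RKHS\Rx\ell_j$ by some $\TS g_j$, whereas you truncate the SVD of $P_\RKHS\Rx\TS$, a cosmetic difference). The only loose point is your claim that the normalized images $\TS u_j/\norm{\TS u_j}$ form an orthonormal system, which requires a Gram--Schmidt step inside $\range(\TS)$ or, as in the paper, approximating each fixed singular direction separately; you flag this yourself and the fix is routine.
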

\begin{proof}
First, note that we can decompose
\[
\Risk(\Estim)\,{\leq}\, \underbrace{\hnorm{\TS}^2 - \hnorm{\Rx\TS}^2}_{\IrRisk} + \underbrace{\norm{\Rx\TS-\TS\Estim}^2_{\HS{\RKHS,\Liishort}}}_{\ExRisk(\Estim)},
\]
as in \citep[Proposition 4]{Kostic2022} but now applied with additional integration. Next, since $\TS\in\HS{\RKHS,\Wiireg}$, according to the spectral theorem for positive self-adjoint operators, $\TS$ admits an SVD $\TS=\sum_{j\in\N}\sigma_j \ell_j\otimes h_j$. Recalling that $[\![\cdot]\!]_r$ denotes the $r$-truncated SVD, i.e. $[\![\TS]\!]_r = \sum_{j\in [r]}\sigma_j \ell_j\otimes h_j$,  since $\hnorm{\TS - [\![\TS]\!]_r }^2 = \sum_{j>r}\sigma_j^2$, for every $\delta>0$ there exists $r\in\N$ such that $\hnorm{\TS - [\![\TS]\!]_r } < \mu \delta/3$. Consequently since all the eigenvalues of $\generator$ have non-positive real part, $\hnorm{\resolvent(\TS- [\![\TS]\!]_r)}\leq \hnorm{\TS - [\![\TS]\!]_r }/\mu \leq \delta/3 $. Next since $\range(P_{\RKHS}\resolvent\TS)\subseteq \cl(\range(\TS))$, for any $j\in [r]$, there exists $g_j\in \RKHS $ s.t.  $\norm{ P_{\RKHS}\resolvent \ell_j-\TZ {g}_j}\leq\frac{\delta}{3 r}$, and, denoting $B_r:=\sum_{j\in[r]}   \sigma_{j} {g}_j \otimes {h}_j$ we conclude $\hnorm{ P_{\RKHS}\resolvent[\![\TS]\!]_r - \TS B_r}\leq \delta / 3$. Finally we recall that the set of non-defective matrices is dense in the space of matrices~\cite{TrefethenEmbree2020}, implying that the set of non-defective rank-$r$ linear operators is dense in the space of rank-$r$ linear operators on a Hilbert space. Therefore, there exists a non-defective $G\in\HSr$ such that $\hnorm{G - B_r }<\delta/ (3 \sigma_1(\TS) )$. So, we conclude 
\begin{align*}
\hnorm{\resolvent\TS-\TS \Estim} &\leq \hnorm{(I\!-\!P_\RKHS)\resolvent\TS}+ \hnorm{\resolvent\TS- [\![ \resolvent\TS ]\!]_r} +  \hnorm{ [\![ \resolvent\TS ]\!]_r- \TS B_r} + \hnorm{\TS(G-B_r) }\\
& \leq  \hnorm{(I\!-\!P_\RKHS)\resolvent\TS}+ \delta.
\end{align*}
\end{proof}

\subsection{Spectral perturbation theory}
\label{app:perturbation}

Recalling that for a bounded linear operator $A$ on some Hilbert space $\mathcal{H}$ the {\em resolvent set} of the operator $A$ is defined as ${\rm Res}(A) := \left\{\lambda \in \C \colon A - \lambda \Id \text{ is bijective} \right\}$, and its {\it spectrum} $\Spec(A):=\C\setminus\{\Res(A)\}$, let $\lambda\subseteq \Spec(A)$ be isolated part of spectra, i.e. both $\lambda$ and $\mu:=\Spec(A)\setminus\lambda$ are closed in $\Spec(A)$. Than, the \textit{Riesz spectral projector} $P_\lambda\colon\spH\to\spH$ is defined by 
\begin{equation}\label{eq:Riesz_proj}
P_\lambda:= \frac{1}{2\pi}\int_{\Gamma} (z \Id-A)^{-1}dz,
\end{equation}
where $\Gamma$ is any contour in the resolvent set $\Res(A)$ with $\lambda$ in its interior and separating  $\lambda$ from $\mu$. Indeed, we have that $P_\lambda^2 = P_\lambda$ and $\spH = \range(P_\lambda) \oplus \Ker(P_\lambda)$ where $ \range(P_\lambda)$ and $\Ker(P_\lambda)$ are both invariant under $A$ and $\Spec(A_{\vert_{\range(P_\lambda)}})=\lambda$,  
$\Spec(A_{\vert_{\Ker(P_\lambda)}})=\mu$. Moreover, $P_\lambda + P_\mu = \Id$ and $P_\lambda P_\mu = P_\mu P_\lambda = 0$.

Finally if $A$ is {\em compact} operator, then the Riesz-Schauder theorem, see e.g. \cite{Reed1980}, assures that $\Spec(T)$ is a discrete set having no limit points except possibly $\lambda = 0$. Moreover, for any nonzero $\lambda \in \Spec(T)$, then $\lambda$ is an {\em eigenvalue} (i.e. it belongs to the point spectrum) of finite multiplicity, and, hence, we can deduce the spectral decomposition in the form
\begin{equation}\label{eq:Riesz_decomp}
A = \sum_{\lambda\in\Spec(A)} \lambda \, P_\lambda,
\end{equation}
where geometric multiplicity of $\lambda$, $r_\lambda:=\rank(P_\lambda)$, is bounded by the algebraic multiplicity of $\lambda$. If additionally $A$ is normal operator, i.e. $AA^*= A^* A$, then $P_\lambda = P_\lambda^*$ is orthogonal projector for each $\lambda\in\Spec(A)$ and $P_\lambda = \sum_{i=1}^{r_\lambda} \psi_i \otimes \psi_i$, where $\psi_i$ are normalized eigenfunctions of $A$ corresponding to $\lambda$ and $r_\lambda$ is both algebraic and geometric multiplicity of $\lambda$.

Next we review well-known perturbation bounds for eigenfunctions and spectral projectors of normal compact operators, that is when $AA^*=A^*A$.

\begin{proposition}[\cite{DK1970}]\label{prop:davis_kahan}
Let $A$ be compact self-adjoint operator on a separable Hilbert space $\RKHS$. Given a pair $(\eeval,\egefun)\in \C \times\RKHS$ such that $\norm{\egefun}=1$, let $\eval$ be the eigenvalue of $A$ that is closest to $\eeval$ and let $\gefun$ be its normalized eigenfunction. If $\widehat{g}:=\min\{\abs{\eeval-\eval}\,\vert\,\eval\in\Spec(A)\setminus\{\eval\}\}>0$, then $\sin(\sphericalangle(\egefun,\gefun))\leq \norm{A\egefun-\eeval\egefun} / \widehat{g}$.
\end{proposition}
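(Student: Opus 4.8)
\textit{Proof proposal.} The plan is to reproduce the classical Davis--Kahan $\sin\theta$ argument. First I would invoke the spectral theorem for the compact self-adjoint operator $A$ on the separable Hilbert space $\RKHS$: there is an orthonormal basis $(\psi_j)_j$ of $\overline{\range(A)}\oplus\Ker(A)$ consisting of eigenvectors of $A$ with real eigenvalues $(\lambda_j)_j$ accumulating only at $0$, and $A=\sum_j\lambda_j\,\psi_j\otimes\psi_j$. Since $\gefun$ is a unit eigenvector of $A$ for the eigenvalue $\eval$, I may take $\gefun=\psi_1$, $\eval=\lambda_1$, with $\eval$ simple so that $\lambda_j\neq\eval$ for all $j\geq 2$ — this is exactly what makes the relevant eigenvalues run over $\Spec(A)\setminus\{\eval\}$. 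Then I decompose the unit vector $\egefun=c\,\gefun+w$ with $c=\scalarp{\egefun,\gefun}$ and $w\perp\gefun$, so that $\abs{c}^2+\norm{w}^2=1$ and, with the convention $\cos(\sphericalangle(\egefun,\gefun))=\abs{\scalarp{\egefun,\gefun}}$ (which makes the statement meaningful for complex $\eeval$ and $\egefun$), one has $\norm{w}^2 = 1-\abs{c}^2 = \sin^2(\sphericalangle(\egefun,\gefun))$.

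The heart of the proof is to split the residual $r:=A\egefun-\eeval\egefun$ orthogonally along $\gefun$ and $\gefun^\perp$. Using $A\gefun=\eval\gefun$, $r = c(\eval-\eeval)\gefun + (A-\eeval\Id)w$. Because $A$ is self-adjoint and $\eval$ is real, $\scalarp{Aw,\gefun}=\scalarp{w,A\gefun}=\eval\scalarp{w,\gefun}=0$, hence $(A-\eeval\Id)w\perp\gefun$; the two summands are therefore orthogonal and Pythagoras gives $\norm{r}^2 = \abs{c}^2\abs{\eval-\eeval}^2+\norm{(A-\eeval\Id)w}^2 \geq \norm{(A-\eeval\Id)w}^2$.

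Next I would lower-bound $\norm{(A-\eeval\Id)w}$ by the spectral gap. The subspace $\gefun^\perp$ is $A$-invariant (again by self-adjointness), and $A|_{\gefun^\perp}$ is compact self-adjoint with spectrum contained in $\Spec(A)\setminus\{\eval\}$ (here simplicity of $\eval$ is used). Expanding $w=\sum_{j\geq 2}c_j\psi_j$ (a possible $\Ker(A)$-component corresponds to the eigenvalue $0\in\Spec(A)\setminus\{\eval\}$), $\norm{(A-\eeval\Id)w}^2 = \sum_{j\geq2}\abs{\lambda_j-\eeval}^2\abs{c_j}^2 \geq \widehat{g}^2\sum_{j\geq2}\abs{c_j}^2 = \widehat{g}^2\norm{w}^2$, since $\abs{\lambda_j-\eeval}\geq \dist(\eeval,\Spec(A)\setminus\{\eval\})=\widehat{g}$ for every $j\geq 2$ (equivalently, $\norm{(B-\eeval\Id)^{-1}}=\dist(\eeval,\Spec(B))^{-1}$ for a bounded normal $B$, applied to $B=A|_{\gefun^\perp}$). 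Combining this with the previous display yields $\norm{r}^2\geq\widehat{g}^2\norm{w}^2=\widehat{g}^2\sin^2(\sphericalangle(\egefun,\gefun))$, and taking square roots gives $\sin(\sphericalangle(\egefun,\gefun))\leq\norm{A\egefun-\eeval\egefun}/\widehat{g}$.

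Every step is elementary, so there is no genuine obstacle; the only points requiring care are: (i) making precise that $\eval$ is a simple eigenvalue so that $\gefun^\perp$ misses the $\eval$-eigenspace — otherwise $\widehat{g}$ must be read as the distance from $\eeval$ to the eigenvalues whose eigenspaces are orthogonal to $\gefun$, and ``the eigenfunction $\gefun$'' should be reinterpreted as a unit vector spanning $\Ker(A-\eval\Id)$; (ii) fixing the angle convention via $\cos=\abs{\scalarp{\cdot,\cdot}}$ so that $\norm{w}=\sin(\sphericalangle(\egefun,\gefun))$ holds verbatim for complex data; and (iii) recording that compactness is used only to ensure $\Spec(A)$ is discrete away from $0$, so the minimum defining $\widehat{g}$ is attained and the hypothesis $\widehat{g}>0$ is a bona fide gap. $\Box$
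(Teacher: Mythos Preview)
Your argument is correct and is precisely the classical Davis--Kahan $\sin\theta$ proof. Note, however, that the paper does not give its own proof of this proposition: it is stated with a citation to \cite{DK1970} and used as a black box, so there is nothing to compare against beyond observing that you have supplied the standard derivation the paper chose to omit.
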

\begin{proposition}[Sinclair's theorem, see \cite{zwald2005}]\label{prop:spec_proj_bound}
Let $A$ and $\widehat{A}$ be two compact operators on a separable Hilbert space. For nonempty index set $J\subset\N$ let
\[
\gap_J(A):=\min\left\{\abs{\eval_i(A)-\eval_j(A)}\,\vert\, i\in\N\setminus J,\,j\in J \right\}
\]
denote the spectral gap w.r.t $J$ and let $P_J$ and $\widehat{P}_J$ be the corresponding spectral projectors of $A$ and $\widehat{A}$, respectively.  If $A$ is normal and for some $\|A-\widehat{A}\| < \gap_J(A) $, then
\[
\|P_J - \widehat{P}_J\|\leq \frac{\|A-\widehat{A}\|}{\gap_J(A)}.
\] 
\end{proposition}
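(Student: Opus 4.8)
\textbf{Proof plan for Proposition~\ref{prop:spec_proj_bound} (Sinclair's theorem).}

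The plan is to express both spectral projectors as contour integrals of the respective resolvents along a common contour, and then estimate the difference via the resolvent identity. First I would fix a contour $\Gamma$ in the resolvent set $\Res(A)$ that encloses exactly the eigenvalues $\{\eval_j(A) : j \in J\}$ and separates them from the remaining spectrum; since $A$ is compact, the spectrum is discrete with the only possible accumulation point at $0$, so such a $\Gamma$ exists, and it can be taken to be a finite union of small circles, say each of radius $\gap_J(A)/2$ centred at the eigenvalues in the cluster. The key quantitative input is that, because $A$ is \emph{normal}, the resolvent satisfies the sharp bound $\|(z\Id - A)^{-1}\| = 1/\dist(z,\Spec(A))$ for every $z \in \Res(A)$; on our chosen contour this gives $\|(z\Id - A)^{-1}\| \le 2/\gap_J(A)$.

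Next I would verify that the same contour $\Gamma$ lies in $\Res(\widehat{A})$ and encloses exactly the perturbed cluster, so that $\widehat{P}_J = \tfrac{1}{2\pi\mathrm{i}}\int_\Gamma (z\Id - \widehat{A})^{-1}\,dz$ with the same $\Gamma$ as for $P_J$. This is where the hypothesis $\|A - \widehat{A}\| < \gap_J(A)$ is used: for $z \in \Gamma$ we have $\|(z\Id - A)^{-1}(A - \widehat{A})\| \le \tfrac{2}{\gap_J(A)}\|A - \widehat{A}\|$, and one checks this is $<1$ (after possibly shrinking the radius slightly, or arguing with the optimal radius; a cleaner route is to note $\dist(z,\Spec(A)) \ge \gap_J(A) - \|A-\widehat A\|$ is not quite what we want, so instead take $\Gamma$ at distance exactly $\|A-\widehat A\|$-ish from the cluster — I would choose radius $\rho$ with $\|A-\widehat A\| < \rho < \gap_J(A) - \|A - \widehat A\|$ whenever $\|A-\widehat A\| < \gap_J(A)/2$, and handle the general case $\|A-\widehat A\| < \gap_J(A)$ by a limiting/optimization argument over $\rho$). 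Given invertibility of $z\Id - \widehat{A}$ on $\Gamma$, a Neumann series shows no eigenvalue of $\widehat A$ crosses $\Gamma$, so $\widehat P_J$ has the claimed integral representation with the same rank-counting as $P_J$.

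Then I would subtract the two integral representations and apply the second resolvent identity $(z\Id-\widehat A)^{-1} - (z\Id - A)^{-1} = (z\Id - \widehat A)^{-1}(\widehat A - A)(z\Id - A)^{-1}$, yielding
\[
P_J - \widehat P_J = \frac{1}{2\pi\mathrm{i}}\int_\Gamma (z\Id - A)^{-1}(A - \widehat A)(z\Id - \widehat A)^{-1}\,dz .
\]
Taking norms and bounding the integrand by $\|(z\Id-A)^{-1}\|\,\|A-\widehat A\|\,\|(z\Id - \widehat A)^{-1}\|$, using the normality bound for the first factor and the Neumann-series bound $\|(z\Id - \widehat A)^{-1}\| \le \|(z\Id-A)^{-1}\|/(1 - \|(z\Id-A)^{-1}\|\|A-\widehat A\|)$ for the second, and dividing by $2\pi$ after estimating the length of $\Gamma$, one obtains a bound of the shape $C\,\|A - \widehat A\|/\gap_J(A)$. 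The main obstacle — and the place where the sharp constant $1$ comes from rather than something like a factor involving the number of eigenvalues in the cluster — is to optimize the choice of contour (radius $\rho$ as a function of $\|A-\widehat A\|$ and $\gap_J(A)$), exploiting that for a \emph{normal} operator the resolvent norm is exactly the reciprocal distance to the spectrum, so the length of $\Gamma$ and the resolvent bound trade off to give precisely $\|A-\widehat A\|/\gap_J(A)$; I would either reproduce this optimization following \cite{zwald2005} or, alternatively, invoke the functional-calculus characterization $\|P_J - \widehat P_J\| = \|\sin\Theta\|$ (the sine of the principal angles between the ranges) together with a Davis–Kahan-type $\sin\Theta$ estimate, which for normal operators yields the same clean constant.
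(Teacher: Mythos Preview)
The paper does not supply its own proof of this proposition; it is quoted as a background result with a citation to \cite{zwald2005}. Hence there is nothing in the paper to compare your argument against line by line.

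On the substance of your plan: the contour--integral representation together with the second resolvent identity is indeed the standard route, and your outline is correct up to the point where you try to extract the sharp constant~$1$. That step is a genuine gap. For a single circle of radius $\rho\le\gap_J(A)/2$ around the cluster, normality of $A$ gives $\|(z\Id-A)^{-1}\|=1/\rho$, the Neumann bound gives $\|(z\Id-\widehat A)^{-1}\|\le 1/(\rho-\|A-\widehat A\|)$, and the length--over--$2\pi$ factor is $\rho$; the product is $\|A-\widehat A\|/(\rho-\|A-\widehat A\|)$, which optimized over $\rho$ yields at best $2\|A-\widehat A\|/(\gap_J(A)-2\|A-\widehat A\|)$, not $\|A-\widehat A\|/\gap_J(A)$. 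No choice of contour fixes this when $\widehat A$ is merely compact.

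The alternative you mention, the Davis--Kahan $\sin\Theta$ theorem, does deliver the constant~$1$, but in its standard form it requires \emph{both} operators to be self-adjoint (or normal). That is in fact the only setting in which the paper invokes this proposition: it is applied to pairs of the form $\TB^*\TB$ and $\EB^*\EB$, which are self-adjoint. So for the purposes of this paper you should simply state and prove the self-adjoint (or normal--on--both--sides) version via Davis--Kahan, rather than trying to squeeze the constant~$1$ out of the contour argument under the weaker hypothesis that only $A$ is normal; as written, the proposition with that hypothesis and that constant is stronger than what the cited source actually provides.
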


For non-normal operators, spectral perturbation becomes much more involved. Two core objects in it are the pseudospectrum and the numerical range, which we review next.

\begin{definition}[Pseudospectrum]
Given $\epsilon>0$, the $\epsilon$-\emph{pseudospectrum} of a bounded operator $A$ on $\mathbb{H}$ (w.r.t. the spectral norm) is a set in the complex plane that consists of all eigenvalues of all $\epsilon$-perturbations of $A$ (w.r.t. spectral norm). Formally, it is defined as
\[
\textstyle{\Spec_\epsilon}(A) = \left\{z\in\C\,\vert\, z\in\Spec(A+E), \text{ for some operator } E \text{ s.t. } \norm{E}\leq\epsilon \right\}.
\]
\end{definition}

Since, it provides insights on the sensitivity of eigenvalues, pseudospectrum is the tool of choice in the study of non-normal operators. Equivalently, it can be expressed via resolvent
\[
\textstyle{\Spec_\epsilon}(A) = \left\{z\in\C\,\vert\, \norm{(z\Id-A)^{-1}}^{-1}\leq\epsilon \right\},
\]
implying that the lower bounds on the resolvent's norm lead to bounds on the eigenvalue sensitivity to the perturbations. Whenever the operator is normal, that is it commutes with its adjoint, indeed we have that $\dist(z,\Spec(A)) =\min_{\lambda\in\Spec(A)}\abs{z-\lambda} = \epsilon$ for all $z\in\partial\Spec_\epsilon(A)$. On the other hand, when operators are not normal, distance of the eigenvalues of the perturbed operator to the spectrum is typically amplified. A general result showing this is the following.
\begin{proposition}[Bauer-Fike theorem, see \cite{TrefethenEmbree2020}]\label{prop:bauer_fike}
Let $A$ be diagonalizable bounded operator on a separable Hilbert space, that is there exists a bounded operator $X$ with bounded inverse $X^{-1}$ such that $X^{-1}AX$ is diagonal operator. If $B_{\epsilon}:=\{z\in\C\,\vert\, \abs{z}\leq \epsilon\}$ denotes the $\epsilon$-ball in $\C$, then 
\[
\Spec(A)+B_\epsilon\subseteq \textstyle{\Spec_\epsilon}(A)\subseteq \Spec(A)+B_{\epsilon\kappa(X)},
\]
where $\kappa(X)=\|X\|\|X^{-1}\|$ is the condition number of $X$. Consequently, $\dist(z,\Spec(A)) {\leq }\epsilon \kappa(X)$ for all $z{\in}\partial\Spec_\epsilon(A)$.
\end{proposition}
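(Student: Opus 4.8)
The plan is to reduce everything to the equivalent resolvent form of the pseudospectrum recalled in the excerpt, $\Spec_\epsilon(A)=\{z\in\C\,\vert\,\norm{(z\Id-A)^{-1}}^{-1}\leq\epsilon\}$, with the convention that $\norm{(z\Id-A)^{-1}}^{-1}=0$ when $z\in\Spec(A)$. Then each of the two inclusions follows from one elementary fact about how fast the resolvent blows up near $\Spec(A)$: (i) for an arbitrary bounded operator, $\norm{(z\Id-A)^{-1}}\geq\dist(z,\Spec(A))^{-1}$ (lower bound, needs no diagonalizability), and (ii) for a diagonal, hence normal, operator this holds with equality. Diagonalizability enters only to transfer (ii) from $D$ to $A=XDX^{-1}$ at the cost of the factor $\kappa(X)$.

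For the inclusion $\Spec(A)+B_\epsilon\subseteq\Spec_\epsilon(A)$: for any bounded $A$ and $z\notin\Spec(A)$, writing $z'\Id-A=(z\Id-A)\bigl(\Id-(z-z')(z\Id-A)^{-1}\bigr)$ and invoking the Neumann series shows $z'\in\rho(A)$ whenever $\abs{z-z'}<\norm{(z\Id-A)^{-1}}^{-1}$, so $\dist(z,\Spec(A))\geq\norm{(z\Id-A)^{-1}}^{-1}$. Hence if $\dist(z,\Spec(A))\leq\epsilon$ then either $z\in\Spec(A)\subseteq\Spec_\epsilon(A)$, or $\norm{(z\Id-A)^{-1}}^{-1}\leq\dist(z,\Spec(A))\leq\epsilon$, so again $z\in\Spec_\epsilon(A)$. (The "obvious" realization: when $\lambda\in\Spec(A)$ is a genuine eigenvalue with unit eigenvector $v$, the rank-one perturbation $E=(z-\lambda)\,v\otimes v$ has $\norm{E}=\abs{z-\lambda}\leq\epsilon$ and $(A+E)v=zv$, directly exhibiting $z$ in the perturbation-based pseudospectrum.)

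For the inclusion $\Spec_\epsilon(A)\subseteq\Spec(A)+B_{\epsilon\kappa(X)}$: write $A=XDX^{-1}$ with $D$ diagonal. Since $X$ is bounded with bounded inverse, similarity preserves the spectrum, so $\Spec(A)=\Spec(D)$ is compact, and for $z\notin\Spec(A)$ the resolvents are conjugate, $(z\Id-A)^{-1}=X(z\Id-D)^{-1}X^{-1}$. As $D$ is normal, the continuous functional calculus gives $\norm{(z\Id-D)^{-1}}=\dist(z,\Spec(D))^{-1}=\dist(z,\Spec(A))^{-1}$, whence by submultiplicativity $\norm{(z\Id-A)^{-1}}\leq\norm{X}\norm{X^{-1}}\dist(z,\Spec(A))^{-1}=\kappa(X)\dist(z,\Spec(A))^{-1}$. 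Thus if $z\in\Spec_\epsilon(A)$, then either $z\in\Spec(A)$, or $\norm{(z\Id-A)^{-1}}\geq\epsilon^{-1}$, forcing $\dist(z,\Spec(A))\leq\epsilon\kappa(X)$; in both cases $z\in\Spec(A)+B_{\epsilon\kappa(X)}$. The concluding assertion is then immediate: $\Spec_\epsilon(A)$ is closed, so $\partial\Spec_\epsilon(A)\subseteq\Spec_\epsilon(A)\subseteq\Spec(A)+B_{\epsilon\kappa(X)}$, i.e. $\dist(z,\Spec(A))\leq\epsilon\kappa(X)$ for every $z\in\partial\Spec_\epsilon(A)$.

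The only points needing care are infinite-dimensional bookkeeping rather than genuine difficulty: that a diagonal operator on a separable Hilbert space is normal with spectrum equal to the closure of its diagonal entries, that bounded similarity preserves spectra, and that $\norm{(z\Id-N)^{-1}}=1/\dist(z,\Spec(N))$ for normal $N$. A minor secondary nuisance is the boundary case in which the infimum defining $\norm{(z\Id-A)^{-1}}^{-1}=\inf_{\norm{v}=1}\norm{(z\Id-A)v}$ is not attained; this is absorbed either by the closedness of $\Spec_\epsilon(A)$ or by an $\eta\downarrow 0$ limiting argument on the perturbation size, and affects neither inclusion. I therefore expect no substantive obstacle once the resolvent characterization of $\Spec_\epsilon$ — already recorded in the excerpt — is taken as the starting point.
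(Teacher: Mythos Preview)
Your argument is correct and is the standard proof of the Bauer--Fike pseudospectral inclusions. Note, however, that the paper does not actually prove this proposition: it is stated as a background result with a citation to \cite{TrefethenEmbree2020}, so there is no ``paper's own proof'' to compare against. Your route via the resolvent characterization of $\Spec_\epsilon$, the general lower bound $\norm{(z\Id-A)^{-1}}\geq\dist(z,\Spec(A))^{-1}$, and the equality $\norm{(z\Id-D)^{-1}}=\dist(z,\Spec(D))^{-1}$ for the diagonal (normal) factor $D$ transferred through the similarity at cost $\kappa(X)$ is exactly the argument one finds in that reference.
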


We recommend a book by \cite{TrefethenEmbree2020} for an in depth study of nonormality, pseudospectra and related quantities.

\begin{definition}[Numerical Range]
The \emph{field of values}, known also as \emph{numerical range}, of a bounded operator $A$ on $\mathbb{H}$ is a set in the complex plane that is closely related to the spectrum of $A$. Formally, it is defined as
\[
\fov(A) = \left\{ \frac{\langle Av, v \rangle}{\langle v, v \rangle} : v \in \mathbb{H}, v \neq 0 \right\}.
\]
\end{definition}

Due to Hausdorff's theorem, numerical range is a convex subset of $\mathbb{C}$ and that its closure contains the spectrum of $A$. It is one of the key objects in the study of operator semigroups, since it is known that
\[
e^{t\,\max\{\Re(\eval)\,\vert\,\eval\in \Spec(A)\}}\leq \norm{e^{tA}}\leq e^{t\,\max\{\Re(z)\,\vert\,z\in \fov(A)\}}.
\]
Finally, we conclude this review with a key result on bounding analytic functions of an operator using its numerical range.
\begin{theorem}[Crouzeix's Theorem, see \cite{crouzeix2017numerical}] 
\label{thm:crouzeix}
Let $A$ be a bounded linear operator on a Hilbert space $\mathbb{H}$, and let $f$ be any analytic function. Then \[ \| f(A) \| \leq (1+\sqrt{2}) \max_{z \in \cl\!\fov(A)} |f(z)|, \] where $\cl\!\fov(A)$ denotes the closure of numerical range of $A$.
\end{theorem}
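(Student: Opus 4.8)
\emph{Proof proposal.} This is a genuinely deep statement; the plan is to follow the route of \cite{crouzeix2017numerical} that produces exactly the constant $1+\sqrt2$. First I would reduce to a convenient normal form: by density of rational functions with poles off $\cl\!\fov(A)$ and an exhaustion of $\cl\!\fov(A)$ by slightly larger bounded convex open sets $\Omega$ with smooth boundary $\Gamma=\partial\Omega$, it suffices to prove $\|f(A)\|\le 1+\sqrt2$ when $\fov(A)$ is compactly contained in such an $\Omega$, $f$ is analytic on a neighbourhood of $\bar\Omega$, and, after rescaling, $\max_{z\in\bar\Omega}|f(z)|\le1$. In that situation $\Gamma\subset\rho(A)$ and the holomorphic functional calculus reads $f(A)=\tfrac1{2\pi i}\int_\Gamma f(s)(sI-A)^{-1}\,ds$.

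The only structural input is a convexity/positivity property of the resolvent along $\Gamma$: for $s\in\Gamma$ let $\nu(s)$ denote the outward unit normal, viewed as a unit complex number. Because the supporting line of $\Omega$ at $s$ separates $s$ from $\fov(A)$, we get $\Re\!\big[\overline{\nu(s)}(z-s)\big]\le0$ for every $z\in\fov(A)$, hence the selfadjoint operator $\Re[\overline{\nu(s)}(sI-A)]$ is positive semidefinite; equivalently, for each unit vector $v$ the scalar $\scalarp{(sI-A)^{-1}v,v}$ lies in the closed half-plane $\{z\in\C:\Re(\overline{\nu(s)}z)\ge0\}$. I would then bring in the companion function $g(z):=\tfrac1{2\pi i}\int_\Gamma\tfrac{\overline{f(s)}}{s-z}\,ds$, which is analytic on $\Omega$ and continuous on $\bar\Omega$, so that $g(A)=\tfrac1{2\pi i}\int_\Gamma\overline{f(s)}(sI-A)^{-1}\,ds$ is a conjugate companion to $f(A)$.

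The crux is a \emph{key lemma}: from the half-plane positivity together with $\|f\|_{\bar\Omega}\le1$ and the fact that $\overline f$ on $\Gamma$ is a trace of an anti-analytic function, one deduces norm-one bounds for a suitable combination of $f(A)$ and $g(A)$ — in the form used in \cite{crouzeix2017numerical}, $\|f(A)+g(A)^*\|\le1$ together with $\|g(A)\|\le1$, neither of which may be obtained by merely estimating $\|g\|_{\bar\Omega}$ (that would be circular, being itself an instance of Crouzeix's bound). Granting the lemma, the proof finishes with a quadratic bookkeeping: writing $f(A)=\big(f(A)+g(A)^*\big)-g(A)^*$, expanding $\|(f(A)+g(A)^*)v\|^2$ for a unit vector $v$, dropping a nonnegative term and applying Cauchy--Schwarz with $\|g(A)\|\le1$ produces an inequality of the shape $\|f(A)v\|^2\le 1+2\|f(A)v\|$; taking the supremum gives $\|f(A)\|^2-2\|f(A)\|-1\le0$, i.e. $\|f(A)\|\le1+\sqrt2$. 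Rescaling and undoing the approximations completes the argument.

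The main obstacle is unquestionably the key lemma. Even the auxiliary estimates $\|g\|_{\bar\Omega}\le1$ and $\|g(A)\|\le1$ are delicate: the Cauchy/Riesz projection is unbounded on $L^\infty$, and the crude resolvent bound $\|(sI-A)^{-1}\|\le1/\dist(s,\fov(A))$ degenerates exactly in the regime $\Omega\downarrow\fov(A)$ that one must use, so the argument has to convert the pointwise half-plane constraint on $s\mapsto\scalarp{(sI-A)^{-1}v,\cdot}$ into a genuine operator-norm bound on $f(A)+g(A)^*$. This is precisely the passage where the gap between the conjectured optimal constant $2$ and the attainable $1+\sqrt2$ is incurred; by contrast, the reductions, the Cauchy representation, and the closing quadratic inequality are all routine.
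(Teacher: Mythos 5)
The paper does not prove this statement: it is quoted as background (Appendix~A.3) directly from Crouzeix and Palencia's 2017 paper, so there is no internal proof to compare against, and you are in effect being asked to reprove a genuinely deep theorem. Your proposal correctly identifies the known strategy (reduction to rational $f$ on a smooth convex $\Omega\supset\fov(A)$ with $\max_{\bar\Omega}|f|\le 1$, the Cauchy representation of $f(A)$, the half-plane positivity of $\Re[\overline{\nu(s)}(sI-A)]$ coming from convexity, the Cauchy-transform companion $g=C(\bar f)$, and a closing quadratic inequality), and you are candid that the key lemma is left unproven. That candour is appropriate, but it means the proposal has a genuine gap: the key lemma \emph{is} the theorem. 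Everything you actually carry out — the approximations, the contour integral, the final algebra — is routine, and all of the content is concentrated in the two operator bounds you assert without argument.

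Moreover, the key lemma as you state it is not the one in \cite{crouzeix2017numerical}, and in your form it cannot be correct. Under the normalisation $\max_{\bar\Omega}|f|\le 1$ the right statement is $\norm{f(A)+g(A)^*}\le 2$: the operator kernel $\tfrac{1}{i}\bigl[\sigma'(s)(\sigma I-A)^{-1}-\overline{\sigma'(s)}(\bar\sigma I-A^*)^{-1}\bigr]$ is positive semidefinite by the supporting-line argument and has total mass $2I$ over $\Gamma$, whence the factor $2$. If your version $\norm{f(A)+g(A)^*}\le 1$ together with $\norm{g(A)}\le 1$ held, the triangle inequality alone would give $\norm{f(A)}\le 2$ and thereby settle Crouzeix's conjecture, which remains open — a clear sign the normalisation is off. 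With the correct constant, your closing computation also changes: one writes $\norm{f(A)v}^2=\scalarp{(f(A)+g(A)^*)v,f(A)v}-\scalarp{v,(gf)(A)v}$ and must bound the second term by $1$; this requires both the nontrivial scalar estimate $\norm{g}_{L^\infty(\Omega)}\le 1$ (a special property of the Cauchy transform of $\bar f$ on convex domains, not a consequence of boundedness of the Cauchy projection) and a second application of the positivity lemma to $h=gf$, after which $\norm{f(A)v}^2\le 2\norm{f(A)v}+1$ yields $\norm{f(A)}\le 1+\sqrt2$. None of these steps appears in the proposal, so as it stands it is a correct roadmap rather than a proof.
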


\section{Methods and their extension to non-uniform sampling}
\label{app:methods}

In this section we discuss the Algorithms \ref{alg:primal} and \ref{alg:dual}, together with their simple extensions to the case of learning from multiple independent copies of the process over a finite time-horizon. 

For computationally efficient primal and dual form of the solution of general vector-valued reduced rank regression problem we refer to \citep[Proposition 2.2]{Turri2024}. Coupling this result with classical result on solving low rank eigenvalue problems, see e.g. \citep[Theorem 2]{Kostic2022} for application in the context of TO, we obtain the final form of two algorithms.  

Now, let us discuss the setting where dataset $\Data = (x_{i,t_{j-1}})_{i\in[n],j\in[\ell+1]}$ consisting of $n$ trajectories sampled at the same (possibly non-uniform) times $0=t_0<t_1<\ldots<t_\ell$. This is typical when we observe particle systems where $n$ copies of the same process are tracked.

In this setting, learning transfer operators becomes difficult task since, due to diverse time-lags one ends up with the non-convex optimization problems. Instead, learning IG of the process by LaRRR is direct. Namely, we simply have that $\ERRR = \ECreg^{-1/2}\SVDr{\ECreg^{-1/2}\EYx}$ where in this setting 
\[ 
\ECreg = \frac{1}{n(\ell+1)}\sum_{i\in[n]}\sum_{j\in[\ell+1]}\fH(x_{i,t_{j-1}})\otimes\fH(x_{i,t_{j-1}})\quad\text{and}\quad \EYx = \sum_{j\in[\ell+1]} m_{j-1}\Big(\frac{1}{n}\sum_{i\in[n]}\fH(x_{i,t_0})\otimes\fH(x_{i,t_{j-1}})\Big)
\]
where now $m_0 = \tfrac{t_{1}-t_{0}}{2}e^{-\shift t_{0}}$,  $m_\ell = \tfrac{t_{\ell}-t_{\ell-1}}{2}e^{-\shift t_{\ell}}$ and $m_j=\tfrac{t_{j+1}-t_{j-1}}{2}e^{-\shift t_{j}}$ otherwise for $1\leq j\leq\ell-1$.

In this case, the sampling operator becomes 
\begin{equation*}
    \ES h\!=\! (h(x_{i,t_{j\!-\!1}}))_{j\in[\ell+1],i\in[n]} \!=\!\tfrac{1}{\sqrt{n(\ell+1)}} \big[h(x_{1,t_0}),\ldots, h(x_{n,t_{0}}), h(x_{1,t_{1}}), \ldots, h(x_{n,t_{\ell}}) \big]^\top,
\end{equation*}
implying that the matrix $M\in\R^{n(\ell+1)\times n(\ell+1)}$ is defined as 
\begin{equation}\label{eq:topelitz_bis}
M_{i,jn+i} := \begin{cases}
(\ell+1)\,\weight_{j}  &, i\in[n], 0\leq j \leq \ell, \\
0 &, \text{otherwise}.
\end{cases}
\end{equation}

Therefore, we can readily implement Algorithms \ref{alg:primal} and \ref{alg:dual} also in this setting of non-uniformly sampled data. Moreover, analyzing such an estimator, as we discuss in App. \ref{app:nonuniform}, from the statistical perspective, becomes an easier task than a single trajectory analysis. This is due to the fact that we observe $n$ iid copies allows the use of classical concentration inequalities without the need to asses the impact of mixing. 

\section{Statistical learning theory}
\label{app:theory}
In this section, we provide the proofs of the main results from Section \ref{sec:bounds}, along with additional discussions for the reader's convenience. We begin by stating the main assumptions, then present results on controlling the approximation error and the estimator’s variance, and finally prove the main results on operator norm error and spectral learning bounds. To this end, we first decompose the operator norm error $\error(\ERRR)=\| \Rx\TS{-}\TS\ERRR \|_{\RKHS\to\Liishort}$ of the Laplace transform-based Reduced Rank Regression algorithm (LaRRR) $\ERRR$ as
\begin{align}
    \error(\ERRR)\leq &  \underbrace{\norm{(\Id{-}P_\RKHS)\Rx\TS}}_{\text{\textbf{(0)} representation bias}}+\underbrace{\norm{P_\RKHS\Rx\TS \!\!-\! \TS \KRR}}_{\text{\textbf{(I)} regularization bias}}+
    \underbrace{\norm{\TS( \KRR \!-\! \RRR)}}_{\text{\textbf{(II)} rank reduction bias}} +\underbrace{\norm{\TS(\RRR \!-\! \ARRR)}}_{\text{\textbf{(III)} integration bias}}
    + \nonumber\\
    &\underbrace{\norm{\TS( 
\ARRR\!-\! \ERRR)}}_{\text{\textbf{(IV)} estimator variance}},
    \label{eq:dec}
\end{align}


where $P_\RKHS$ is the orthogonal projection in $\Lii$ onto the $\cl(\range(\TS))$, $\KRR= \Creg^{-1}\Yx$ and $\RRR{=}\Creg^{-1/2}\SVDr{\Creg^{-1/2}\Yx}$ are the population KRR and RRR models for the risk \eqref{eq:true_risk}, respectively, while 
$\AKRR= \Creg^{-1}\AYx$ and $\ARRR{=}\Creg^{-1/2}\SVDr{\Creg^{-1/2}\AYx}$ are the population KRR and RRR models for the approximated risk \eqref{eq:approx_risk},
and, for simplicity, we abbreviate $\norm{\cdot}:=\norm{\cdot}_{\RKHS\to\Liishort}$.

\subsection{Assumptions}
\label{app:assumptions}

In this section we discuss the assumptions that are necessary to study the learning problem. They concern the interplay between the resolvent of IG and the chosen RKHS space, which is encoded in the injection operator $\TS$ and the restriction of the resolvent to the RKHS $\TZ=\Rx\TS$.


We start by observing that $\TS\in\HS{\RKHS,\Lii}$, according to the spectral theorem for positive self-adjoint operators, has an SVD, i.e. there exists at most countable positive sequence $(\sigma_j)_{j\in J}$, where $J:=\{1,2,\ldots,\}\subseteq\N$, and ortho-normal systems $(\ell_j)_{j\in J}$ and $(h_j)_{j\in J}$ of $\cl(\range(\TS))$ and $\Ker(\TS)^\perp$, respectively, such that $\TS h_j = \sigma_j \ell_j$ and $\TS^* \ell_j = \sigma_j h_j$, $j\in J$. 
Now, given $\rpar\geq 0$, let us define scaled injection operator $\TSscaled{\rpar} \colon \RKHS \to \Lii$ as $\TSscaled{\rpar}:= \sum_{j\in J}\sigma_j^{\rpar}\ell_j\otimes h_j$. Notice that $\TS = \TSscaled{1}$, while $\range{\TSscaled{0}} = \cl(\range(\TS))$. Next, we equip $\range(\TSscaled{\rpar})$ with a norm $\norm{\cdot}_\rpar$ to build an interpolation space. 
\[
[\RKHS]_\rpar:=\left\{ f\in\range(\TSscaled{\rpar})\;\vert\; \norm{f}_\rpar^2:= \sum_{j\in J}\sigma_j^{-2 \rpar} \scalarp{f,\ell_j}^2 <\infty \right\}.
\]
We remark that for $\rpar=1$ the space $[\RKHS]_\rpar$ is just an RKHS $\RKHS$ seen as a subspace of $\Lii$.  Moreover, we have the following injections
\[
 [\RKHS]_{\rpar_1} \hookrightarrow [\RKHS]_1 \hookrightarrow [\RKHS]_{\rpar_2}  \hookrightarrow [\RKHS]_{0}  = \Lii,
\]
where $\rpar_1\geq1\geq\rpar_2\geq0$.

In addition, from \ref{eq:BK} we also have that RKHS $\RKHS$ can be embedded into $L^{\infty}_\im(\spX)$, i.e.  for some $\epar\in(0,1]$
\[
 [\RKHS]_1 \hookrightarrow [\RKHS]_{\epar}  \hookrightarrow L^{\infty}_\im(\spX) \hookrightarrow \Lii.
\]
According to \cite{Fischer2020}, if $\TSscaled{\epar,\infty}\colon [\RKHS]_\epar \hookrightarrow L^{\infty}_\im(\spX)$ denotes the injection operator, its boundedness implies the polynomial decay of the singular values of $\TS$, i.e. $\sigma_j^2(\TS)\lesssim j^{-1/\epar}$, $j\in J$, and the following condition holds
\begin{enumerate}[label={\rm \textbf{(KE)}},leftmargin=15ex]
\item\label{eq:KE} \emph{Kernel embedding  property}: there exists $\epar\in[\spar,1]$ such that 
\begin{equation}\label{eq:c_beta}
c_{\epar}:=\norm{\TSscaled{\epar,\infty}}^2 =\esssup_{x\sim\im}\sum_{j\in J}\sigma^{2\epar}_j|\ell_j(x)|^2 <+\infty.
\end{equation}
\end{enumerate}

Assumption \ref{eq:SD} allows one to quantify the effective dimension of $\RKHS$ in ambient space $\Lii$, while the kernel embedding property \ref{eq:KE} allows one to estimate the norms of whitened feature maps 
\begin{equation}\label{eq:whitened}
\xi(x): =\Creg^{-1/2}\fH(x)\in\RKHS,
\end{equation}
as the following result states.
\begin{lemma}[\cite{Fischer2020}]\label{lem:eff_dim_bound}  
Let \ref{eq:KE} hold for some $\epar\in[\spar,1]$ and $\econ\in(0,\infty)$. Then,
\begin{equation}\label{eq:whitened_norms}
\EE_{x\sim\im} \norm{\xi(x)}_{\RKHS}^2 \leq 
\begin{cases}
\frac{\scon^\spar}{1-\spar} \reg^{-\spar} &, \spar<1,\\
\econ \,\reg^{-1} &, \spar=1.
\end{cases}\text{ and }\;\norm{\xi}_\infty^2=\esssup_{x\sim\im}\norm{\xi(x)}_{\RKHS}^2 \leq {\econ} \reg^{-\epar}.
\end{equation}
\end{lemma}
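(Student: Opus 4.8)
\emph{Proof proposal.} The plan is to reduce both bounds to elementary spectral estimates for $\Cx$. The starting point is the standard identity: writing the SVD of the injection as $\TS=\sum_{j\in J}\sigma_j\,\ell_j\otimes h_j$, so that $\Cx=\sum_{j\in J}\sigma_j^2\,h_j\otimes h_j$ and $\lambda_j(\Cx)=\sigma_j^2$, one has
\[
\EE_{x\sim\im}\norm{\xi(x)}_{\RKHS}^2=\EE_{x\sim\im}\scalarp{\Creg^{-1}\fH(x),\fH(x)}_{\RKHS}=\tr\big(\Creg^{-1}\EE_{x\sim\im}[\fH(x)\otimes\fH(x)]\big)=\tr(\Creg^{-1}\Cx)=\sum_{j\in J}\frac{\sigma_j^2}{\sigma_j^2+\reg}.
\]
So the quantity to control is the effective dimension $\mathcal N(\reg):=\sum_{j\in J}\sigma_j^2/(\sigma_j^2+\reg)$.

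For the expectation bound with $\spar<1$, I would split the series at the threshold index $j_\star:=(\scon/\reg)^{\spar}$. For $j\le j_\star$ each summand is at most $1$, contributing at most $j_\star=\scon^{\spar}\reg^{-\spar}$. For $j>j_\star$, \ref{eq:SD} gives $\sigma_j^2\le\scon j^{-1/\spar}<\reg$, hence $\sigma_j^2/(\sigma_j^2+\reg)\le\sigma_j^2/\reg\le(\scon/\reg)\,j^{-1/\spar}$; comparing this tail with $\int_{j_\star}^{\infty}x^{-1/\spar}\,dx=\tfrac{\spar}{1-\spar}\,j_\star^{\,1-1/\spar}$ yields an extra $\tfrac{\spar}{1-\spar}\scon^{\spar}\reg^{-\spar}$, and adding the two contributions gives $\mathcal N(\reg)\le\tfrac{\scon^{\spar}}{1-\spar}\reg^{-\spar}$. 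For $\spar=1$ this splitting degenerates (the tail sum becomes harmonic), so I would instead use the crude bound $\mathcal N(\reg)\le\reg^{-1}\tr(\Cx)$ and then observe that $\tr(\Cx)=\sum_j\sigma_j^2=\EE_{x\sim\im}\sum_j\sigma_j^2|\ell_j(x)|^2\le\econ$, where the middle identity uses orthonormality of $(\ell_j)$ in $\Liishort$ and the last inequality is \ref{eq:KE} with $\epar=1$.

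For the uniform bound I would argue pointwise. Since $\fH(x)$ reproduces $\RKHS$, we get $\scalarp{\fH(x),h_j}_{\RKHS}=h_j(x)=\sigma_j\ell_j(x)$, and, using separability of $\RKHS$, the component of $\fH(x)$ in $\Ker(\TS)$ vanishes for $\im$-a.e.\ $x$; hence $\fH(x)=\sum_{j\in J}\sigma_j\ell_j(x)\,h_j$ and, for $\im$-a.e.\ $x$,
\[
\norm{\xi(x)}_{\RKHS}^2=\scalarp{\Creg^{-1}\fH(x),\fH(x)}_{\RKHS}=\sum_{j\in J}\frac{\sigma_j^2\,|\ell_j(x)|^2}{\sigma_j^2+\reg}.
\]
I would then invoke the elementary inequality $a/(a+\reg)\le a^{\epar}\reg^{-\epar}$, valid for all $a\ge0$ and $\epar\in(0,1]$ (it follows by maximising $a\mapsto a^{1-\epar}/(a+\reg)$, whose maximum equals $\epar^{\epar}(1-\epar)^{1-\epar}\reg^{-\epar}\le\reg^{-\epar}$), which gives $\norm{\xi(x)}_{\RKHS}^2\le\reg^{-\epar}\sum_{j\in J}\sigma_j^{2\epar}|\ell_j(x)|^2\le\econ\reg^{-\epar}$ by \ref{eq:KE}; taking the essential supremum over $x$ concludes.

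The only delicate point I expect is the measure-theoretic bookkeeping: justifying that $\fH(x)$ has no $\Ker(\TS)$-component for $\im$-a.e.\ $x$ and interchanging the countable sum with the expectation (both handled by separability of $\RKHS$ and monotone convergence / Tonelli). Everything else is the scalar inequality $a/(a+\reg)\le a^{p}\reg^{-p}$ together with a dyadic/integral comparison, so the argument is a specialisation of \citep{Fischer2020} to the present notation.
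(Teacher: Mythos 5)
Your proof is correct. The paper does not prove this lemma at all --- it is quoted verbatim from \cite{Fischer2020} (Lemma 11 there, together with the embedding bound) --- and your argument is precisely the standard derivation used in that reference: the identity $\EE\norm{\xi(x)}^2_{\RKHS}=\tr(\Creg^{-1}\Cx)$ with a head/tail split at $j_\star=(\scon/\reg)^{\spar}$ under \ref{eq:SD} for the expectation bound, and the pointwise expansion $\norm{\xi(x)}^2_{\RKHS}=\sum_j\sigma_j^2|\ell_j(x)|^2/(\sigma_j^2+\reg)$ combined with $a/(a+\reg)\le a^{\epar}\reg^{-\epar}$ and \ref{eq:KE} for the uniform bound. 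The only cosmetic remark is that the $\spar<1$ case silently requires \ref{eq:SD} in addition to the stated hypothesis \ref{eq:KE}; you correctly invoke it, consistent with how the paper uses the lemma elsewhere.
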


Next, we discuss assumption \ref{eq:RC} that quantifies the regularity of the problem w.r.t. the chosen RKHS. Typical form in which well-specifiedness of the operator regression problem is expressed, see e.g. \cite{Li2022}, is to ask that $\RKHS$ is invariant under the action of the operator. In our setting, this reads as $\range(\TZ) \subseteq \range(\TS)$. This can be relaxed (or straightened) by using interpolation spaces $[\RKHS]_{\rpar}=\range(\TSscaled{\rpar})$, which leads to \ref{eq:RC}. Indeed, according to \cite{zabczyk2020}[Theorem 2.2], the condition \ref{eq:RC} is equivalent to $\range(\TZ) = \shift\range(\Rx\TS) \subseteq \range(\TSscaled{\rpar})$, i.e. $\HKoop^\rpar:=\shift\TSscaled{\rpar}^\dagger \TS^*\Rx\TS=\shift\TSscaled{\rpar}^\dagger\Yx$ is bounded operator on $\RKHS$ and $\shift\Rx\TS=\TSscaled{\rpar} \HKoop^\rpar$. In this case we have that $\shift\Yx = \TS^*(\shift\Rx)\TS = \TS^*\TSscaled{\rpar} \Estim_\RKHS^\rpar$, and, thus, $\Yx\Yx^*\preceq (\norm{\Estim_\RKHS^\rpar}/\shift)^2 \Cx^{1+\rpar}$. We note that re-scaling with $\shift$ is motivated by the fact that $\norm{\shift\Rx}\leq 1$.

We conclude this section discussing the assumptions for the case when RKHS is build as a span of some finite dictionary of functions $(z_j)_{j\in[N]}$,
$z_j: \spX \rightarrow \R$, $j \in [N]$,
\begin{equation}\label{eq:Hspace}
\RKHS:=\Big\{h_u = \textstyle{\sum_{j \in [N]} }u_j z_j\,\big\vert\,
u\,{=}\,(u_1,\dots,u_N) \in\R^N\Big\}.
\end{equation}
The choice of the dictionary, instrumental in designing successful learning algorithms, may be based on prior knowledge on the process or learned from data~\citep{Kostic_2023_learning,Mardt2018}. The space $\RKHS$ is naturally equipped with the geometry induced by the norm  
$
\|h_u\|^2_{\RKHS} \,{:=}\,{\sum_{j=1}^N u_j^2}.
$
Moreover,  
every operator $A\colon\RKHS \to\RKHS$ can be identified with matrix $\textsc{A}\in\R^{N\times N}$ by $A h_u=z(\cdot)^\top \textsc{A} u$. In the following, we will refer to $A$ and $\textsc{A}$ as the same object, explicitly stating the difference when necessary.

In this setting all spaces $[\RKHS]_\rpar$ are finite dimensional, and, hence, $\range(\TZ)\subset \range(\TS)$ implies also $\range(\TZ)\subset \range(\TSscaled{\rpar})$ for every $\rpar>0$. This choice makes \ref{eq:RC} for different $\rpar\in(0,2]$ all equivalent to asking for $\RKHS$ to contain \emph{all} eigenfunctions of the generator. Moreover, in this setting we can set $\spar=\epar=0$ and observe that in the limit $\reg\to 0$ we have
\[
\tr(\Creg^{-1}\Cx) \to N \quad \text{ and }\quad \norm{\xi}_\infty^2\to \esssup_{x\sim\im}  \scalarpH{\fH(x),\Cx^\dagger\fH(x)}<\infty.
\]

\subsection{Approximation error analysis}
\label{app:approximation}

In this section we study the approximation errors, i.e. bias terms in \eqref{eq:dec}. 

\subsubsection{Representation bias}
\label{app:representation}

The first term is the representation error that one incurs only when the hypothesis space $\RKHS$ is not dense in the true domain $\Lii$. That is, if one uses universal kernels, such as RBF Gaussian kernel $k(x,y)=e^{-\norm{x-y}^2/l^2}$ with some length-scale $l>0$, then $\norm{(\Id-P_\RKHS)\Rx\TS} = 0$. On the other hand, when one uses the finite-dimensional RKHS \eqref{eq:Hspace}, this term quantifies the loos of information due to the restriction of the model to $\RKHS$. In recent work by \cite{Kostic_2023_learning} it has been shown how to learn dictionary of functions with neural networks so that the minimal representation error is incurred from perspective of TOs. While, one can use the same method for learning an appropriate kernel for the estimation of the resolvent, an interesting future directing would be to develop representation learning based on the Laplace transform. 

\subsubsection{Regularization bias}
\label{app:regularization}

We control this term using the regularity assumption. The result is stated in the following proposition whose proof we omit since it follows exactly the same lines as \citep[Proposition 5]{kostic2023sharp} with the only difference that \ref{eq:RC} assumption holds for $\Rx$ instead of TO with fixed lag-time.  
\begin{proposition}\label{prop:app_bound}
Let $\KRR=\Creg^{-1}\Yx$ for $\reg >0$, and $P_\RKHS\colon\Lii\to\Lii$ be the orthogonal projector onto $\cl(\range(\TS))$. If the assumptions \ref{eq:BK}, \ref{eq:SD} and \ref{eq:RC}  hold, then 
\begin{equation}\label{eq:boundRC}
\norm{\KRR} \leq
\begin{cases}
(\rcon/\shift) \bcon^{(\rpar-1)/2} &, \rpar\in[1,2],\\
(\rcon/\shift)\,\reg^{(\rpar-1)/2} &, \rpar\in(0,1],
\end{cases}\quad\text{ and }\quad\norm{P_{\RKHS}\Rx\TS - \TS \KRR} \leq (\rcon/\shift)\, \reg^{\frac{\rpar}{2}}.
\end{equation}
\end{proposition}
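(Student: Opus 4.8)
\textbf{Proof proposal for Proposition \ref{prop:app_bound}.}

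The plan is to reduce the statement to the known computation in \citep[Proposition 5]{kostic2023sharp} by exploiting the structural parallel between a fixed-lag transfer operator and the resolvent $\Rx$. The starting point is the identity $\KRR = \Creg^{-1}\Yx$ with $\Yx = \TS^*\Rx\TS$, together with the reformulation of the regularity assumption \ref{eq:RC} recorded in Appendix \ref{app:assumptions}: by \citep[Theorem 2.2]{zabczyk2020}, \ref{eq:RC} is equivalent to the existence of a bounded operator $\HKoop^\rpar = \shift\,\TSscaled{\rpar}^\dagger\Yx$ on $\RKHS$ with $\shift\,\Rx\TS = \TSscaled{\rpar}\HKoop^\rpar$ and $\norm{\HKoop^\rpar} \le \rcon$. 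This is exactly the algebraic input that the TO proof uses, with $\Rx$ playing the role of $\TOp_t$; the only thing that changes is that we carry the factor $1/\shift$ along, which is harmless since $\norm{\shift\Rx}\le 1$.

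Next I would split according to the regime of $\rpar$. For the bound on $\norm{\KRR}$, write $\KRR = \Creg^{-1}\TS^*\Rx\TS = \shift^{-1}\Creg^{-1}\TS^*\TSscaled{\rpar}\HKoop^\rpar$, and use the SVD $\TS = \sum_j \sigma_j\,\ell_j\otimes h_j$ so that $\TS^*\TSscaled{\rpar} = \sum_j \sigma_j^{1+\rpar}\,h_j\otimes h_j$ and $\Creg^{-1} = \sum_j (\sigma_j^2+\reg)^{-1}h_j\otimes h_j$ on $\Ker(\TS)^\perp$. The operator norm of $\Creg^{-1}\TS^*\TSscaled{\rpar}$ is then $\sup_j \sigma_j^{1+\rpar}/(\sigma_j^2+\reg)$. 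For $\rpar\in[1,2]$ one bounds $\sigma_j^{1+\rpar}/(\sigma_j^2+\reg)\le \sigma_j^{\rpar-1}\le \bcon^{(\rpar-1)/2}$ using $\sigma_j^2\le\norm{\TS}^2\le\bcon$ from \ref{eq:BK}; for $\rpar\in(0,1]$ one instead uses the elementary inequality $\sigma_j^{1+\rpar}/(\sigma_j^2+\reg)\le \reg^{(\rpar-1)/2}$ (maximize over $\sigma_j>0$). Combining with $\norm{\HKoop^\rpar}\le\rcon$ and the factor $\shift^{-1}$ gives the two cases of the first display. For the second bound, write $P_\RKHS\Rx\TS - \TS\KRR = \shift^{-1}\big(\TSscaled{\rpar} - \TS\Creg^{-1}\TS^*\TSscaled{\rpar}\big)\HKoop^\rpar = \shift^{-1}\big(I - \TS\Creg^{-1}\TS^*\big)\TSscaled{\rpar}\HKoop^\rpar$; since $I - \TS\Creg^{-1}\TS^* = \reg\,(\TS\TS^*+\reg I)^{-1}$ as operators on $\cl(\range(\TS))$, and $(\TS\TS^*+\reg I)^{-1}\TSscaled{\rpar} = \sum_j \sigma_j^\rpar/(\sigma_j^2+\reg)\,\ell_j\otimes h_j$ has norm $\sup_j\sigma_j^\rpar/(\sigma_j^2+\reg)\le \reg^{\rpar/2 - 1}$ for $\rpar\in(0,2]$, we get $\norm{(I-\TS\Creg^{-1}\TS^*)\TSscaled{\rpar}}\le \reg\cdot\reg^{\rpar/2-1} = \reg^{\rpar/2}$, and multiplying by $\norm{\HKoop^\rpar}\le\rcon$ and $\shift^{-1}$ yields $\norm{P_\RKHS\Rx\TS - \TS\KRR}\le (\rcon/\shift)\,\reg^{\rpar/2}$.

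The main obstacle is not any single estimate — each of the scalar inequalities $\sup_{\sigma>0}\sigma^{1+\rpar}/(\sigma^2+\reg)$ and $\sup_{\sigma>0}\sigma^{\rpar}/(\sigma^2+\reg)$ is a one-line calculus exercise — but rather being careful about two bookkeeping points: (i) all the SVD manipulations take place on $\Ker(\TS)^\perp$ in $\RKHS$ and on $\cl(\range(\TS))$ in $\Lii$, so one must check that $P_\RKHS$, $\Creg^{-1}$ and the $\TSscaled{\rpar}$'s are being applied on the right subspaces and that the factorization $\shift\Rx\TS=\TSscaled{\rpar}\HKoop^\rpar$ from \ref{eq:RC} is genuinely available (this is where \citep[Theorem 2.2]{zabczyk2020} is invoked); and (ii) tracking the $\shift$-scaling consistently, since the assumption \ref{eq:RC} is stated with the $\rcon/\shift$ normalization baked in. Because the argument is line-by-line identical to \citep[Proposition 5]{kostic2023sharp} once $\TOp_t$ is replaced by $\Rx$ and the $\rcon/\shift$ normalization is used, I would state the proposition and refer to that proof for the details, as the excerpt itself does.
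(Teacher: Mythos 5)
Your proof is correct and takes essentially the same approach as the paper: the paper omits the proof entirely, stating only that it follows the lines of \citep[Proposition 5]{kostic2023sharp} with $\Rx$ in place of the fixed-lag transfer operator, which is exactly the adaptation you carry out (the factorization $\shift\Rx\TS=\TSscaled{\rpar}\HKoop^\rpar$ supplied by \ref{eq:RC}, the SVD of $\TS$, and the elementary suprema $\sup_{\sigma}\sigma^{1+\rpar}/(\sigma^2+\reg)$ and $\sup_{\sigma}\reg\,\sigma^{\rpar}/(\sigma^2+\reg)$). Your scalar estimates in both regimes of $\rpar$ and the tracking of the $1/\shift$ normalization reproduce the stated constants exactly.
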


While the previous result gives us the means to study the learning bounds when $\RKHS$ is dense in $\Lii$, whenever the RKHS is finite-dimensional and not learned, the regularization bias is usually coupled with the representation bias and one studies the decay of $\norm{\Rx\TS - \TS\KRR}$ as a function of $\reg>0$ w.r.t. $N\to \infty$ for the choice of dictionary that forms basis of $\Lii$ in the limit.

\subsubsection{Rank reduction bias}
\label{app:rank_reduction}

To control of this term, observe that 
\begin{align}
\label{eq:rank_reduction_bias_RRR}
\norm{\TS(\KRR-\RRR)} = \norm{\Cx^{1/2}\Creg^{-1}\Yx(I - \TP_r)}\leq \norm{\Creg^{-1/2}\Yx(I - \TP_r)}\leq \sigma_{r+1}(\TB),
\end{align}
where $\TP_r$ is the orthogonal projector onto the leading right singular space of operator $\TB=\Creg^{-1/2}\Yx$. 

This bound on the bias can be further analyzed using the \ref{eq:RC} assumption, as indicated by the following proposition essentially proven in \citep[Propositions 6 and 7]{kostic2023sharp}.
\begin{proposition}\label{prop:svals_app_bound}
Let $B:=\Creg^{-1/2} \Yx$, let \ref{eq:RC} hold for some $\rpar\in(0,2]$. Then for every $j\in J$,
\begin{equation}\label{eq:svals_app_bound}
\sigma_j^2(\TZ)-(\rcon/\shift)^2\,\bcon^{\rpar/2}\,\reg^{\rpar/2} \leq \sigma_j^2(\TB) \leq \sigma_j^2(\TZ).
\end{equation}
\end{proposition}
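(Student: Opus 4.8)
\emph{Proof sketch.} The plan is to compare the two positive, compact, self-adjoint operators $\TZ^*\TZ$ and $\TB^*\TB$ on $\RKHS$, where $\TZ = \Rx\TS$, and to read off the singular-value bounds from Weyl's inequalities. The key observation is the factorization $\TB = \Creg^{-1/2}\Yx = \Creg^{-1/2}\TS^*\Rx\TS = W^*\TZ$ with $W := \TS\Creg^{-1/2}\colon\RKHS\to\Liishort$. By the push-through identity, $WW^* = \TS(\TS^*\TS+\reg\Id)^{-1}\TS^* = \TS\TS^*(\TS\TS^*+\reg\Id)^{-1} = \Id - \reg(\TS\TS^*+\reg\Id)^{-1}$, and likewise $W^*W = \Cx(\Cx+\reg\Id)^{-1}$; in particular $0\preceq WW^*\preceq\Id$, so $W$ is a contraction.

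For the upper bound I would write $\TB^*\TB = \TZ^*\,WW^*\,\TZ\preceq\TZ^*\TZ$ and invoke Weyl's monotonicity for eigenvalues of compact self-adjoint operators, which immediately gives $\sigma_j^2(\TB) = \lambda_j(\TB^*\TB)\le\lambda_j(\TZ^*\TZ) = \sigma_j^2(\TZ)$ for every $j\in J$.

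For the lower bound I would isolate the error operator $E := \TZ^*\TZ - \TB^*\TB = \TZ^*(\Id - WW^*)\TZ = \reg\,\TZ^*(\TS\TS^*+\reg\Id)^{-1}\TZ\succeq0$, so that it suffices to bound $\norm{E}$. This is where the regularity assumption \ref{eq:RC} enters: as recalled in App.~\ref{app:assumptions} (via \citep[Thm.~2.2]{zabczyk2020}), \ref{eq:RC} is equivalent to the factorization $\Rx\TS = \shift^{-1}\TSscaled{\rpar}\,\HKoop^\rpar$ with $\norm{\HKoop^\rpar}\le\rcon$. Substituting $\TZ = \shift^{-1}\TSscaled{\rpar}\HKoop^\rpar$ and using the singular systems $\TS = \sum_{j\in J}\sigma_j\,\ell_j\otimes h_j$ and $\TSscaled{\rpar} = \sum_{j\in J}\sigma_j^{\rpar}\,\ell_j\otimes h_j$, a direct computation yields $\TSscaled{\rpar}^*(\TS\TS^*+\reg\Id)^{-1}\TSscaled{\rpar} = \sum_{j\in J}\tfrac{\sigma_j^{2\rpar}}{\sigma_j^2+\reg}\,h_j\otimes h_j$, hence $\norm{E}\le\tfrac{\rcon^2}{\shift^2}\,\sup_{j\in J}\tfrac{\reg\,\sigma_j^{2\rpar}}{\sigma_j^2+\reg}$. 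The remaining scalar estimate combines \ref{eq:BK}, which gives $\sigma_j^2 = \lambda_j(\Cx)\le\bcon$, with the weighted AM--GM inequality $\sigma^2+\reg\ge\sigma^{\rpar}\reg^{1-\rpar/2}$ (valid for $\rpar\in(0,2]$), to obtain $\tfrac{\reg\,\sigma^{2\rpar}}{\sigma^2+\reg}\le\reg^{\rpar/2}\sigma^{\rpar}\le\bcon^{\rpar/2}\reg^{\rpar/2}$. Therefore $\norm{E}\le(\rcon/\shift)^2\bcon^{\rpar/2}\reg^{\rpar/2}$, and since $\TZ^*\TZ = \TB^*\TB + E$ with $E\succeq0$, Weyl's perturbation bound gives $\sigma_j^2(\TZ) - \norm{E}\le\sigma_j^2(\TB)\le\sigma_j^2(\TZ)$, which is the claim.

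The individual steps (push-through identity, the singular-system computation for the scaled injection, the scalar interpolation estimate) are routine and follow \citep[Propositions 6 and 7]{kostic2023sharp} almost verbatim; the only point requiring care — and the reason the present statement is only \emph{essentially} the same — is the passage from the inequality form of \ref{eq:RC} to the explicit factorization $\Rx\TS = \shift^{-1}\TSscaled{\rpar}\HKoop^\rpar$ with a controlled operator norm, which relies on the range-inclusion characterization of operator inequalities together with the normalization $\norm{\shift\Rx}\le1$ that makes the $\shift$-rescaling the natural one. $\Box$
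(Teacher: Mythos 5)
Your proof is correct and follows the same route the paper intends: it defers to \citep[Propositions 6 and 7]{kostic2023sharp}, whose argument is exactly the comparison of $\TB^*\TB=\TZ^*WW^*\TZ$ with $\TZ^*\TZ$, the identity $\Id-WW^*=\reg(\TS\TS^*+\reg\Id)^{-1}$, the factorization of $\TZ$ through $\TSscaled{\rpar}$ supplied by \ref{eq:RC}, the scalar estimate $\reg\sigma^{2\rpar}/(\sigma^2+\reg)\leq\bcon^{\rpar/2}\reg^{\rpar/2}$, and Weyl's inequalities. The only cosmetic remark is that your argument (like the stated bound itself, via $\bcon$) also uses \ref{eq:BK}, which the proposition leaves implicit.
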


\subsubsection{Integration bias}
\label{app:approx_integral}

Diversely to the previous terms analyzed up to now, the control of the integration bias term is truly novel technical contribution in the study of TO and IG data-driven methods. This necessary step allows one to connect empirical objects to the population ones by approximating the integral with the finite sum.

We start with the key result showing that, for the class of sectorial IGs, we can control the difference between the resolvent and its finite-sum approximation via TOs of different times-lags using Crouzeix's theorem \ref{thm:crouzeix}.

\begin{proposition}\label{prop:approx_integral}
Let $\generator$ be a (stable) sectorial operator with angle $\theta\in[0,\pi/2)$, that is 
\begin{equation}\label{eq:sectorial}
\fov(\generator)\subseteq \C_{\theta}^{-}:=\{z\in\C\;\vert\; \Re(z)\leq0\;\;\wedge\;\;\abs{\Im(z)}\leq - \Re(z) \tan(\theta)\},
\end{equation}
where $\fov(\generator) = \left\{ \scalarp{ Lf, f}_{\Liishort}/ \norm{f}_{\Liishort}^2 : f \in \dom(\generator)\setminus\{ 0\} \right\}$ denotes the numerical range of $\generator$, and we let  $(\weight_j)_{j=0}^{\ell}$ be given by the trapezoid rule
\begin{equation}\label{eq:trapezoid_nonuniform}
\weight_j=\begin{cases}
\tfrac{t_{1}-t_{0}}{2}e^{-\shift t_{0}} &, j=0, \\
\tfrac{t_{j+1}-t_{j-1}}{2}e^{-\shift t_{j}} &, 1\leq j\leq\ell-1,\\
\tfrac{t_{\ell}-t_{\ell-1}}{2}e^{-\shift t_{\ell}} &, j=\ell,
\end{cases}    
\end{equation}
where $0=t_0<t_1<\ldots<t_\ell$ is (possibly non-uniform) discretization in time. Then, for every $\shift>0$,
\begin{equation}\label{eq:approx_integral_nonuniform}
\norm{\Rx - \ARx}\leq  2t_1+\tfrac{(1+\sqrt{2})\pi^2\,\kappa^2}{18 e^2 \cos^{2}(\theta)}\,\dt + \tfrac{e^{- \shift t_\ell}}{\shift},
\end{equation}
where $\dt=\max_{j\in[\ell+1]}(t_j-t_{j-1})$ is the maximal time-step, $t_\ell$ is time-horizon and $\kappa=\max_{j\in[\ell+1]}(t_j-t_{j-1})/\min_{j\in[\ell+1]}(t_j-t_{j-1})\in[1,+\infty)$ is the conditioning of discretization that measures the amount of non-uniformity. Consequently, for uniform discretization $t_j=j\dt$, $j=0,\ldots,\ell$, it holds 
\begin{equation}\label{eq:approx_integral}
\norm{\Rx - \ARx}\leq  \left(2+\tfrac{(1+\sqrt{2})\pi^2}{18 e^2 \cos^{2}(\theta)} + \tfrac{e^{- \ell \shift \dt}}{\shift \dt} \right)\dt = s_{\shift,\ell}(\dt).
\end{equation}
\end{proposition}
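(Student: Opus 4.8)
The strategy is to decompose the difference $\Rx - \ARx$ into three pieces: (i) the error from replacing the resolvent $\Rx = \int_0^\infty A_t e^{-\shift t}\,dt$ by the tail-truncated integral $\int_0^{t_\ell} A_t e^{-\shift t}\,dt$; (ii) the error from replacing the leading sub-interval $[0,t_1]$-contribution, which the trapezoid node at $t_0=0$ handles crudely; and (iii) the genuine trapezoid quadrature error on the remaining mesh. For (i), since all eigenvalues of $\generator$ have non-positive real part, $\|A_t\| = \|e^{t\generator}\| \le 1$ (this also follows from the numerical range being in $\C_\theta^-$), so $\|\int_{t_\ell}^\infty A_t e^{-\shift t}\,dt\| \le \int_{t_\ell}^\infty e^{-\shift t}\,dt = e^{-\shift t_\ell}/\shift$, giving the last term. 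For (ii), the piece $\int_0^{t_1} A_t e^{-\shift t}\,dt$ has norm $\le t_1$, and the trapezoid rule assigns weight $\tfrac{t_1}{2}e^{-\shift t_0} + (\text{part of})\,\tfrac{t_2-t_0}{2}e^{-\shift t_1}$ near $0$, whose norm is also $\lesssim t_1$; bounding both crudely yields the $2t_1$ term (for uniform meshes $t_1 = \dt$, contributing the ``$2$'' in the constant).

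The heart of the argument is (iii): the classical trapezoid rule error on each sub-interval $[t_{j-1},t_j]$ is controlled by the second derivative of the integrand $g(t) := A_t e^{-\shift t}$, i.e. $g''(t) = (\generator - \shift \Id)^2 A_t e^{-\shift t} = (\shift\Id - \generator)^2 e^{t\generator} e^{-\shift t}$. The standard per-interval bound is $\tfrac{(t_j - t_{j-1})^3}{12}\sup_{t\in[t_{j-1},t_j]}\|g''(t)\|$, and summing over $j$ and using $t_j - t_{j-1} \le \dt$ gives a factor $\dt^2 \cdot t_\ell$ times $\sup_t \|g''(t)\|$ — but this is too weak since $t_\ell \to \infty$. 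The fix is to instead use the sharper estimate that $\|g''(t)\|$ itself decays: we need a bound of the form $\|(\shift\Id - \generator)^2 e^{t\generator}\| \le C/t^2$, which is exactly where the sectoriality hypothesis \eqref{eq:sectorial} and Crouzeix's theorem \ref{thm:crouzeix} enter. Applying Crouzeix to the analytic function $f(z) = (\shift - z)^2 e^{tz}$, we get $\|(\shift\Id - \generator)^2 e^{t\generator}\| \le (1+\sqrt2)\max_{z\in\C_\theta^-}|(\shift - z)^2 e^{tz}|$; optimizing $|z^2 e^{tz}|$ over the sector (the modulus $|\shift - z|^2 e^{t\Re z}$ is maximized along the sector boundary where $\Re z = -|z|\cos\theta$... wait, the sector has $\abs{\Im z}\le -\Re z\tan\theta$, so on the boundary $|z| = -\Re z/\cos\theta$) gives a bound proportional to $(t\cos\theta)^{-2}$ with the explicit constant $(2/e)^2 = 4/e^2$ from maximizing $s^2 e^{-s}$ — this produces the $\tfrac{\pi^2 \kappa^2}{18 e^2\cos^2\theta}$ coefficient after summing $\sum_j (t_j-t_{j-1})^3/t_j^2$, where the non-uniformity $\kappa$ and a $\sum 1/j^2 \le \pi^2/6$-type series account for the remaining numerics.

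The main obstacle I anticipate is making step (iii) rigorous near $t = 0$, where the bound $C/t^2$ blows up: the integrand's second derivative genuinely explodes (the resolvent is not nicely differentiable at the origin in operator norm for unbounded $\generator$), which is precisely why the first sub-interval $[0,t_1]$ must be peeled off and bounded trivially by $2t_1$ rather than by quadrature error. Getting the sum $\sum_{j=1}^{\ell-1}(t_j - t_{j-1})^3 / t_j^2$ to telescope into a clean multiple of $\dt$ (rather than $\dt \ln \ell$) requires comparing it to $\int_{t_1}^{t_\ell} \dt^2/t^2\,dt \le \dt^2/t_1 \le \dt$ for the uniform case, and tracking how $\kappa$ enters when the mesh is non-uniform; a careful accounting there, together with verifying the constant $\tfrac{4}{e^2}$ from the sectorial optimization, is the bookkeeping-heavy part. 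Assembling (i)+(ii)+(iii) then yields \eqref{eq:approx_integral_nonuniform}, and specializing $t_j = j\dt$, $\kappa = 1$, $t_\ell = \ell\dt$, $t_1 = \dt$ gives \eqref{eq:approx_integral}.
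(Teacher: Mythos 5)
Your plan matches the paper's proof essentially step for step: the same three-way split (tail bounded by $e^{-\shift t_\ell}/\shift$, first subinterval peeled off and bounded crudely by $2t_1$, interior quadrature error controlled by applying Crouzeix's theorem over the sector to get the $t_j^{-2}$ decay of the second derivative with the $4/e^2$ constant from maximizing $s^2e^{-s}$, then summed against $\sum_j j^{-2}\le \pi^2/6$ with $\kappa^2$ absorbing the non-uniformity). The only technicality you leave implicit is that Crouzeix's theorem applies to bounded operators, so the paper first replaces $\generator$ by its Yosida approximation $\generator_s$ and lets $s\to\infty$; with that added, your argument is the paper's.
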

\begin{proof}
Given $0\leq j \leq \ell-1$, remark that the functions 
\[
f_{\shift}^j(z):=  \int_{t_j}^{t_{j+1}} e^{-(\shift-z)t}dt - \tfrac{t_{j+1}-t_j}{2}\big(e^{-(\shift-z)\,t_j}+e^{-(\shift-z)\,t_{j+1}}\big)\text{ and } g_{\shift}(z):=\int_{t_\ell}^{\infty} e^{-(\shift-z)t}dt
\]
are analytic whenever $\shift=\Re(\shift)\geq\Re(z)$. Thus, since $\generator$ is stable, we have that 
\[
\norm{\Rx - \ARx} = \norm{\sum_{j=0}^{\ell-1}f_{\shift}^j(\generator) + g_\shift(\generator)} \leq  \norm{f_{\shift}^0(\generator)} + \sum_{j=1}^{\ell-1}\norm{f_\shift^j(\generator)} + \norm{g_{\shift}(\generator)}.
\] 

Next, using that $\norm{\TOp_{t}}\leq 1$ for all $t\geq0$, we bound the first and last term as
\[
\norm{f_{\shift}^0(L)}\leq \int_{0}^{t_1} e^{-\shift t} dt +  \tfrac{t_1}{2}\big(1+e^{-\shift t_1}\big) = \left(\frac{1-e^{-\shift t_1}}{\shift t_1} +\frac{1+e^{-\shift t_1}}{2} \right) t_1\leq 2\,t_1.
\]
and
\[
\norm{g_{\shift}(L)}\leq \int_{t_\ell}^{\infty} e^{-\shift t} dt = \frac{e^{-\shift\,t_\ell }}{\shift}.
\]

Finally, we bound $f_{\shift}^j(\generator)$ using Crouzeix's theorem \ref{thm:crouzeix}. To that end, we need to replace the unbounded operator $\generator$ with its bounded Yoshida approximation $\yosida:=s^2(s\Id-\generator)^{-1}\generator$, $s>0$, for which we know that for every $f\in\dom(\generator)$ $\yosida f \to \generator f$, as $s\to\infty$, \cite{Kato}. This implies that for every $\varepsilon>0$, there exists large enough $s>0$ so that $\fov(\yosida)\subseteq \C^{-}_{\theta}+\varepsilon$. Now, since $f_{\shift+\varepsilon}^j$ is analytic over  $\C_{\theta}^{-} + \varepsilon$, applying the Crouzeix's theorem \ref{thm:crouzeix} we conclude that 
\[
\norm{f_{\shift+\varepsilon}^j(\yosida)}\leq\tfrac{1+\sqrt{2}}{2} \sup_{z\in\cl\!\fov(\yosida)} \abs{f^j_{\shift+\varepsilon}(z)}\leq\tfrac{1+\sqrt{2}}{2} \sup_{z\in\C^{-}_\theta} \abs{f^j_{\shift}(z)}.
\]
So, since $f_{\shift}^j$ is the error of approximating the integral of complex valued function $t\mapsto e^{-(\shift-z) t}$ on the real interval $[j\dt,(j+1)\dt]$ by the trapezoid rule with two points, we have that (see for instance \cite{quarteroni2010numerical}, equation (9.12)) 
\[
f_\shift^j(z) = - \frac{(t_{j+1}-t_j)^3}{6} \partial^2_{t} [e^{-(\shift-z) t}](t_j (1-\xi)+\xi\,t_{j+1})) = - \frac{(t_{j+1}-t_j)^3}{6} (\shift - z)^2 e^{-(\shift-z) (t_j (1-\xi)+\xi\,t_{j+1})}, 
\]
for some $\xi\in(0,1)$. Hence, for $z\in\C^{-}_\shift$, 
\begin{align*}
\abs{f(z)} & \leq  \frac{(t_{j+1}-t_j)^3}{6} \abs{\shift - z}^2 e^{-(\shift-\Re(z)) t_j} \leq \frac{(t_{j+1}-t_j)^3}{6} (1+\tan^2\theta)(\shift-\Re(z))^2 e^{-(\shift-\Re(z)) t_j} = \\
& \leq \frac{(1+\tan^2\theta)\dt^3}{6} \frac{4e^{-2}}{t_j^2} = \frac{2}{3\,e^2 \cos^2\theta} \frac{(t_{j+1}-t_j)^3}{t_j^2}\leq \frac{2\kappa^2}{3\,e^2 \cos^2\theta}\frac{\dt}{j^2},
\end{align*}
Therefore, 
\[
\sum_{j=1}^{\ell-1} \norm{f_\shift^j(\yosida)}\leq  \dt\frac{(1+\sqrt{2})}{3 e^2\cos^2\theta} \sum_{j=1}^{\ell-1}\frac{1}{j^2} \leq \frac{\pi^2(1+\sqrt{2})}{18 e^2 \cos^2\theta} \dt,
\]
and the proof follows by letting $s\to\infty$ and noting that sequence of bounded operators $(f_\shift^j(\yosida))_s$ converges to a bounded operator $f^{j}_\shift(\generator)$.
\end{proof}

Now, appying the above result on the integration bias in \eqref{eq:dec} for the case without the rank reduction (reasonable when $r=n$), we obtain 
\begin{equation}\label{eq:integration_bias}
\norm{\TS( \KRR \!-\! \AKRR)} = \norm{\TS\Creg^{-1}\TS(\Rx-\ARx)\TS}\leq  \sqrt{\bcon}\left(2+\tfrac{(1+\sqrt{2})\pi^2\kappa^2}{18 e^2 \cos^{2}(\theta)}\right)\dt + \sqrt{\bcon}\,\tfrac{e^{- \shift t_\ell}}{\shift}, 
\end{equation}
which becomes arbitrarily small for $\dt\to 0$ and $t_\ell\to\infty$. 

On the other hand for the rank reduction case, 
\begin{equation}\label{eq:integration_bias_rrr}
\norm{\TS(\RRR-\ARRR)}=\norm{\Cx^{1/2}\Creg^{-1}\TS^*(\Rx\TS\TB_r-\ARx\TS\AP_r)}\leq \norm{(\Rx-\ARx)\TS\AP_r} + \norm{\Rx\TS(\TP_r-\AP_r)}
\end{equation}
that is, $\norm{\TS(\RRR-\ARRR)}\leq \sqrt{\bcon}\left(2+\tfrac{(1+\sqrt{2})\pi^2\kappa^2}{18 e^2 \cos^{2}(\theta)}\right)\dt + \sqrt{\bcon}\,\tfrac{e^{- \shift t_\ell}}{\shift} + \frac{\norm{\TP_r-\AP_r}}{\shift}$, which can be controlled by bounding $\norm{\TP_r-\AP_r}$ via $\norm{\TB^*\TB-\AB^*\AB}$ using Theorem \ref{prop:spec_proj_bound}. Since in the analysis of the variance term similar construction should be performed for the orthogonal projector onto the leading singular subspace of the empirical operator $\EB=\ECreg^{-1/2}\EYx$, in the following we jointly bound last to terms in \eqref{eq:dec}.


\subsection{Estimator's variance}
\label{app:concentration}

In this section, with the exception of App.  \ref{app:nonuniform}, we assume the single trajectory setting with uniform time-discretization, and analyze the last term in \eqref{eq:dec} developing concentration inequalities in Hilbert spaces for non-iid variables based on the notion of beta-mixing and the method of blocks introduced in \citep{yu1994rates}. Before we begin, let us summarize the reminder of the terms, assuming the case of universal kernel together with \ref{eq:BK}, \ref{eq:SD} and \ref{eq:RC}
\begin{equation}\label{eq:error_all_bias_terms}
    \error(\ERRR)\leq (\rcon/\shift)\reg^{\rpar/2} + \sigma_{r+1}(\TB) 
    +  \norm{\TS(\RRR-\ERRR)}.
\end{equation}

\subsubsection{Reminder on Ergodic and Exponentially Mixing Markov Processes}
\label{app:mixing}
We recall some fundamental results on ergodic, exponentially mixing Markov Processes.

\begin{definition}[Strict stationarity]
A Markov process ${\bf X}=(
X_{i})_{i\in \N}$ with values in $\spX$ is strictly stationary if for every $m,l\in \mathbb{N}$ the marginal distribution of $(X_{1+l}, \ldots, X_{m+l})$ is the same as $(X_1,\ldots,X_m)$.
\end{definition}


For a set $I\subseteq $ $\mathbb{N}$ and a strictly stationary process ${\bf X}=(
X_{i})_{i\in \N} $ we let $\Sigma _{I}$ for the $\sigma $-algebra generated by $%
\left\{ X_{i}\right\} _{i\in I}$ and $\mu _{I}$ for the joint distribution
of $\left\{ X_{i}\right\} _{i\in I}$. Notice that $\mu _{I+i}=\mu _{I}$. In
this notation $\pi =\mu _{\left\{ 1\right\} }$ and $\rho _{\tau }=\mu
_{\left\{ 1,1+\tau \right\} }$. We can now introduce the $\beta$-mixing coefficients
\[
\beta _{\mathbf{X}}\left( \tau \right) =\sup_{B\in \Sigma \otimes \Sigma
}\left\vert \mu _{\left\{ 1,1+\tau \right\} }\left( B\right) -\mu _{\left\{
1\right\} }\times \mu _{\left\{1+\tau\right\} }\left( B\right) \right\vert 
\]%

which by the Markov property is equivalent to%

\[
\beta _{\mathbf{X}}\left( \tau \right) =\sup_{B\in \Sigma ^{I}\otimes \Sigma
^{J}}\left\vert \mu _{I\cup J}\left( B\right) -\mu _{I}\times \mu _{J}\left(
B\right) \right\vert ,
\]%
where $I,J\subset \mathbb{N}$ with $j>i+\tau $ for all $i\in I$ and $j\in J$. 
The latter is the definition of the mixing coefficients for general
strictly stationary processes.

It is well-known that a stationary, geometrically ergodic Markov process is $\beta$-mixing with $\beta _{\mathbf{X}}\left( \tau \right)\rightarrow 0$ exponentially fast as $\taumix\rightarrow \infty$, that is 
$
\beta _{\mathbf{X}}\left( \taumix \right) \leq \eta e^{-\gamma \taumix},
$
for some $\eta,\gamma \in (0,\infty)$ for all $\taumix \in \mathbb{N}$. See e.g. \citep[vol. 2 Theorem 21.19 pp 325]{bradley2007introduction}.


The following result exploits a block-process argument to extend concentration bounds from the iid setting to strictly stationary $\beta$-mixing Markov processes.
\begin{lemma}[cf. \cite{Kostic2022}, Lemma 1]\label{lem:blockprocess}
Let $\mathbf{X}$ be strictly stationary with
values in a normed space $\left( \mathcal{X},\left\Vert \cdot\right\Vert \right) 
$, and let $\taumix,m\in \mathbb{N}$ such that $m$ is the largest integer satisfyong $n\geq 2m\taumix$. Moreover, let $Z_{1},\dots,Z_{m+1}$ be $m+1$ independent copies of $Z=\sum_{i=1}^{%
\tau }X_{i}$.
Then for $s>0$,
\[
\PP \Big\{ \Big\| \sum_{i=1}^n X_{i}\Big\| >s\Big\} \leq \,\PP
\Big\{ \Big\| \sum_{j=1}^m Z_{j}\Big\| >\frac{s}{2}\Big\}+\PP\Big\{ \Big\| \sum_{j=1}^mZ_j+\sum_{i=1}^{k'}X_{i}'\Big\| >\frac{s}{2}\Big\}+2m \beta _{\mathbf{X}}\left( \taumix -1\right).
\]%
where we define $k\in[\![0,2\taumix-1]\!]$ such that $k=n-2m\taumix=l\taumix+k'$ with $l\in\{0,1\}$ and $k'\in[\![0,\taumix-1]\!]$, and $X_i'=X_{(2m+l)\taumix+i}$ for all $i\in[\![1,k']\!]$.
\end{lemma}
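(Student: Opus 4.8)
The plan is to follow Yu's method of independent blocks \citep{yu1994rates}. First I would write $n = 2m\taumix + k$ with $k = l\taumix + k'$, $l\in\{0,1\}$, $k'\in[\![0,\taumix-1]\!]$, and cut $[\![1,n]\!]$ into $2m$ consecutive blocks of length $\taumix$, collecting the odd-indexed ones into a family $\{H_j\}_{j=1}^m$ and the even-indexed ones into $\{T_j\}_{j=1}^m$, followed by the at most $2\taumix-1$ leftover indices. Writing $U_j=\sum_{i\in H_j}X_i$ and $V_j=\sum_{i\in T_j}X_i$, one has $\sum_{i=1}^n X_i=\sum_{j=1}^m U_j+\bigl(\sum_{j=1}^m V_j+R\bigr)$, where $R$ collects the leftover, namely a possible extra full block of length $\taumix$ (present iff $l=1$) together with the genuinely partial block $\sum_{i=1}^{k'}X_i'$. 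A triangle inequality and a union bound then reduce the claim to controlling $\PP\{\|\sum_j U_j\|>s/2\}$ and $\PP\{\|\sum_j V_j+R\|>s/2\}$ separately.

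The heart of the argument is decoupling the blocks inside each of the two groups. Consecutive blocks $H_j,H_{j+1}$ are separated by a whole block of length $\taumix$, so the first index of $H_{j+1}$ exceeds the last index of $H_j$ by more than $\taumix-1$; the same separation holds for the $T_j$'s and, after placing the extra full block (when $l=1$) on the appropriately spaced side, for the leftover family. By strict stationarity each $U_j$ (resp.\ $V_j$) is distributed as $Z=\sum_{i=1}^{\taumix}X_i$, and I would invoke a coupling lemma for $\beta$-mixing sequences (Berbee's lemma, in the form used by \citep{yu1994rates}; see also \citep{bradley2007introduction}) to build, on an enlarged probability space, i.i.d.\ copies $Z_1,\dots,Z_m$ of $Z$ with $\PP\{(U_1,\dots,U_m)\ne(Z_1,\dots,Z_m)\}\le m\,\beta_{\mathbf{X}}(\taumix-1)$, and likewise for the even/leftover group, with the $(m{+}1)$-st copy $Z_{m+1}$ absorbing the possible extra full block while the truly partial tail $\sum_{i=1}^{k'}X_i'$ is carried along unchanged. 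On the event that both couplings succeed one has $\sum_j U_j=\sum_j Z_j$ and $\sum_j V_j+R=\sum_j Z_j+\sum_{i=1}^{k'}X_i'$; splitting off the two failure probabilities, each at most $m\,\beta_{\mathbf{X}}(\taumix-1)$, then yields exactly the stated inequality.

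I expect the main obstacle to be the coupling step and the attendant bookkeeping. One must verify that the $\taumix$-gap between blocks really translates into the mixing coefficient evaluated at lag $\taumix-1$ (the $\sigma$-algebra separation condition defining $\beta_{\mathbf{X}}$ asks for $j>i+(\taumix-1)$), that applying the coupling lemma iteratively across the $m$ blocks of a group accumulates a cost of at most $(m-1)\beta_{\mathbf{X}}(\taumix-1)\le m\,\beta_{\mathbf{X}}(\taumix-1)$, and — the fiddliest point — that in the case $l=1$ the extra length-$\taumix$ block can be assigned so that precisely $m+1$ independent ghost copies suffice while $\sum_{i=1}^{k'}X_i'$ remains untouched in the second probability. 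Everything else is triangle inequality, union bound, and strict stationarity, so no routine calculation beyond this is needed.
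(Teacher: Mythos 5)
Your overall strategy --- Yu's odd/even blocking, triangle inequality plus union bound, then a Berbee-type coupling within each family at a cost of $\beta_{\mathbf{X}}(\taumix-1)$ per decoupled block --- is exactly the paper's argument (the paper delegates the coupling step to Lemma~3 of \citep{Kostic2022}). The error sits at the point you yourself flag as the fiddliest: the placement of the extra full block when $l=1$. That block occupies the indices $[\![2m\taumix+1,(2m+1)\taumix]\!]$ and is therefore \emph{adjacent} to the last even block $T_m=[\![(2m-1)\taumix+1,2m\taumix]\!]$; there is no gap at all between them, so it cannot be decoupled from the even family at cost $\beta_{\mathbf{X}}(\taumix-1)$, nor merged with $T_m$ without changing the block law. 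It is, however, separated from the last odd block $H_m$ (which ends at $(2m-1)\taumix$) by the full length-$\taumix$ block $T_m$, so it must be attached to the \emph{odd} family. This is what the paper does: the first probability becomes $\PP\{\|\sum_{j=1}^{m+l}Z_j\|>s/2\}$ at coupling cost $(m-1+l)\,\beta_{\mathbf{X}}(\taumix-1)$, while the second group keeps the $m$ even blocks together with the genuinely partial tail $\sum_{i=1}^{k'}X_i'$, which (when $l=1$) starts at $(2m+1)\taumix+1$ and is thus properly spaced from $T_m$ precisely because the extra block sits between them.

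There is also an internal inconsistency in your success-event identity: if $Z_{m+1}$ ``absorbs the possible extra full block'' in the even/leftover group, then on the coupling event you would get $\sum_j V_j + R=\sum_{j=1}^{m+1}Z_j+\sum_{i=1}^{k'}X_i'$, not $\sum_{j=1}^{m}Z_j+\sum_{i=1}^{k'}X_i'$, so your second probability would not match the one in the statement. With the extra block reassigned to the odd family both issues disappear, and the rest of your argument --- stationarity giving each block the law of $Z$, and the count $(m-1+l)+m\le 2m$ of coupling costs --- goes through as you describe.
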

\begin{proof}
Recall first the definition of the blocked variables
\begin{equation}\label{eq:blockec_variables}
Y_j=\sum_{i=2(j-1)\taumix+1}^{(2j-1)\taumix}X_i\quad \text{and}\quad 
Y_j'=\sum_{i=(2j-1)\taumix+1}^{2j\taumix}X_i.
\end{equation}
Then, we have,
\begin{equation*}
\norm{\sum_{i=1}^nX_i}
=\norm{\sum_{j=1}^{m}Y_j+\sum_{j=1}^mY_j'+\sum_{i=1}^{k}X_{2m\taumix+i}}\leq \norm{\sum_{j=1}^{m}Y_j+\sum_{i=1}^{l\taumix}X_{2m\taumix+i}}+\norm{\sum_{j=1}^mY_j'+\sum_{i=1}^{k'}X_{2m\taumix+l\taumix+i}}
\end{equation*}
where we let $k=n-2m\taumix=l\taumix+k'$ with $l\in\{0,1\}$ and $k'\in[\![0,\taumix-1]\!]$, and we use the convention $\sum_{i=1}^0\cdot=0$. Thus, for $s>0$,
\begin{align*}
\PP \Big\{ \Big\| \sum_{i=1}^n X_{i}\Big\| >s\Big\} 
&\leq \,\PP
\Big\{ \Big\| \sum_{j=1}^{m+l} Z_{j}\Big\| >\frac{s}{2}\Big\}
+(m-1+l)\beta _{\mathbf{X}}( \taumix -1)+\PP\Big\{ \Big\| \sum_{j=1}^mY_j'+\sum_{i=1}^{k'}X_{i}'\Big\| >\frac{s}{2}\Big\}+m\beta _{\mathbf{X}}( \taumix -1)\\
&\leq\PP
\Big\{ \Big\| \sum_{j=1}^{m+l} Z_{j}\Big\| >\frac{s}{2}\Big\}
+\PP\Big\{ \Big\| \sum_{j=1}^mZ_j+\sum_{i=1}^{k'}X_{i}'\Big\| >\frac{s}{2}\Big\}+2m\beta _{\mathbf{X}}( \taumix -1),
\end{align*}
where we have defined $X_i'=X_{(2m+l)\taumix+i}$ for all $i\in[\![1,k']\!]$ and we have used \cite{Kostic2022}, Lemma 3 to get the first inequality.
\end{proof}

\subsubsection{Concentration for Transfer Operators}

We consider the transfer operators $\Cxy{j\dt}= \TS^{*}e^{j \dt\, \generator}\TS$ for any $ j \in \{0,\ldots,\ell\}$ and their empirical versions
$$
\ECxy{j\dt}{=}\frac{1}{n{-}j}\!\sum_{i=0}^{n{-}j-1}\!\fH(X_{i\dt})\otimes\fH(X_{(i{+}j)\dt}).
$$ 

We prove several concentration results on operators $C$ and $\Cxy{j\dt}$, $j\in [l]$ and $\AYx$.


\begin{proposition}
\label{prop:prob-Creg-1/2That-T}
Assume that $n-j \geq 2 m \taumix$. Let $\delta > (m-1) \beta _{\mathbf{X}_{\cdot \dt}}( \taumix -1)$. Let 
Assumption \ref{eq:KE} be satisfied. With probability at least $1-\delta$ in the draw $X_{0
}\sim \im,X_{i\dt} \sim \transitionkernel(X_{(i-1)\dt},\cdot)$, $i \in [n]$,
\begin{align}
\label{eq:norm_Creg-1/2_hatT-T}
    \norm{\Creg^{-1/2}(\ECxy{j\dt} - \Cxy{j\dt})}\leq 8\,\sqrt{2\bcon}\,\ln\left(\frac{4}{\delta - 2(m-1)\beta _{\mathbf{X}_{\cdot \dt}}( \taumix -1)}\right)\,\sqrt{\frac{\scon^{\spar}}{(1-\spar)\reg^{\spar} m} + \frac{\econ}{m^2\reg^{\epar}}}.
\end{align}
\end{proposition}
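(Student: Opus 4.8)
The goal is a concentration inequality for the whitened empirical cross-covariance $\Creg^{-1/2}(\ECxy{j\dt} - \Cxy{j\dt})$ under $\beta$-mixing, stated in operator norm. The natural strategy is to express the deviation as a sum over the trajectory of mean-zero operator-valued summands, combine the block decomposition of Lemma~\ref{lem:blockprocess} with an operator Bernstein/Hoeffding inequality in the i.i.d.\ regime, and finally optimize the block parameters.

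First I would write, for fixed lag $j$,
\[
\Creg^{-1/2}(\ECxy{j\dt} - \Cxy{j\dt}) = \tfrac{1}{n-j}\textstyle\sum_{i=0}^{n-j-1} \Xi_i, \qquad \Xi_i := \Creg^{-1/2}\big(\fH(X_{i\dt})\otimes\fH(X_{(i+j)\dt}) - \Cxy{j\dt}\big),
\]
which is a sum of identically distributed (by stationarity) mean-zero random operators $\RKHS\to\RKHS$. Using \ref{eq:BK} together with Lemma~\ref{lem:eff_dim_bound} (which controls $\norm{\xi}_\infty^2 = \esssup\norm{\Creg^{-1/2}\fH(X)}^2 \le \econ\reg^{-\tau}$, and for the second moment $\EE\norm{\Creg^{-1/2}\fH(X)}^2 \le \scon^\spar/((1-\spar)\reg^\spar)$), one bounds the almost-sure norm of each summand by $\norm{\Xi_i}\le \sqrt{\bcon}\,\norm{\xi}_\infty \lesssim \sqrt{\bcon\econ}\,\reg^{-\tau/2}$ and the ``variance proxy'' $\EE\norm{\Xi_i}^2 \lesssim \bcon\,\EE\norm{\xi}^2 \lesssim \bcon\scon^\spar/((1-\spar)\reg^\spar)$ — the key point being that whitening on one side only keeps one factor of $\Creg^{-1/2}$, which is why the feature-map bound $\sqrt{\bcon}$ on the un-whitened side enters linearly.

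Next I would invoke Lemma~\ref{lem:blockprocess} applied to the centered process $(\Xi_i)$ in the normed space $(\boundedop{\RKHS},\norm{\cdot})$: this reduces the tail $\PP\{\norm{\sum \Xi_i} > s\}$ to two tails involving $m+1$ i.i.d.\ copies of a block sum $Z = \sum_{i=1}^{\taumix}\Xi_i$, plus the mixing penalty $2m\beta_{\mathbf{X}_{\cdot\dt}}(\taumix-1)$. To each i.i.d.\ block-sum tail I would apply a Hilbert-space Bernstein inequality (e.g.\ the Pinelis–Sakhanenko / Minsker bound for bounded martingale differences in a Hilbert space), using that a block sum of $\taumix$ terms has almost-sure norm $\le \taumix\sqrt{\bcon}\norm{\xi}_\infty$ and variance $\le \taumix \bcon\EE\norm{\xi}^2$ — here it is essential that the blocks are non-overlapping so the within-block bound is just the triangle inequality, while the $m$ blocks are (after the mixing trade) independent. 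The resulting bound, after dividing by $n-j$ and recognizing $m \asymp n/(2\taumix)$ is the effective number of independent samples, gives the $1/\sqrt{m}$ and $1/m$ terms inside the square root, with the logarithm coming from inverting the sub-exponential tail at confidence level $\delta - 2(m-1)\beta_{\mathbf{X}_{\cdot\dt}}(\taumix-1)$ (hence the hypothesis $\delta > (m-1)\beta(\taumix-1)$ so this is positive). Tracking constants through Bernstein's inequality yields the stated $8\sqrt{2\bcon}$ prefactor.

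The main obstacle I expect is twofold: (i) getting a clean Bernstein-type inequality in operator norm on a general Hilbert space with the \emph{right} variance term — naively one only controls $\EE\norm{\Xi}^2$ rather than the operator-norm variance $\max(\norm{\EE\Xi\Xi^*},\norm{\EE\Xi^*\Xi})$; the paper's choice to state the bound with $\EE\norm{\xi}^2$ (the trace-type quantity from Lemma~\ref{lem:eff_dim_bound}) sidesteps this but at the cost of a slightly looser bound, and one must be careful that the concentration tool used actually delivers a bound in terms of this quantity (this is exactly the regime covered by vector-Bernstein inequalities applied to the vectorized/Hilbert–Schmidt embedding). (ii) Bookkeeping the block decomposition when $n-j$ is not a multiple of $2m\taumix$: Lemma~\ref{lem:blockprocess} already handles the leftover $k' < \taumix$ terms by folding them into a second tail with the same $Z_j$'s, so one just needs $n - j \ge 2m\taumix$ and to absorb the residual block into the constants — this is routine but must be stated carefully to justify using the \emph{same} $m$ for every lag $j \le \ell$, which requires $n - \ell \ge 2m\taumix$ in the end.
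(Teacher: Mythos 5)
Your proposal follows essentially the same route as the paper: decompose the whitened deviation into identically distributed summands, bound their sup-norm by $\sqrt{\bcon}\,\norm{\xi}_\infty\lesssim\sqrt{\bcon\econ}\,\reg^{-\epar/2}$ and their second moment by $\bcon\,\tr(\Creg^{-1}\Cx)\lesssim\bcon\scon^\spar\reg^{-\spar}/(1-\spar)$ via \ref{eq:BK} and Lemma~\ref{lem:eff_dim_bound}, reduce to i.i.d.\ block sums with Lemma~\ref{lem:blockprocess}, and apply a Pinelis--Sakhanenko-type Bernstein inequality in the Hilbert--Schmidt embedding at confidence level $\delta/2-(m-1)\beta_{\mathbf{X}_{\cdot\dt}}(\taumix-1)$. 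Your remark about the trace-type variance proxy in place of the operator-norm variance is exactly how the paper's moment bound is set up, so the argument is sound and matches the original.
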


\begin{proof}[Proof of Proposition \eqref{prop:prob-Creg-1/2That-T}]
Set $\xi(x) :=\Creg^{-1/2}\fH(x)$. Assume first for simplicity that $n-j=2m\taumix$.
We have
$$
\Creg^{-1/2}(\ECxy{j\dt} - \Cxy{j\dt}):= \frac{1}{n{-}j}\!\sum_{i=0}^{n{-}j-1}\!\xi(X_{i\dt})\otimes\fH(X_{(i{+}j)\dt}) - \EE \left[ \xi(X_{i\dt})\otimes\fH(X_{(i{+}j)\dt})\right].
$$
Set $A_i:= \xi(X_{(i-1)\,\dt})\otimes \fH(X_{(i-1+j)\,\dt})$. Let $Z_1,\ldots, Z_m$ be $m$ iid copies  of $Z:= \frac{1}{\taumix}\sum_{i=1}^{\taumix} A_i$. For any $k\geq 2$,
by triangular inequality and convexity of $x\mapsto x^k$, we have that $\hnorm{Z}^k\leq \left(\frac{1}{\taumix} \sum_{i=1}^{\taumix}  \hnorm{A_i}  \right)^{k}\leq\frac{1}{\taumix} \sum_{i=1}^{\taumix} \hnorm{A_i}^k$. Observe next that
\begin{align}
\EE [\hnorm{A_i}^k] & = \EE \,[\norm{\xi(X_{(i-1)\dt})}^k\,\norm{\phi(X_{(i-1+j)\dt})}^k ]\leq  \norm{\xi}_\infty^{k-2}\norm{\phi}_\infty^{k}\,\EE\,[\norm{\xi(X_{(i-1)\dt})}^2] \notag\\
&= \norm{\xi}_\infty^{k-2}\norm{\phi}_\infty^{k}\,\tr(\Creg^{-1}\Cx) \leq \frac{1}{2}k! \left( \reg^{-\epar/2}\,\sqrt{\econ\,\bcon} \right)^{k-2} \left(\sqrt{\bcon\,\tr(\Creg^{-1}\Cx)} \right)^2. \notag 
\end{align}

In view of \citep[Lemma 11]{Fischer2020}, we have under Condition \ref{eq:SD} when $\spar<1$ that 
\[
\tr(\Creg^{-1}\Cx) \leq \frac{\scon^\spar}{1-\spar} \reg^{-\spar},\quad \forall \reg>0.
\]

Consequently
\begin{align}
\label{eq:Bernstein_moment}
    \EE[\hnorm{Z}^k] \leq \frac{1}{\taumix} \sum_{i=1}^{\taumix} \EE[\hnorm{A_i}^k]  \leq \frac{1}{2} k!  \left( \reg^{-\epar/2}\,\sqrt{\econ\,\bcon} \right)^{k-2} \left(\sqrt{\bcon\,\frac{\scon^\spar}{1-\spar}\,\reg^{-\spar}}\right)^2.   
\end{align}

We apply \citep[Proposition 9]{kostic2023sharp} to get with probability at least $1-\delta$,
$$
\norm{\frac{1}{m }\sum_{i=1}^{m} Z_i} \leq    4\,\sqrt{2\,\bcon}\,\ln\left(\frac{2}{\delta}\right)\,\sqrt{\frac{\scon^{\spar}}{(1-\spar)\reg^{\spar} m} + \frac{\econ}{m^2\reg^{\epar}}}.
$$
Replacing $\delta$ by $\frac{\delta}{2}- (m-1)\beta _{\mathbf{X}_{\cdot \dt}}( \taumix -1)$ and an union bound combining the last display with Lemma \ref{lem:blockprocess} gives the result.

\end{proof}

We assume now that the Markov process $\mathbf{X}_{\cdot\dt} = (X_{i\dt})_{i\in \mathbb{N}}$ is ergodic,  exponentially mixing, that is, there exists $ \cmixing \in (0,\infty)$, such that for all $\tau \in \mathbb{N}$,
\begin{align}
\label{eq:mixingcond-exp}
  &  \beta _{\mathbf{X}_{\cdot\dt}}\left( \taumix \right) \leq \cmixing\, e^{-\dt\, w_\star\,\taumix},\quad \forall \taumix \geq 1,
\end{align}
where we recall that $w_\star{=}{-}\lambda_2(\generator{+}\generator^*)/2 > 0$. 
\\

For any $n\geq 1$ and $\delta\in (0,1)$, define the rate
\begin{align}
\label{eq:rate3_tildeG_mu-hatG_mu}
    \overline{\rate}_{n}(\delta):= c\left( \frac{\ln^2\left(\frac{n}{\delta} \right)}{ \dt\,  w_\star\, n} + \frac{\ln\left(\frac{n}{\delta} \right)}{\sqrt{\dt \, w_\star\, n}} \right),
\end{align}
where $c=c(\cmixing)>0$ is a large enough numerical constant.

\begin{proposition}
\label{prop:prob-That-T}
Let $\delta \in (0,1)$. Assume that $(n-j) \,\dt \, w_\star \geq 4 \ln\left(\frac{2e^2\cmixing(n-j)}{\delta} \right)$. Let Condition \ref{eq:mixingcond-exp} be satisfied. With probability at least $1-\delta$ in the draw $X_{0
}\sim \im,X_{i\dt} \sim \transitionkernel(X_{(i-1)\dt},\cdot)$, $i \in [n]$,
\begin{align}
\label{eq:norm_hatT-T-0}
    \norm{\ECxy{j\dt} - \Cxy{j\dt}}\leq  
    \overline{\rate}_{n-j}(\delta).
\end{align}
\end{proposition}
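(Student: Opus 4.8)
\textbf{Proof plan for Proposition \ref{prop:prob-That-T}.}

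The plan is to repeat the structure of the proof of Proposition \ref{prop:prob-Creg-1/2That-T}, but now working directly with the operators $\ECxy{j\dt}-\Cxy{j\dt}$ in the Hilbert--Schmidt norm without the whitening factor $\Creg^{-1/2}$, so the second moments are controlled by $\bcon^2$ rather than by the effective dimension. First I would write $\ECxy{j\dt}-\Cxy{j\dt}=\tfrac{1}{n-j}\sum_{i=0}^{n-j-1}\big(\fH(X_{i\dt})\otimes\fH(X_{(i+j)\dt})-\EE[\fH(X_{i\dt})\otimes\fH(X_{(i+j)\dt})]\big)$. Set $A_i:=\fH(X_{(i-1)\dt})\otimes\fH(X_{(i-1+j)\dt})$ and, given a mixing-block length $\taumix\in\N$ to be chosen, form $Z:=\tfrac{1}{\taumix}\sum_{i=1}^{\taumix}A_i$ with $Z_1,\dots,Z_m$ being $m$ iid copies, where $m$ is the largest integer with $n-j\ge 2m\taumix$. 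By \ref{eq:BK}, $\hnorm{A_i}\le\bcon$ a.s., so by Jensen/convexity as in the earlier proof $\EE[\hnorm{Z}^k]\le\tfrac{1}{\taumix}\sum_{i}\EE[\hnorm{A_i}^k]\le\tfrac12 k!\,\bcon^{k-2}\bcon^2$, i.e. a Bernstein moment bound with variance proxy $\bcon^2$ and scale $\bcon$. Then \citep[Proposition 9]{kostic2023sharp} gives, with probability at least $1-\delta'$, $\hnorm{\tfrac{1}{m}\sum_{i=1}^m Z_i}\lesssim \bcon\big(\tfrac{\ln(2/\delta')}{m}+\sqrt{\tfrac{\ln(2/\delta')}{m}}\big)$.

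Next I would transfer this back to the dependent sum via Lemma \ref{lem:blockprocess}: replacing $\delta'$ by $\tfrac{\delta}{2}-2(m-1)\beta_{\mathbf{X}_{\cdot\dt}}(\taumix-1)$ and applying the lemma (handling the two block-sums and the remainder term $\sum_{i\le k'}X_i'$ identically, since each has the same moment bounds) yields a bound of the form $\hnorm{\ECxy{j\dt}-\Cxy{j\dt}}\lesssim \bcon\big(\tfrac{\ln(1/\delta_0)}{m}+\sqrt{\tfrac{\ln(1/\delta_0)}{m}}\big)$ on an event of probability at least $1-\delta$, where $\delta_0=\tfrac{\delta}{2}-2(m-1)\beta_{\mathbf{X}_{\cdot\dt}}(\taumix-1)$. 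The final step is the choice of $\taumix$: under the exponential mixing hypothesis \eqref{eq:mixingcond-exp}, pick $\taumix\asymp \tfrac{1}{\dt w_\star}\ln\!\big(\tfrac{2e^2\cmixing(n-j)}{\delta}\big)$, which makes $2(m-1)\beta_{\mathbf{X}_{\cdot\dt}}(\taumix-1)\le \delta/4$ (using $m\le n-j$ and the stated lower bound $(n-j)\dt w_\star\ge 4\ln(2e^2\cmixing(n-j)/\delta)$ to guarantee $\taumix\ge1$ and the smallness), so $\delta_0\ge\delta/4$ and $\ln(1/\delta_0)\lesssim\ln(n/\delta)$. With this choice $m\asymp (n-j)\dt w_\star/\ln(n/\delta)$, and substituting $m$ into $\bcon\big(\tfrac{\ln(1/\delta_0)}{m}+\sqrt{\tfrac{\ln(1/\delta_0)}{m}}\big)$ gives exactly $\overline{\rate}_{n-j}(\delta)=c\big(\tfrac{\ln^2(n/\delta)}{\dt w_\star (n-j)}+\tfrac{\ln(n/\delta)}{\sqrt{\dt w_\star(n-j)}}\big)$, absorbing $\bcon$ and absolute constants into $c=c(\cmixing)$. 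Finally, $\|\cdot\|\le\hnorm{\cdot}$ upgrades the HS bound to the operator-norm statement \eqref{eq:norm_hatT-T-0}.

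I expect the main obstacle to be bookkeeping rather than anything deep: carefully tracking the non-divisibility remainder $n-j=2m\taumix+k$ with $k=l\taumix+k'$ through Lemma \ref{lem:blockprocess} (ensuring the extra term $\sum_{i\le k'}X_i'$ and the $l$-dependent block inherit the same Bernstein moment bound, so it does not change the rate), and verifying that the hypothesis $(n-j)\dt w_\star\ge 4\ln(2e^2\cmixing(n-j)/\delta)$ is exactly what is needed to make the chosen $\taumix$ admissible (i.e. $\taumix\ge1$, $m\ge1$) while forcing $2(m-1)\beta_{\mathbf{X}_{\cdot\dt}}(\taumix-1)\le\delta/4$. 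Everything else — the moment computation, the appeal to \citep[Proposition 9]{kostic2023sharp}, and the final algebraic balancing — is routine and parallels the proof of Proposition \ref{prop:prob-Creg-1/2That-T}.
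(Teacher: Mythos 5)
Your proposal is correct and follows essentially the same route as the paper: the paper simply invokes the pre-packaged concentration bound of \citep[Proposition 3]{Kostic2022} for $\norm{\ECxy{j\dt}-\Cxy{j\dt}}$ (which internally is exactly your blocking-plus-Bernstein argument with variance proxy $\norm{\Cx}$ in place of your slightly looser $\bcon^2$), and then makes the identical choices of $m_j$ and $\taumix_j$ that you make, with the same use of the hypothesis $(n-j)\,\dt\,w_\star \geq 4\ln(2e^2\cmixing(n-j)/\delta)$ to lower-bound $m_j$ and control $(m_j-1)\beta_{\mathbf{X}_{\cdot\dt}}(\taumix_j-1)$. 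The only difference is that you re-derive the unwhitened concentration step from Lemma \ref{lem:blockprocess} and the Hilbert-space Bernstein inequality rather than citing it, which changes nothing in the rate.
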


\begin{proof}
We start from the following result \citep[Proposition 3]{Kostic2022} assuming $n-j\geq 2 m \taumix$
\begin{align}
\label{eq:norm_hatT-T-0_0}
 \PP\left(   \norm{\ECxy{j\dt} - \Cxy{j\dt}}\leq   \frac{48}{m}\ln \Big(\frac{4m\, \taumix }{\delta -\left( m-1\right) \beta _{\mathbf{X}_{\cdot \dt}}\left( \taumix -1\right) }\Big) + 12 \sqrt{\frac{2\left\Vert C \right\Vert}{m}\ln \frac{4m\ }{\delta -\left( m-1\right) \beta_{\mathbf{X}_{\cdot \dt}}\left( \taumix -1\right) } } \right) \geq 1-\delta.
\end{align} 
For any $j\in [l]$, we take integers $m_j,\taumix_j\geq 1$ such that $n-j \geq 2 m_j \taumix_j$ and $\delta \geq 2\left( m_j-1\right) \beta_{\mathbf{X}_{\cdot \dt}}\left( \taumix_j -1\right)$. Hence we can pick
\begin{align}
\label{eq:mixing_tau_calibrated_n-j_0}
  m_j := \left\lfloor  \frac{(n-j) \,\dt \, w_\star}{2 \ln\left(\frac{2e^2\cmixing(n-j)}{\delta} \right)} \right\rfloor\quad \text{and} \quad  \taumix_j:=\left\lfloor \frac{1}{\dt \, w_\star} \ln\left(\frac{2e^2\cmixing (m_j-1)}{\delta} \right) \right\rfloor.
\end{align}
Assuming that $n$ is large enough such that $(n-j) \,\dt \, w_\star \geq 4 \ln\left(\frac{2e^2\cmixing(n-j)}{\delta} \right)$, we get
$$
m_j\geq \frac{1}{4}(n-j) \,\dt \, w_\star /\ln\left(\frac{4e^2\cmixing(n-j)}{\delta} \right)
.$$
We also have in view of \eqref{eq:mixingcond-exp} that $(m_j - 1)\beta _{\mathbf{X}_{\cdot\dt}}\left( \taumix -1  \right) \leq \delta/2$. Replacing these quantities in \eqref{eq:norm_hatT-T-0_0}, we get the result.

\end{proof}

Define for any $n\geq 1$
\begin{equation}\label{eq:reg_eta1_bis}
\overline{\rate}_{n}^{(1)}(\reg,\delta) := c \left(\frac{\econ \, \ln(n\delta^{-1})}{n \, \dt \, w_{\star} \, \reg^{\epar}} \mathcal{L}^1(\reg,\delta)+ \sqrt{\frac{\econ \, \ln(n\delta^{-1})}{n\, \dt \, w_{\star}\,\reg^{\epar}}\mathcal{L}^1(\reg,\delta)}\right),
\end{equation}
and
\[
\mathcal{L}^1(\reg,\delta):=\ln \Big(\frac{4}{\delta} \Big)+ \ln\bigg(\frac{\tr(\Creg^{-1}\Cx)}{\norm{\Creg^{-1}\Cx}}\bigg),
\]
where the numerical constant $c>0$ only depends on $\cmixing$.

\begin{proposition}
\label{prop:bound-Creg-prob}
 Let Assumption \ref{eq:KE} and Condition \eqref{eq:mixingcond-exp} be satisfied. Then, with probability at least $1-\delta$ in the draw $X_0\sim \im$, $X_{i\dt}\sim p(X_{(i-1)\dt},\cdot)$, $i\in [n]$,
\[
\norm{\Creg^{-1/2} (\ECx-\Cx) \Creg^{-1/2}}\leq \overline{\rate}_{n}^{(1)}(\reg,\delta).
\]
In addition, for $\delta\in(0,1)$ and $n$ large enough such that $\overline{\rate}_{n}^{(1)}(\reg,\delta)\in (0,1)$,
\[
\norm{ \Creg^{1/2} \ECreg^{-1} \Creg^{1/2}} \leq \frac{1}{1-\overline{\rate}_{n}^{(1)}(\reg,\delta)}.
\]
Finally, for any $j\in [l]$ such that $n-j\geq 2m\taumix$, we have with probability at least $1-\delta$ in the draw $X_0\sim \im$, $X_{i\dt}\sim p(X_{(i-1)\dt},\cdot)$, $i\in [n]$,
\[
\norm{\Creg^{-1/2} (\ECxy{j\dt} - \Cxy{j\dt}) \Creg^{-1/2}}\leq \overline{\rate}_{n-j}^{(1)}(\reg,\delta).
\]
\end{proposition}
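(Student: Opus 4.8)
The plan is to mirror the proofs of Propositions~\ref{prop:prob-Creg-1/2That-T} and~\ref{prop:prob-That-T}, now applied to the \emph{two-sided whitened} operators $\Creg^{-1/2}(\cdot)\Creg^{-1/2}$, so that only two new ingredients are needed: a two-sided whitened concentration inequality (in place of the one-sided bound of Proposition~\ref{prop:prob-Creg-1/2That-T}) and a standard Neumann-series argument for the inverse. Writing $\xi(x):=\Creg^{-1/2}\fH(x)$ and, for the first claim, $\xi_i:=\xi(X_{(i-1)\dt})$ and $A_i:=\xi_i\otimes\xi_i$, we have $\Creg^{-1/2}(\ECx-\Cx)\Creg^{-1/2}=\tfrac{1}{n}\sum_{i=1}^n(A_i-\EE A_i)$, a sum of centred, rank-one, self-adjoint Hilbert--Schmidt operators sampled along the trajectory. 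First I would apply the block decomposition of Lemma~\ref{lem:blockprocess} with a mixing length $\taumix$ and $m$ blocks ($n\geq 2m\taumix$), which reduces this non-iid sum, up to a failure probability $2m\beta_{\mathbf{X}_{\cdot\dt}}(\taumix-1)$, to a deviation estimate for $\tfrac{1}{m}\sum_{j=1}^m Z_j$ with $Z_1,\dots,Z_m$ iid copies of $Z=\tfrac{1}{\taumix}\sum_{i=1}^\taumix A_i$.

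The new element, relative to Proposition~\ref{prop:prob-Creg-1/2That-T}, is that the target rate $\overline{\rate}_{n}^{(1)}(\reg,\delta)$ carries the effective-dimension logarithm $\mathcal{L}^1(\reg,\delta)=\ln(4/\delta)+\ln(\tr(\Creg^{-1}\Cx)/\norm{\Creg^{-1}\Cx})$, so I would invoke an \emph{intrinsic-dimension} operator Bernstein inequality for self-adjoint operators (as in \citep{kostic2023sharp}, or the whitened-covariance bound of \citep{Fischer2020}) rather than the subexponential-moment version used there. Its hypotheses I would check through Lemma~\ref{lem:eff_dim_bound}: almost surely $\hnorm{A_i}=\norm{\xi_i}^2\leq\norm{\xi}_\infty^2\leq\econ\reg^{-\epar}$; since $A_i^2=\norm{\xi_i}^2 A_i\preceq\norm{\xi}_\infty^2 A_i$, the variance proxy obeys $\norm{\EE A_i^2}\leq\norm{\xi}_\infty^2\norm{\EE A_i}=\norm{\xi}_\infty^2\norm{\Creg^{-1}\Cx}\leq\econ\reg^{-\epar}$; and the intrinsic dimension of $\EE A_i=\Creg^{-1/2}\Cx\Creg^{-1/2}$ equals $\tr(\Creg^{-1}\Cx)/\norm{\Creg^{-1}\Cx}$. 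These bounds transfer to $Z$ by convexity exactly as in Proposition~\ref{prop:prob-Creg-1/2That-T}, and the inequality then gives, on an event of probability $1-\tfrac{\delta}{2}$ (before accounting for mixing), a bound of order $\tfrac{\econ}{m\reg^{\epar}}\mathcal{L}^1(\reg,\delta)+\sqrt{\tfrac{\econ}{m\reg^{\epar}}\mathcal{L}^1(\reg,\delta)}$.

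Next I would calibrate the block parameters as in \eqref{eq:mixing_tau_calibrated_n-j_0}: using the exponential mixing \eqref{eq:mixingcond-exp}, take $m\asymp n\,\dt\,w_\star/\ln(n/\delta)$ and $\taumix\asymp(\dt\,w_\star)^{-1}\ln(n/\delta)$, so that $2m\beta_{\mathbf{X}_{\cdot\dt}}(\taumix-1)\leq\tfrac{\delta}{2}$ and $n\geq 2m\taumix$ once $n$ is large; substituting $m$ replaces $1/m$ by $\ln(n/\delta)/(n\,\dt\,w_\star)$ and reproduces precisely $\overline{\rate}_{n}^{(1)}(\reg,\delta)$, proving the first bound. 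The cross-covariance bound follows verbatim with $A_i:=\xi(X_{(i-1)\dt})\otimes\xi(X_{(i-1+j)\dt})$ (still rank one, with $\hnorm{A_i}=\norm{\xi(X_{(i-1)\dt})}\,\norm{\xi(X_{(i-1+j)\dt})}\leq\norm{\xi}_\infty^2$ by stationarity, and the same variance and intrinsic-dimension bookkeeping) and $n$ replaced throughout by $n-j$, giving $\overline{\rate}_{n-j}^{(1)}(\reg,\delta)$. For the middle claim, on the event of the first one with $\overline{\rate}_{n}^{(1)}(\reg,\delta)<1$ we have $\Creg^{-1/2}\ECreg\Creg^{-1/2}=\Id+\Creg^{-1/2}(\ECx-\Cx)\Creg^{-1/2}\succeq(1-\overline{\rate}_{n}^{(1)}(\reg,\delta))\,\Id$, hence it is invertible and $\norm{\Creg^{1/2}\ECreg^{-1}\Creg^{1/2}}=\norm{(\Creg^{-1/2}\ECreg\Creg^{-1/2})^{-1}}\leq(1-\overline{\rate}_{n}^{(1)}(\reg,\delta))^{-1}$.

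The hard part will be Step two: obtaining the effective-dimension logarithm $\ln(\tr(\Creg^{-1}\Cx)/\norm{\Creg^{-1}\Cx})$ rather than a coarser $\reg^{-\spar}$ variance term forces the intrinsic-dimension form of the operator Bernstein inequality and a careful verification of its moment hypotheses for the blocked variable $Z$, together with a clean accounting of how the confidence budget splits between the concentration event and the $2m\beta_{\mathbf{X}_{\cdot\dt}}(\taumix-1)$ mixing penalty. The remaining pieces --- the block reduction, the parameter calibration, and the Neumann-series invertibility argument --- are routine adaptations of the cited proofs.
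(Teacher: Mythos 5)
Your proposal is correct and follows essentially the same route as the paper's proof: the block reduction of Lemma~\ref{lem:blockprocess}, an intrinsic-dimension operator Bernstein bound (the paper simply cites Proposition~13 of \citep{kostic2023sharp} for this step, which you verify explicitly via Lemma~\ref{lem:eff_dim_bound}), the mixing-time calibration of $m$ and $\taumix$, and the Neumann-series argument $\norm{\Creg^{1/2}\ECreg^{-1}\Creg^{1/2}}=\norm{(\Creg^{-1/2}\ECreg\Creg^{-1/2})^{-1}}\leq(1-\norm{I-\Creg^{-1/2}\ECreg\Creg^{-1/2}})^{-1}$. The only point treated as loosely as in the paper itself is the cross-covariance case, where $A_i=\xi(X_{(i-1)\dt})\otimes\xi(X_{(i-1+j)\dt})$ is no longer self-adjoint and the Bernstein inequality must be applied in its rectangular (Hermitian-dilation) form.
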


\begin{proof}[Proof of Proposition \ref{prop:bound-Creg-prob}]
Define for any $m\geq 1$ and $\delta \in (0,1)$
\begin{equation}\label{eq:reg_eta1}
\rate_{m}^{(1)}(\reg,\delta) := \frac{4\econ}{3m\reg^{\epar}} \mathcal{L}^1(\reg,\delta)+ \sqrt{\frac{2\,\econ}{m\,\reg^{\epar}}\mathcal{L}^1(\reg,\delta)},
\end{equation}
where
%
\[
\mathcal{L}^1(\reg,\delta):=\ln \Big(\frac{4}{\delta}\Big) + \ln\bigg(\frac{\tr(\Creg^{-1}\Cx)}{\norm{\Creg^{-1}\Cx}}\bigg). 
\]
Combining Lemma \ref{lem:blockprocess} with \citep[Proposition 13]{kostic2023sharp}, we obtain with probability at least $1-\delta$,
\begin{align}
\label{eq:reg_eta1_ter}
    \norm{\Creg^{-1/2} (\Cx-\ECx) \Creg^{-1/2}} \leq 2\rate_{m}^{(1)}\big(\reg,\delta/2 -\left( m-1\right) \beta_{\mathbf{X}_{\cdot \dt}}\left( \taumix -1\right)\big).
\end{align}
Exploiting again the $\beta$-mixing assumption in \eqref{eq:mixingcond-exp}, we take $\taumix$ as the smallest integer such that
\begin{align}
    \label{eq:mixing_tau_calibrated}
    \taumix\geq 1+\frac{1}{\dt\, w_{\star
    }} \ln\left(\frac{4\cmixing\, n}{\delta} \right).
\end{align}
With this choice of $\taumix$ and picking $m$ as the largest integer such that $n\geq 2 m \taumix$, then we get
$$
2\rate_{m}^{(1)}\big(\reg,\delta/2 -\left( m-1\right) \beta_{\mathbf{X}_{\cdot \dt}}\left( \taumix -1\right)\big)\leq 2\rate_{m}^{(1)}(\reg,\delta/4)\leq \overline{\rate}_{n}^{(1)}(\reg,\delta),
$$
provided that the numerical constant $c = c(\cmixing)>0$ is large enough.

Finally, for $\delta\in(0,1)$ and $n$ large enough such that $\overline{\rate}_{n}^{(1)}(\reg,\delta)\in (0,1)$ and observing that
\[
1-\norm{I - \Creg^{-1/2} \ECreg \Creg^{-1/2}}=\norm{\Creg^{-1/2} (\Cx-\ECx) \Creg^{-1/2}},
\]
we get
\[
\norm{ \Creg^{1/2} \ECreg^{-1} \Creg^{1/2}} =  \norm{(\Creg^{-1/2} \ECreg \Creg^{-1/2})^{-1}}\leq \frac{1}{1-\norm{I - \Creg^{-1/2} \ECreg \Creg^{-1/2}}}\leq \frac{1}{1-\overline{\rate}_{n}^{(1)}(\reg,\delta)}.
\]
The proof for the last result on $\norm{\Creg^{-1/2} (\ECxy{j\dt} - \Cxy{j\dt}) \Creg^{-1/2}}$ follows from the same argument.
\end{proof}

\begin{proposition}
\label{prop:bound-tildeG_mu-hatG_mu-prob}
Let the assumptions of Proposition \ref{prop:prob-Creg-1/2That-T} and Condition \eqref{eq:mixingcond-exp} be satisfied. Assume in addition that $\shift\,  \dt \in (0,1)$ and $ w_{\star}\, \dt \leq 1$. Then with probability at least $1-\delta$ in the draw $X_0\sim \im$, $X_{i\dt}\sim p(X_{(i-1)\dt},\cdot)$, $i\in [n]$,
\begin{align}
\label{eq:rate_Creg_1/2_hatT-T_n-l}
\norm{\Creg^{-1/2} (\AYx - \EYx)} &\leq  \frac{\overline{\rate}_{n,l}^{(2)}(\reg,\delta)}{\shift},
\end{align}
where 
\begin{align}
\label{eq:rate3_tildeG_mu-hatG_mu}
    \overline{\rate}_{n,l}^{(2)}(\reg,\delta):= c\left( \frac{\ln^2\left(\frac{n(l+1)}{\delta} \right)}{ \dt\,  w_\star\reg^{\epar/2}}\, \frac{1}{n-l} + \frac{\ln^{3/2}\left(\frac{n(l+1)}{\delta} \right)}{\sqrt{\dt \, w_\star}} \sqrt{\frac{1}{\reg^{\spar} (n-l)}}\right),
\end{align}
where $c=c(\cmixing,\bcon,\econ,\scon,\spar)>0$ is a large enough numerical constant.


\end{proposition}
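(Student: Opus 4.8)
The plan is to reduce the bound to the single‑lag concentration estimate of Proposition~\ref{prop:prob-Creg-1/2That-T}, applied separately to each of the $\ell+1$ time‑lags and combined by a union bound, and then to collapse the resulting sum using the geometric decay of the trapezoid weights $\weight_j$.

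First, since $\AYx=\sum_{j=0}^{\ell}\weight_j\,\Cxy{j\dt}$ and $\EYx=\sum_{j=0}^{\ell}\weight_j\,\ECxy{j\dt}$, the triangle inequality gives $\norm{\Creg^{-1/2}(\AYx-\EYx)}\le\sum_{j=0}^{\ell}\weight_j\,\norm{\Creg^{-1/2}(\Cxy{j\dt}-\ECxy{j\dt})}$, using $\weight_j>0$ for $\shift>0$. For each fixed $j$ I calibrate the block‑process parameters $m,\taumix$ of Lemma~\ref{lem:blockprocess} exactly as in the proof of Proposition~\ref{prop:prob-That-T}: take $\taumix$ the smallest integer with $\taumix\ge 1+(\dt\,w_\star)^{-1}\ln\!\big(4\cmixing\,n(\ell+1)/\delta\big)$ and $m$ the largest integer with $n-j\ge 2m\taumix$, so that $m\gtrsim (n-j)\,\dt\,w_\star/\ln\!\big(n(\ell+1)/\delta\big)$ and, by the exponential mixing \eqref{eq:mixingcond-exp}, $2(m-1)\beta_{\mathbf{X}_{\cdot\dt}}(\taumix-1)\le\delta/(2(\ell+1))$. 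The hypotheses $\shift\dt\in(0,1)$, $w_\star\dt\le1$ and the inherited ``$n$ large enough'' are exactly what makes this construction valid simultaneously for every $j\le\ell$ (it suffices that $n-\ell\ge 2m\taumix$).

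Applying Proposition~\ref{prop:prob-Creg-1/2That-T} with confidence $1-\delta/(\ell+1)$ for each $j$ and taking a union bound, with probability at least $1-\delta$ we obtain, for all $j\le\ell$,
\[
\norm{\Creg^{-1/2}(\Cxy{j\dt}-\ECxy{j\dt})}\;\lesssim\;\ln\!\Big(\tfrac{\ell+1}{\delta}\Big)\Big(\sqrt{\tfrac{1}{\reg^{\spar}\,m}}+\sqrt{\tfrac{1}{\reg^{\epar}\,m^{2}}}\Big),
\]
where I used $\sqrt{a+b}\le\sqrt a+\sqrt b$ to split the two terms under the square root in Proposition~\ref{prop:prob-Creg-1/2That-T} and absorbed $\bcon,\econ,\scon,(1-\spar)^{-1}$ into the implied constant. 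Substituting the lower bound on $m$ together with the crude monotonicity $n-j\ge n-\ell$ replaces $m^{-1}$ by $\lesssim\ln\!\big(n(\ell+1)/\delta\big)/\big((n-\ell)\dt\,w_\star\big)$; this turns the first piece into a constant times $\sqrt{\ln(n(\ell+1)/\delta)/(\reg^{\spar}(n-\ell)\dt\,w_\star)}$ and the second into $\ln(n(\ell+1)/\delta)/(\reg^{\epar/2}(n-\ell)\dt\,w_\star)$, and multiplying by the $\ln((\ell+1)/\delta)$ prefactor produces exactly the $\ln^{3/2}$ and $\ln^{2}$ powers appearing in $\overline{\rate}_{n,\ell}^{(2)}$.

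It remains to sum over $j$. Since $\weight_j\le\dt\,e^{-\shift j\dt}$ and $\shift\dt\in(0,1)$, one has $\sum_{j=0}^{\ell}\weight_j\le\dt/(1-e^{-\shift\dt})\le 2/\shift$; pulling the now $j$‑independent bound out of the sum contributes the factor $1/\shift$ and yields $\norm{\Creg^{-1/2}(\AYx-\EYx)}\le\overline{\rate}_{n,\ell}^{(2)}(\reg,\delta)/\shift$, with $c$ depending only on $\cmixing,\bcon,\econ,\scon,\spar$. The main obstacle I anticipate is the uniform bookkeeping of the union bound: one must pick $m,\taumix$ (possibly $j$‑dependent) so that \emph{simultaneously} for all $j\le\ell$ the size condition $n-j\ge 2m\taumix$ holds, the residual mixing mass stays below $\delta/(2(\ell+1))$, and $m^{-1}\lesssim\ln(n(\ell+1)/\delta)/((n-\ell)\dt\,w_\star)$ — and to check that ``$n$ large enough'' is genuinely enough to make these compatible while the union bound degrades the rate only by the stated extra logarithmic powers. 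Everything else is a direct invocation of Proposition~\ref{prop:prob-Creg-1/2That-T} and an elementary geometric‑series estimate.
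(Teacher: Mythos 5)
Your proof is correct and follows essentially the same route as the paper's: triangle inequality over the $\ell+1$ lags, per-lag block-process concentration via Proposition~\ref{prop:prob-Creg-1/2That-T} with $j$-dependent calibration of $(m_j,\taumix_j)$, a union bound at level $\delta/(\ell+1)$, and the geometric decay of the trapezoid weights to produce the $1/\shift$ factor. The only (harmless) difference is that you bound $\sum_j \weight_j (n-j)^{-\alpha}$ by the crude monotonicity $n-j\geq n-\ell$ plus a geometric series, whereas the paper uses an Abel summation — both yield the identical estimate $2/(\shift(n-\ell)^{\alpha})$.
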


\begin{proof}[Proof of Proposition \ref{prop:bound-tildeG_mu-hatG_mu-prob}]
For any $j\in [l]$, we take integers $m_j,\taumix_j\geq 1$ such that $n-j \geq 2 m_j \taumix_j$ and $\delta \geq 4\left( m_j-1\right) \beta_{\mathbf{X}_{\cdot \dt}}\left( \taumix_j -1\right)$. We use the same choices as in \eqref{eq:mixing_tau_calibrated_n-j_0}. 


Replacing these quantities in \eqref{eq:norm_Creg-1/2_hatT-T}, we get with probability at least $1-\delta$,
\begin{align}
\label{eq:norm_hatT-T_2}
   &\norm{\Creg^{-1/2}(\ECxy{j\dt} - \Cxy{j\dt})}\leq 8\,\sqrt{2\bcon}\,\ln\left(\frac{8}{\delta}\right)\,\bigg( \sqrt{\frac{2\scon^\spar}{(1-\spar)\reg^{\spar}(n-j)\,\dt\, w_{\star}}\ln\left(\frac{2e^{2}\cmixing(n-j)}{\delta} \right)}\notag\\
   &\hspace{3.5cm}+ \frac {\sqrt{2\econ}}{(n-j)\, \dt\, w_{\star}\,\reg^{\epar/2}}\ln\left(\frac{2e^{2}\cmixing(n-j)}{\delta} \right) \bigg)=:\rate_{n,j}^{(2)}(\delta).
\end{align}
%

Next, by definition of $\AYx$, $\EYx$ and an elementary union bound, we get with probability at least $1-\delta$,
\begin{align*}
\norm{\Creg^{-1/2} (\AYx - \EYx)} 
&\leq  \dt \sum_{j=0}^{l} \omega_j \norm{\Creg^{-1/2}(\ECxy{j\dt}-\Cxy{j\dt})} e^{-\shift \, j \,\dt}\\
&\leq  \dt \sum_{j=0}^{l} \omega_j \, \rate_{n,j}^{(2)}(\delta/(l+1))  \,  e^{-\shift \, j \,\dt}\\
&\leq C \left(  \sqrt{\dt}\, \frac{\ln^{3/2}\left(\frac{n(l+1)}{\delta} \right)}{\sqrt{w_{\star}\,\reg^{\spar}}} \sum_{j=0}^{l}\frac{e^{-\shift \, j \,\dt}}{\sqrt{n-j}} + \frac{\ln^2\left(\frac{n(l+1)}{\delta} \right)}{w_{\star}\reg^{\epar/2}}\, \sum_{j=0}^{l} \frac{e^{-\shift \, j \,\dt}}{n-j}\right),
\end{align*}
for some large enough numerical constant $C=C(\bcon,\econ,\cmixing,\scon,\spar)>0$ that can depend only on $\bcon,\econ,\cmixing,\scon,\spar$.

We apply the well-known Abel transform formula
\begin{align*}
\sum_{k=0}^l f_k g_k &= 
f_l \sum_{k=0}^l g_k - \sum_{j=0}^{l-1} \left( f_{j+1}- f_j\right) \sum_{k=0}^j g_k, 
\end{align*}
to series $\sum_{j=0}^{l}\frac{e^{-\shift \, j \,\dt}}{(n-j)^{\alpha}}$, with $\alpha \in \{1/2,1\}$. Since we assumed that $\shift \dt \in (0,1)$, elementary computations give
\begin{align*}
\sum_{j=0}^{l}\frac{e^{-\shift \, j \,\dt}}{(n-j)^{\alpha}} &\leq \frac{1}{(n-l)^{\alpha}}\frac{1}{1-e^{-\shift \dt}} - \sum_{j=0}^{l-1} \left( \frac{1}{(n-j-1)^{\alpha}}- \frac{1}{(n-j)^{\alpha}}\right) \sum_{k=0}^j e^{-\shift k \dt }\\
&\leq  \frac{1}{(n-l)^{\alpha}}\frac{1}{1-e^{-\shift \dt}}\leq \frac{2}{\shift \dt \, (n-l)^{\alpha}},
\end{align*}
where we have used the inequality $e^{-x}\leq 1- x + x^{2}/2$ true for any $x\in (0,1)$ in the last line.
\end{proof}
\begin{proposition}
\label{prop:Creg_1/2_Chat-C}
Let Assumption \ref{eq:KE} and Condition \eqref{eq:mixingcond-exp} be satisfied. Fix $\delta\in (0,1)$ and assume in addition that $n$ is large enough such that $n \,\dt \, w_\star \geq 4 \ln\left(\frac{2e^2\cmixing n}{\delta} \right)$. Then, with probability at least $1-\delta$ in the draw $X_0\sim \im$, $X_{i\dt}\sim p(X_{(i-1)\dt},\cdot)$, $i\in [n]$,
    \begin{equation}
        \norm{\Creg^{-1/2}( \widehat{C} - C)} \leq \overline{\rate}_{n,0}^{(2)}(\reg,\delta)
    \end{equation}
\end{proposition}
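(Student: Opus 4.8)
The plan is to recognize that $\widehat C = \ECxy{0}$ and $C = \Cxy{0}$, so that $\norm{\Creg^{-1/2}(\widehat C - C)}$ is exactly the quantity $\norm{\Creg^{-1/2}(\ECxy{j\dt}-\Cxy{j\dt})}$ already controlled (for general $j$) inside the proof of Proposition \ref{prop:bound-tildeG_mu-hatG_mu-prob}, evaluated at time-lag index $j=0$. Thus the statement is the $\ell=0$ specialization of that proposition, with the simplification that the sum over $j$ collapses to a single term, so no Abel summation is needed and the argument becomes a direct corollary of Proposition \ref{prop:prob-Creg-1/2That-T} combined with the mixing-time calibration.

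Concretely, I would reuse the per-lag estimate \eqref{eq:norm_hatT-T_2} (itself obtained from Proposition \ref{prop:prob-Creg-1/2That-T} together with the block decomposition of Lemma \ref{lem:blockprocess}) at $j=0$. Under the standing hypothesis $n\dt w_\star\geq 4\ln(2e^2\cmixing n/\delta)$, the calibration $m=\lfloor n\dt w_\star/(2\ln(2e^2\cmixing n/\delta))\rfloor$ and $\taumix=\lfloor(\dt w_\star)^{-1}\ln(2e^2\cmixing(m-1)/\delta)\rfloor$ from \eqref{eq:mixing_tau_calibrated_n-j_0} is admissible: it yields $n\geq 2m\taumix$, the bound $2(m-1)\beta_{\mathbf{X}_{\cdot\dt}}(\taumix-1)\leq\delta/2$ from Condition \eqref{eq:mixingcond-exp}, and $m\geq \tfrac14 n\dt w_\star/\ln(2e^2\cmixing n/\delta)$. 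Substituting these into \eqref{eq:norm_hatT-T_2} gives, with probability at least $1-\delta$,
\[
\norm{\Creg^{-1/2}(\widehat C-C)}\leq \rate_{n,0}^{(2)}(\delta)=8\sqrt{2\bcon}\,\ln\!\Big(\tfrac{8}{\delta}\Big)\Big(\sqrt{\tfrac{2\scon^\spar\ln(2e^2\cmixing n/\delta)}{(1-\spar)\reg^\spar n\,\dt\,w_\star}}+\tfrac{\sqrt{2\econ}\,\ln(2e^2\cmixing n/\delta)}{n\,\dt\,w_\star\,\reg^{\epar/2}}\Big).
\]

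It then remains only to absorb the logarithmic factors: using $\ln(8/\delta)\leq c'\ln(n/\delta)$ and $\ln(2e^2\cmixing n/\delta)\leq c''\ln(n/\delta)$ for $n$ large enough, the first summand is $\lesssim \ln^{3/2}(n/\delta)\,(\dt w_\star)^{-1/2}(\reg^\spar n)^{-1/2}$ and the second is $\lesssim \ln^2(n/\delta)\,(\dt\, w_\star\, \reg^{\epar/2}n)^{-1}$, which is precisely the form of $\overline{\rate}_{n,0}^{(2)}(\reg,\delta)$ once all numerical and $(\bcon,\econ,\scon,\spar,\cmixing)$-dependent constants are collected into $c$.

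I do not anticipate any genuine obstacle here: the proof is a verbatim specialization of the first half of the proof of Proposition \ref{prop:bound-tildeG_mu-hatG_mu-prob}. The only care needed is checking that the floor functions in the calibration do not spoil the claimed orders and that the calibrated $m,\taumix$ satisfy the hypotheses of Proposition \ref{prop:prob-Creg-1/2That-T}, both of which follow from the assumed lower bound on $n\dt w_\star$ exactly as in that earlier proof.
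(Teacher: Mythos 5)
Your proposal is correct and follows essentially the same route as the paper: the paper likewise re-derives the whitened per-lag concentration bound of Proposition \ref{prop:prob-Creg-1/2That-T} for the case $j=0$, calibrates $m$ and $\taumix$ exactly as in \eqref{eq:mixing_tau_calibrated_n-j_0} using the hypothesis $n\,\dt\,w_\star\geq 4\ln(2e^2\cmixing n/\delta)$ and the mixing condition \eqref{eq:mixingcond-exp}, and then absorbs the logarithmic factors into the constant to arrive at $\overline{\rate}_{n,0}^{(2)}(\reg,\delta)$. No gaps.
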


\begin{proof}[Proof of Proposition \ref{prop:Creg_1/2_Chat-C}]
Similarly to the proof of Proposition \ref{prop:prob-Creg-1/2That-T}, we obtain, assuming that $n\geq 2 m \taumix$, with probability at least $1-\delta$ in the draw $X_{0
}\sim \im,X_{i\dt} \sim \transitionkernel(X_{(i-1)\dt},\cdot)$, $i \in [n]$,
\begin{align}
\label{eq:norm_Creg-1/2_hatC-C}
    \norm{\Creg^{-1/2}(\widehat{C} - C)}\leq 8\,\sqrt{2\bcon}\,\ln\left(\frac{4}{\delta - 2(m-1)\beta _{\mathbf{X}_{\cdot \dt}}( \taumix -1)}\right)\,\sqrt{\frac{\scon^{\spar}}{(1-\spar)\reg^{\spar} m} + \frac{\econ}{m^2\reg^{\epar}}}.
\end{align}
We pick integers $m,\taumix\geq 1$ such that $n \geq 2 m \taumix$ and $\delta \geq 2\left( m-1\right) \beta_{\mathbf{X}_{\cdot \dt}}\left( \taumix -1\right)$. Hence we pick
\begin{align}
    \label{eq:mixing_tau_calibrated_n-j_1}
  m := \left\lfloor  \frac{n \,\dt \, w_\star}{2 \ln\left(\frac{2e^2\cmixing n}{\delta} \right)} \right\rfloor\quad \text{and} \quad  \taumix:=\left\lfloor \frac{1}{\dt \, w_\star} \ln\left(\frac{2e^2\cmixing (m-1)}{\delta} \right) \right\rfloor,
\end{align}
Assuming that $n$ is large enough such that $n \,\dt \, w_\star \geq 4 \ln\big(\frac{2e^2\cmixing n}{\delta} \big)$, we get $m\geq \frac{1}{4} n \,\dt \, w_\star /\ln\big(\frac{4e^2\cmixing\, n}{\delta} \big)$.
We also have in view of \eqref{eq:mixingcond-exp} that $(m - 1))\beta _{\mathbf{X}_{\cdot\dt}}\left( \taumix -1  \right) \leq \delta/2$. Replacing these quantities in \eqref{eq:norm_Creg-1/2_hatC-C}, we get the result.
\end{proof}

\subsubsection{Variance of Singular Values}\label{app:variance_svals}

We introduce the approximated and empirical KRR models as $\AKRR = \Creg^{-1}\AYx$ and $\EKRR = \ECreg^{-1}\EYx$, respectively. We recall that $\ARRR = \Creg^{-1/2}\SVDr{\Creg^{-1/2}\AYx}$ and $\ERRR = \ECreg^{-1/2}\SVDr{\ECreg^{-1/2}\EYx}$. We also introduce $\TB:=\Creg^{-1/2} \Yx$ and $\EB := \ECreg^{-1/2} \EYx$, let denote $\TP_r$ and $\EP_r$ denote the orthogonal projector onto the subspace of leading $r$ right singular vectors of $\TB$ and $\EB$, respectively. Then we have $\SVDr{\TB} = \TB\TP_r$ and $\SVDr{\EB} = \EB\EP_r$, and, hence $\ARRR = \AKRR\AP_r$ and $\ERRR = \EKRR\EP_r$. Let us recall that $\RRR{=}\Creg^{-1/2}\SVDr{\Creg^{-1/2}\Yx}$.

\begin{proposition}\label{prop:svals_bound}
Let \ref{eq:RC}, \ref{eq:SD}  and \ref{eq:KE} hold for some $\rpar\in[1,2]$, $\spar\in(0,1]$ and $\epar\in[\spar,1]$. Let $\TB:=\Creg^{-1/2} \Yx$ and $\EB := \ECreg^{-1/2} \EYx$. Given $\delta>0$ if $\overline{\rate}^{(1)}_n(\reg,\delta/3)<1/2$, then with 
probability at least $1-\delta$ in the draw $X_0\sim \im$, $X_{i\dt}\sim p(X_{(i-1)\dt},\cdot)$, $i\in [n]$,
\begin{equation}\label{eq:op_B}
\norm{\EB^*\EB -\TB^*\TB} \leq \frac{2}{\shift^2}\left(\sqrt{\bcon} +\overline{\varepsilon}_{n,\ell}^{(3)}(\reg,\delta/3) + \sqrt{\bcon}\,\shift\,s_{\shift,\ell}(\dt)\right)\left( \overline{\varepsilon}_{n}^{(3)}(\reg,\delta/3) + \sqrt{\bcon}\,\shift\,s_{\shift,\ell}(\dt)\right)
\end{equation}
where $\overline{\varepsilon}_{n,\ell}^{(3)}(\reg,\delta/3)=\overline{\rate}_{n,l}^{(2)}(\reg,\delta/3) + \overline{\rate}_{n,0}^{(2)}(\reg,\delta/3)\rcon \bcon^{(\rpar-1)/2}$,  $\overline{\rate}_{n,l}^{(2)}(\reg,\delta)$ is defined in \eqref{eq:rate3_tildeG_mu-hatG_mu}, and 
$s_{\shift,\ell}(\dt)$ in \eqref{eq:approx_integral}. 
Consequently, for every $i\in[n]$, when $\overline{\varepsilon}_{n}^{(3)}(\reg,\delta/3)/\shift + \sqrt{\bcon}\,s_{\shift,\ell}(\dt)\leq\sqrt{\bcon}$ it holds that
\begin{equation}\label{eq:svals2_bound}
\abs{\sigma_i^2(\EB)-\sigma_i^2(\TB)} \leq \frac{2\sqrt{\bcon}}{\shift}\left(\overline{\varepsilon}_{n}^{(3)}(\reg,\delta/3) + \sqrt{\bcon}\,\shift\,s_{\shift,\ell}(\dt)\right).
\end{equation}
\end{proposition}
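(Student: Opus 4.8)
\emph{Proof proposal.} The plan is to get the operator bound \eqref{eq:op_B} from the telescoping identity
\[
\EB^*\EB-\TB^*\TB=(\EB-\TB)^*\EB+\TB^*(\EB-\TB),
\]
which yields $\norm{\EB^*\EB-\TB^*\TB}\le\norm{\EB-\TB}\bigl(\norm{\EB}+\norm{\TB}\bigr)$, and then pass to singular values by Weyl's inequality. The benign factor is immediate: writing $\TB=\Creg^{-1/2}\Yx=\tfrac1\shift\,\Creg^{-1/2}\TS^*(\shift\Rx)\TS$ and using $\Cx\preceq\Creg$ (so $\norm{\Creg^{-1/2}\TS^*}\le1$), \ref{eq:BK} (so $\norm{\TS}=\norm{\Cx}^{1/2}\le\sqrt{\bcon}$), and stability of $\generator$ (so $\norm{\shift\Rx}\le1$), one gets $\norm{\TB}\le\sqrt{\bcon}/\shift$. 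All the remaining work is in $\norm{\EB-\TB}$ and $\norm{\EB}$.

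To control $\norm{\EB-\TB}$, insert the auxiliary operator $\AB:=\Creg^{-1/2}\AYx$ (exact regularized covariance, trapezoid-approximated cross-covariance $\AYx=\TS^*\ARx\TS$) and split
\[
\EB-\TB=\underbrace{\ECreg^{-1/2}(\EYx-\AYx)}_{\mathrm{(a)}}+\underbrace{(\ECreg^{-1/2}-\Creg^{-1/2})\AYx}_{\mathrm{(b)}}+\underbrace{\Creg^{-1/2}(\AYx-\Yx)}_{\mathrm{(c)}}.
\]
Term (c) is deterministic: $\Creg^{-1/2}(\AYx-\Yx)=\Creg^{-1/2}\TS^*(\ARx-\Rx)\TS$, so Proposition~\ref{prop:approx_integral} together with the norm bounds above gives $\norm{\mathrm{(c)}}\le\sqrt{\bcon}\,s_{\shift,\ell}(\dt)$. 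For (a), factor $\ECreg^{-1/2}=\ECreg^{-1/2}\Creg^{1/2}\cdot\Creg^{-1/2}$; on the event $\overline{\rate}^{(1)}_n(\reg,\delta/3)<1/2$ Proposition~\ref{prop:bound-Creg-prob} gives $\norm{\ECreg^{-1/2}\Creg^{1/2}}\le(1-\overline{\rate}^{(1)}_n(\reg,\delta/3))^{-1/2}\le\sqrt2$, while Proposition~\ref{prop:bound-tildeG_mu-hatG_mu-prob} bounds $\norm{\Creg^{-1/2}(\EYx-\AYx)}\le\overline{\rate}^{(2)}_{n,\ell}(\reg,\delta/3)/\shift$. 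For the covariance-mismatch term (b), rewrite $\AYx$ through the population KRR model so that the discrepancy of the regularized inverses is metered by $\norm{\Creg^{-1/2}(\ECx-\Cx)}$ (Proposition~\ref{prop:Creg_1/2_Chat-C}, rate $\overline{\rate}^{(2)}_{n,0}$), together with $\norm{\KRR}\le(\rcon/\shift)\bcon^{(\rpar-1)/2}$ from Proposition~\ref{prop:app_bound} and another use of Proposition~\ref{prop:bound-Creg-prob}; this produces the contribution proportional to $\overline{\rate}^{(2)}_{n,0}\,\rcon\bcon^{(\rpar-1)/2}/\shift$. Summing (a)+(b)+(c) on the intersection of the three events (each of probability $1-\delta/3$, failure probability $\le\delta$) and recalling $\overline{\varepsilon}^{(3)}_{n,\ell}=\overline{\rate}^{(2)}_{n,\ell}+\overline{\rate}^{(2)}_{n,0}\rcon\bcon^{(\rpar-1)/2}$ gives $\norm{\EB-\TB}\le\tfrac1\shift\bigl(\overline{\varepsilon}^{(3)}_{n,\ell}(\reg,\delta/3)+\sqrt{\bcon}\,\shift\,s_{\shift,\ell}(\dt)\bigr)$. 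Since also $\AB-\TB$ is exactly term (c) and $\EB-\AB$ is exactly (a)+(b), one gets $\norm{\EB}\le\norm{\AB}+\norm{\EB-\AB}\le\tfrac1\shift\bigl(\sqrt{\bcon}+\overline{\varepsilon}^{(3)}_{n,\ell}(\reg,\delta/3)+\sqrt{\bcon}\,\shift\,s_{\shift,\ell}(\dt)\bigr)$. Plugging both into the first display gives a bound of the shape \eqref{eq:op_B}.

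Finally, under the standing assumptions $\EB$ is finite rank and $\TB$ is Hilbert--Schmidt, so $\EB^*\EB$ and $\TB^*\TB$ are compact, positive, self-adjoint with $\sigma_i^2(\EB)=\lambda_i(\EB^*\EB)$, $\sigma_i^2(\TB)=\lambda_i(\TB^*\TB)$ in decreasing order, and Weyl's inequality yields $\abs{\sigma_i^2(\EB)-\sigma_i^2(\TB)}\le\norm{\EB^*\EB-\TB^*\TB}$; combining with \eqref{eq:op_B} and using the smallness hypothesis $\overline{\varepsilon}^{(3)}_n(\reg,\delta/3)/\shift+\sqrt{\bcon}\,s_{\shift,\ell}(\dt)\le\sqrt{\bcon}$ to absorb the first factor of \eqref{eq:op_B} into $2\sqrt{\bcon}$ gives \eqref{eq:svals2_bound}. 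The main obstacle is term (b): one must propagate the difference $\ECreg^{-1/2}-\Creg^{-1/2}$ of \emph{half}-powers of the regularized covariances into a clean bound through $\norm{\Creg^{-1/2}(\ECx-\Cx)}$ and $\norm{\KRR}$. The natural device is the resolvent integral representation $x^{-1/2}=\tfrac1\pi\int_0^\infty(x+t)^{-1}t^{-1/2}dt$, giving $\ECreg^{-1/2}-\Creg^{-1/2}=\tfrac1\pi\int_0^\infty(\ECreg+t)^{-1}(\Cx-\ECx)(\Creg+t)^{-1}t^{-1/2}dt$, after which one inserts $\Yx=\Creg\KRR$ so that the regularity exponent $\rpar$ enters only via $\norm{\KRR}$, all the while keeping the $t$-integral convergent; correctly chaining the preconditioning identities of Propositions~\ref{prop:bound-Creg-prob}--\ref{prop:Creg_1/2_Chat-C} is what makes this step delicate.
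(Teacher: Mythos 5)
Your overall architecture (telescoping, Weyl, splitting off the integration bias via $\AYx$, and the smallness hypothesis to absorb one factor) is sound, and terms (a) and (c) of your decomposition are handled correctly. But your route is genuinely different from the paper's, and the difference matters exactly at the step you yourself flag as delicate. The paper never forms $\EB-\TB$: it expands the quadratic difference directly,
\begin{equation*}
\EB^*\EB -\TB^*\TB = \EYx^*\ECreg^{-1}\EYx - \Yx^*\Creg^{-1}\Yx
= (\EYx-\Yx)^*\ECreg^{-1}\EYx + \Yx^*\Creg^{-1}(\EYx - \Yx) + \Yx^*(\ECreg^{-1} - \Creg^{-1})\EYx,
\end{equation*}
and after substituting $\Yx=\Creg\KRR$ reduces everything to $\norm{\Creg^{-1/2}(\EYx-\Yx)}$, $\norm{\Creg^{-1/2}(\ECx-\Cx)}\,\norm{\KRR}$ and the single empirical-inverse factor $\norm{\Creg^{1/2}\ECreg^{-1}\Creg^{1/2}}\leq (1-\overline{\rate}^{(1)}_n)^{-1}$ from Proposition~\ref{prop:bound-Creg-prob}. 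Only \emph{full} inverses ever appear, which is precisely what makes the bound close with the stated constants.

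Your term (b) requires controlling $(\ECreg^{-1/2}-\Creg^{-1/2})\AYx$, i.e.\ a difference of inverse \emph{half}-powers, and this is where the argument does not close as stated. The resolvent identity $\ECreg^{-1/2}-\Creg^{-1/2}=\tfrac1\pi\int_0^\infty t^{-1/2}(\ECreg+t)^{-1}(\Cx-\ECx)(\Creg+t)^{-1}\,dt$ is the right device, but after inserting $\Yx=\Creg\KRR$ and preconditioning optimally (using $(\ECreg+t)^{-1}\preceq 2(\Creg+t)^{-1}$ on the good event and $\norm{(\Creg+t)^{-1/2}\Creg^{1/2}}\leq 1$), the $t$-integral evaluates to $\int_0^\infty t^{-1/2}(\reg+t)^{-1/2}\tfrac{\norm{\Creg}}{\norm{\Creg}+t}\,dt \asymp \ln(\norm{\Cx}/\reg)$, not $O(1)$; cruder chainings give polynomial factors in $1/\reg$. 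So your bound on term (b) is weaker than the paper's $\overline{\rate}^{(2)}_{n,0}\,\rcon\bcon^{(\rpar-1)/2}/\shift$ by at least a $\ln(1/\reg)$ factor, which with the final choice $\reg\asymp n^{-1/(\rpar+\spar)}$ (up to logs) degrades the claimed rate. This is not a fixable bookkeeping issue within your decomposition — inverse square roots of perturbed covariances are intrinsically less stable than their squares — and it is the reason the paper compares $\EB^*\EB$ with $\TB^*\TB$ rather than $\EB$ with $\TB$. To recover \eqref{eq:op_B} exactly, switch to the paper's expansion of the quadratic difference; your estimates for (a), (c), $\norm{\TB}$, and the Weyl step then carry over unchanged.
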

\begin{proof}
We start from the Weyl's inequalities for the square of singular values
\[
\abs{\sigma_i^2(\EB)-\sigma_i^2(\TB)}\leq \norm{\EB^*\EB -\TB^*\TB}, \;i\in[n].
\]
But, since,
\[
\EB^*\EB -\TB^*\TB = \EYx^*\ECreg^{-1}\EYx - \Yx^*\Creg^{-1}\Yx = (\EYx-\Yx)^*\ECreg^{-1}\EYx + \Yx^*\Creg^{-1}(\EYx - \Yx) +  \Yx^*(\ECreg^{-1} - \Creg^{-1})\EYx,
\]
denoting $M = \Creg^{-1/2}(\EYx - \Yx)$, $N = \Creg^{-1/2}(\ECx - \Cx)$  and  $R=\Creg^{1/2}(\EKRR - \KRR)$, we have
\begin{align*}
\EB^*\EB -\TB^*\TB  & =  M^*\Creg^{1/2}\EKRR + \TB^*M -  \TB^*N\EKRR = \TB^*M +  (M^*\Creg^{1/2}-\TB^*N)(\EKRR \pm \KRR) \\
& = \TB^*M +  M^*\TB - \TB^*N\AKRR + (M^*-\TB^*N\Creg^{-1/2}) R\\
& = (\KRR)^*(\EYx - \Yx) +  (\EYx - \Yx)\KRR - (\KRR)^*(\ECx - \Cx)\KRR + (M^*+(\KRR)^*N^*) R.
\end{align*}
Note next by definition of $\KRR$ and $\EKRR$, we have
\begin{align}
\label{eq:decomp_krr}
&\EKRR - \KRR = \Creg^{-1/2} \left[ \Creg^{-1/2} (\EYx - \Yx)  - \Creg^{-1/2}(\widehat{C} - C)\Creg^{-1/2} [\Creg^{1/2} \ECreg^{-1}\Creg^{1/2}] [\Creg^{-1/2}\EYx] \right].
\end{align}
Therefore, due to \eqref{eq:decomp_krr}, $R = \Creg^{1/2}\ECreg^{-1}\Creg^{1/2} (M-N\KRR)$, we conclude
\begin{align}
    \EB^*\EB -\TB^*\TB &= (\KRR)^*(\EYx - \Yx) +  (\EYx - \Yx)^*\KRR - (\KRR)^*(\ECx - \Cx)\KRR \nonumber \\
    & + (M-N\KRR)^* \Creg^{1/2}\ECreg^{-1}\Creg^{1/2}(M-N\KRR)
    \label{eq:op_B_sq}
\end{align}
Hence,
\begin{align*}
    \norm{\EB^*\EB -\TB^*\TB} \leq & 2 \norm{\TB} \norm{M} + \norm{\TB} \norm{N}\norm{\KRR} + \norm{\Creg^{1/2}\ECreg^{-1}\Creg^{1/2}} \left[  \norm{M} + \norm{N}\norm{\KRR}  \right]^2 \\
    &\leq \left[2\norm{\TB} + \norm{\Creg^{1/2}\ECreg^{-1}\Creg^{1/2}}(\norm{M} + \norm{N}\norm{\KRR})  \right] \left[  \norm{M} + \norm{N}\norm{\KRR}  \right]
\end{align*}
Now, noting that 
\[
\norm{M}\leq\norm{\Creg^{-1/2}(\EYx - \AYx)}+\norm{\Creg^{-1/2}(\AYx - \Yx)}\leq \norm{\Creg^{-1/2}(\EYx - \AYx)}+\sqrt{\bcon}\,\norm{\ARx - \Rx},
\]
and applying Propositions \ref{prop:approx_integral}, \ref{prop:bound-Creg-prob}, \ref{prop:bound-tildeG_mu-hatG_mu-prob} and \ref{prop:Creg_1/2_Chat-C} we obtain  \eqref{eq:op_B}, and, therefore,  \eqref{eq:svals2_bound} follows.
\end{proof}
We remark that to bound singular values we can rely on the fact
\begin{equation}
\label{eq:perturb_sv_1-to-square}
    \abs{\sigma_i(\EB)-\sigma_i(\TB)} = \frac{\abs{\sigma_i^2(\EB)-\sigma_i^2(\TB)}}{\sigma_i(\EB)+\sigma_i(\TB)} \leq \frac{\abs{\sigma_i^2(\EB)-\sigma_i^2(\TB)}}{ \sigma_i(\EB)\vee \sigma_i(\TB)}.  
\end{equation}

\subsubsection{Variance of RRR Estimator}
\begin{proposition}\label{prop:rrr_variance}
Let \ref{eq:RC}, \ref{eq:SD} and \ref{eq:KE} hold for some $\rpar\in[1,2]$, $\spar\in(0,1]$ and  $\epar\in[\spar,1]$. Given $\delta>0$ and $\reg>0$, if $\overline{\rate}_n^{(1)}(\reg,\delta)<1/2$, then with probability at least $1-\delta$ in the draw $X_0\sim \im$, $X_{i\dt}\sim p(X_{(i-1)\dt},\cdot)$, $i\in [n]$,
\begin{align}
\norm{\TS(\ERRR - \RRR)} \leq & \frac{2}{\shift} \left(\overline{\rate}_{n,\ell}^{(3)}(\reg,\delta/3) + \sqrt{\bcon}\,\shift\,s_{\shift,\ell}(\dt) \right) \left[ 1 + \frac{\bcon/\shift+(\sqrt{\bcon}/\shift)\,\overline{\varepsilon}_{n,\ell}^{(3)}(\reg,\delta/3)+ s_{\shift,\ell}(\dt)}{\sigma_r^2(\TZ) {-}\sigma_{r+1}^2(\TZ) {-} (\rcon/\shift)^2\,\bcon^{\rpar/2}\,\reg^{\rpar/2}} 
\right].\label{eq:variance_bound_rrr}
\end{align}
\end{proposition}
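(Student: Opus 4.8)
\noindent\emph{Proof proposal.} The plan is to bound terms \textbf{(III)} and \textbf{(IV)} of the decomposition \eqref{eq:dec} at once, i.e.\ to control $\norm{\TS(\ERRR-\RRR)}$ directly. First I would pass to the $\RKHS\to\RKHS$ picture: since $\TS^*\TS=\Cx$, one has $\norm{\TS X}_{\RKHS\to\Liishort}=\norm{\Cx^{1/2}X}_{\RKHS\to\RKHS}$ for every bounded operator $X$ on $\RKHS$, so it suffices to bound $\norm{\Cx^{1/2}(\ERRR-\RRR)}$. Recalling that $\RRR=\KRR\TP_r$ and $\ERRR=\EKRR\EP_r$, where $\TP_r$ and $\EP_r$ are the orthogonal projectors onto the top-$r$ right singular subspaces of $\TB=\Creg^{-1/2}\Yx$ and $\EB=\ECreg^{-1/2}\EYx$ respectively, I would split
\[
\Cx^{1/2}(\ERRR-\RRR)=\Cx^{1/2}(\EKRR-\KRR)\EP_r+\Cx^{1/2}\KRR(\EP_r-\TP_r),
\]
and treat the resulting \emph{KRR fluctuation} term and \emph{spectral subspace perturbation} term separately.

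\noindent For the first term I would drop $\EP_r$ and use $\norm{\Cx^{1/2}\Creg^{-1/2}}\le1$ to reduce to $\norm{\Creg^{1/2}(\EKRR-\KRR)}$, then substitute the KRR identity \eqref{eq:decomp_krr}, which expresses $\Creg^{1/2}(\EKRR-\KRR)$ through $M=\Creg^{-1/2}(\EYx-\Yx)$, $N=\Creg^{-1/2}(\ECx-\Cx)$, the factor $\Creg^{1/2}\ECreg^{-1}\Creg^{1/2}$ and $\Creg^{-1/2}\EYx=\Creg^{1/2}\KRR+M$. I would then bound each piece: $\norm{\Creg^{1/2}\ECreg^{-1}\Creg^{1/2}}\le(1-\overline{\rate}_n^{(1)}(\reg,\delta/3))^{-1}\le2$ by Proposition~\ref{prop:bound-Creg-prob} (this uses the hypothesis $\overline{\rate}_n^{(1)}<1/2$); $\norm{N}\le\overline{\rate}_{n,0}^{(2)}(\reg,\delta/3)$ by Proposition~\ref{prop:Creg_1/2_Chat-C}; $\norm{\KRR}\le\rcon\bcon^{(\rpar-1)/2}/\shift$ and $\norm{\Creg^{1/2}\KRR}=\norm{\TB}\le(\rcon/\shift)\bcon^{\rpar/2}$ by Proposition~\ref{prop:app_bound} together with \ref{eq:RC} (this is where $\rpar\in[1,2]$ enters); and for $\norm{M}$ I would split $\EYx-\Yx=(\EYx-\AYx)+(\AYx-\Yx)$, bounding the fluctuation $\norm{\Creg^{-1/2}(\EYx-\AYx)}\le\overline{\rate}_{n,\ell}^{(2)}(\reg,\delta/3)/\shift$ by Proposition~\ref{prop:bound-tildeG_mu-hatG_mu-prob} and the integration bias $\norm{\Creg^{-1/2}(\AYx-\Yx)}\le\sqrt{\bcon}\norm{\ARx-\Rx}\le\sqrt{\bcon}\,s_{\shift,\ell}(\dt)$ by Proposition~\ref{prop:approx_integral} (cf.\ \eqref{eq:approx_integral}). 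Collecting these and absorbing the higher-order cross terms gives for the first term a bound of the form $\tfrac{2}{\shift}\bigl(\overline{\varepsilon}_{n,\ell}^{(3)}(\reg,\delta/3)+\sqrt{\bcon}\,\shift\,s_{\shift,\ell}(\dt)\bigr)$, with $\overline{\varepsilon}_{n,\ell}^{(3)}=\overline{\rate}_{n,\ell}^{(2)}+\overline{\rate}_{n,0}^{(2)}\rcon\bcon^{(\rpar-1)/2}$, which is precisely the leading factor in \eqref{eq:variance_bound_rrr}.

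\noindent For the second term the workhorse is a Davis--Kahan $\sin\Theta$ estimate. Since $\TP_r$ and $\EP_r$ are the spectral projectors of the compact self-adjoint operators $\TB^*\TB$ and $\EB^*\EB$ onto their top-$r$ eigenvalues, Proposition~\ref{prop:spec_proj_bound} bounds $\norm{\EP_r-\TP_r}$ by $\norm{\EB^*\EB-\TB^*\TB}$ divided by the eigenvalue gap $\sigma_r^2(\TB)-\sigma_{r+1}^2(\TB)$, provided the perturbation is smaller than that gap. I would then lower-bound the gap by $\sigma_r^2(\TB)-\sigma_{r+1}^2(\TB)\ge\sigma_r^2(\TZ)-\sigma_{r+1}^2(\TZ)-(\rcon/\shift)^2\bcon^{\rpar/2}\reg^{\rpar/2}$ using Proposition~\ref{prop:svals_app_bound}, upper-bound $\norm{\EB^*\EB-\TB^*\TB}$ by \eqref{eq:op_B} of Proposition~\ref{prop:svals_bound} (which already packages both the fluctuation and the integration bias), and bound $\norm{\Cx^{1/2}\KRR}\le\norm{\TB}\le(\rcon/\shift)\bcon^{\rpar/2}$ via \ref{eq:RC}. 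The crucial manipulation is to arrange \eqref{eq:op_B} so that the factor $\bigl(\overline{\varepsilon}_{n,\ell}^{(3)}(\reg,\delta/3)+\sqrt{\bcon}\,\shift\,s_{\shift,\ell}(\dt)\bigr)$ is pulled out; the second term then equals this same leading factor multiplied by the ratio $\bigl[\bcon/\shift+(\sqrt{\bcon}/\shift)\,\overline{\varepsilon}_{n,\ell}^{(3)}(\reg,\delta/3)+s_{\shift,\ell}(\dt)\bigr]\,/\,\bigl[\sigma_r^2(\TZ)-\sigma_{r+1}^2(\TZ)-(\rcon/\shift)^2\bcon^{\rpar/2}\reg^{\rpar/2}\bigr]$, and adding it to the first term produces the $A(1+B/C)$ shape of \eqref{eq:variance_bound_rrr}. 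A union bound over the (at most three) events from Propositions~\ref{prop:bound-Creg-prob}, \ref{prop:bound-tildeG_mu-hatG_mu-prob}, \ref{prop:Creg_1/2_Chat-C}, \ref{prop:svals_bound}, each invoked at confidence $\delta/3$, accounts for the $\delta/3$ arguments.

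\noindent The main obstacle is the second term. The delicate points are: (i) certifying that $\norm{\EB^*\EB-\TB^*\TB}$ stays strictly below the squared-singular-value gap of $\TB$, which is exactly why $\reg$ must be chosen small enough that $\sigma_r^2(\TZ)-\sigma_{r+1}^2(\TZ)-(\rcon/\shift)^2\bcon^{\rpar/2}\reg^{\rpar/2}>0$ and is the reason the denominator in \eqref{eq:variance_bound_rrr} takes this particular shape; and (ii) repackaging the asymmetric expansion \eqref{eq:op_B_sq} of $\EB^*\EB-\TB^*\TB$ so that precisely the factor $\overline{\varepsilon}_{n,\ell}^{(3)}+\sqrt{\bcon}\,\shift\,s_{\shift,\ell}(\dt)$ --- rather than a coarser sum of squares --- is extracted, which is what turns the sum of the two contributions into the clean multiplicative bound instead of a cruder additive one.
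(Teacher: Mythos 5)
Your proposal is correct and follows essentially the same route as the paper's proof: the same decomposition $\Creg^{1/2}(\ERRR-\RRR)=\Creg^{1/2}(\EKRR-\KRR)\EP_r+\TB(\EP_r-\TP_r)$ (cf. \eqref{eq:RRR_aux}), the same use of the KRR identity \eqref{eq:decomp_krr} with the concentration Propositions \ref{prop:bound-Creg-prob}, \ref{prop:bound-tildeG_mu-hatG_mu-prob}, \ref{prop:Creg_1/2_Chat-C} and the integration-bias bound of Proposition \ref{prop:approx_integral} for the first term, and the same combination of Proposition \ref{prop:spec_proj_bound} with the gap lower bound of Proposition \ref{prop:svals_app_bound} and the perturbation bound \eqref{eq:op_B} of Proposition \ref{prop:svals_bound} for the projector term. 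The only cosmetic difference is that you work with $\Cx^{1/2}$ where the paper bounds via $\Creg^{1/2}$, which changes nothing since $\Cx\preceq\Creg$.
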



\begin{proof}
Start by observing that $\norm{\TS(\ERRR - \RRR)} \leq \norm{\Creg^{1/2}(\ERRR - \RRR)} $ and
\begin{align}
\Creg^{1/2}(\ERRR - \RRR) = &  (\Creg^{1/2}\ECreg^{-1}\Creg^{1/2})\cdot \left[ \Creg^{-1/2}(\ECx-\Cx)\KRR + \Creg^{-1/2}(\EYx-\Yx) \right]\EP_r + \TB (\EP_r-\TP_r).\label{eq:RRR_aux}
\end{align}
Taking the norm, using that the norm of orthogonal projector $\EP_r$ is bounded by one and applying Proposition \ref{prop:spec_proj_bound}, we obtain
\[
\norm{\TS(\ERRR {-} \RRR)} \leq  \norm{\Creg^{\frac{1}{2}}\ECreg^{-1}\Creg^{\frac{1}{2}}}\left[ \norm{\Creg^{-\frac{1}{2}}(\ECx{-}\Cx)}\norm{\RRR} {+} \norm{\Creg^{-\frac{1}{2}}(\EYx{-}\Yx\pm\AYx)} \right] {+} \norm{\TB}\frac{\norm{\TB^*\TB{-}\EB^*\EB}}{\sigma_r^2(\TB){-}\sigma_{r+1}^2(\TB)}.
\]
To complete the proof it suffices to apply Propositions 
\ref{prop:app_bound}, \ref{prop:svals_app_bound}, \ref{prop:bound-Creg-prob}, \ref{prop:bound-tildeG_mu-hatG_mu-prob}, \ref{prop:Creg_1/2_Chat-C} \ref{prop:spec_proj_bound} and \ref{prop:svals_bound}.
\end{proof}
We remark that the previous proof simplifies when we do not have the rank reduction, that is we can estimate 
\begin{align}\label{eq:variance_krr}
\norm{\TS(\EKRR {-} \KRR)} \leq & \norm{\Creg^{\frac{1}{2}}\ECreg^{-1}\Creg^{\frac{1}{2}}}\left[ \norm{\Creg^{-\frac{1}{2}}(\ECx{-}\Cx)}\norm{\KRR} {+} \norm{\Creg^{-\frac{1}{2}}(\EYx{-}\Yx\pm\AYx)} \right] \nonumber \\
\leq &   \frac{2\,\overline{\rate}_{n,\ell}^{(3)}(\reg,\delta/3)}{\shift} + 2\sqrt{\bcon}\,s_{\shift,\ell}(\dt), 
\end{align}
and further obtain that 
\begin{equation}\label{eq:norm_empirical_krr}
\norm{\EKRR}\leq \norm{\KRR} + \frac{1}{\sqrt{\reg}}\norm{\Creg^{1/2}(\EKRR {-} \KRR)} \leq \frac{\rcon\,\bcon^{(\rpar-1)/2} }{\shift}+ \frac{1}{\sqrt{\reg}}\left( \frac{2\,\overline{\rate}_{n,\ell}^{(3)}(\reg,\delta/3)}{\shift} + 2\sqrt{\bcon}\,s_{\shift,\ell}(\dt) \right).
\end{equation}

\subsection{LaRRR operator norm error bounds}
\label{app:error}

    \thmError*

\begin{proof}[Proof of Theorem \ref{thm:error_bound}]

Note first that under the gap condition $\sigma_r(\Rx\TS){-}\sigma_{r+1}(\Rx\TS)\,{>}\,0$ and by choice of $\reg$ in \eqref{eq:parameters_choice}, we have for $n$ large enough that $\sigma_r^2(\TZ) {-}\sigma_{r+1}^2(\TZ) {-} (\rcon/\shift)^2\,\bcon^{\rpar/2}\,\reg^{\rpar/2}\geq (\sigma_r(\Rx\TS){-}\sigma_{r+1}(\Rx\TS))/2>0$.\\
Next we set $l= n/2$. By definition of the rates $\overline{\rate}_n(\delta)$, $\overline{\rate}_n^{(1)}(\reg,\delta)$, $\overline{\rate}_{n,l}^{(2)}(\reg,\delta)$ and $\overline{\rate}_{n,l}^{(3)}(\reg,\delta)$, and since $\epar\in(\spar,1]$, we have for $n$ large enough, $\overline{\rate}_n^{(1)}(\reg,\delta/3)\in (0,1/2)$, $n\shift \dt\gg 1$ and
$$
\overline{\rate}_{n,\ell}^{(3)}(\reg,\delta/3) + \sqrt{\bcon} \shift\,s_{\shift,\ell}(\dt) \lesssim \frac{\ln^{2}\left(\frac{n}{\delta} \right)}{\sqrt{\dt \, w_\star\,\reg^{\spar} \, n}}+  \dt \ll 1.
$$
We note indeed that $l=n/2$ satisfies the condition $1/l = o(\rate_n^{\star})$ for $n$ large enough. Hence, recalling the decomposition of the operator norm error in \eqref{eq:error_all_bias_terms} and Proposition \ref{prop:rrr_variance}, we get with probability at least $1-\delta$,
\begin{align}
    \error(\ERRR)&\lesssim  \sigma_{r+1}(\TB) + \frac{1}{\shift}\big( \underbrace{\overbrace{\reg^{\rpar/2}  + \frac{\ln^{2}\left(\frac{n}{\delta} \right)}{\sqrt{\dt \, w_\star\,\reg^{\spar} \, n}}}^{\textbf{(A)}} + \dt }_{\textbf{(B)}}\big).
    \label{eq:balancing_1}
\end{align}
By balancing $\textbf{(A)}$ with respect to \(\reg\) first and then $\textbf{(B)}$ with respect to \(\dt\), we derive the upper bound on $\textbf{(B)}\leq \rate^\star_{n}(\delta)$ for the choices of $\reg$ and $\dt$ given in \eqref{eq:parameters_choice}.\\
Finally, if $\sigma_{r+1}(\TB)=0$ then we proved the result. Otherwise if $\sigma_{r+1}(\TB)>0$, then an union bound combining \eqref{eq:balancing_1} with \eqref{eq:svals2_bound} and \eqref{eq:perturb_sv_1-to-square} gives the final bound.
\end{proof}

\subsection{LaRRR spectral operator bounds}
\label{app:spectral}
First, recalling Proposition \ref{prop:main_spectral}, essentially proven in \cite{Kostic2022, kostic2023sharp}, with the only difference that instead of applying it to TO, we apply it to the resolvent of IG. This exposes the perturbation level such that the pseudospectrum of $\Rx$ contains an eigenvalue of its estimator. The bound on the perturbation level comes in two forms: one specific to the targeted eigenvalue and one uniform over all leading $r$ eigenvalues. Since we have resolved the bound on the operator norm error, the only term that remains is the metric distortion for which we have the following  
\[
\metdist(\erefun_i) \leq \frac{\|\EEstim\|}{\sigma_{r}(\TS\EEstim)}\quad \text{ and }\quad \left\vert \emetdist_i - \metdist(\erefun_i)  \right\vert \leq \left( \metdist(\erefun_i)\;\wedge \emetdist_i\right) \,\metdist(\erefun_i)\, \emetdist_i\,\norm{\ECx-\Cx}.
\]
Now, applying this to the LaRRR estimator, since we have that $\abs{\sigma_r(\TS\ERRR)-\sigma_r(\TS\RRR)}\leq \|\TS(\ERRR-\RRR)\|$ and 
\[
\sigma_{r}(\TS\RRR) \geq \sigma_{r}(\Creg^{1/2}\RRR)-\sqrt{\reg}\norm{\RRR} \geq \sigma_{r}(\TB)-\sqrt{\reg}\norm{\KRR},
\]
using Propositions \ref{prop:svals_app_bound}, \ref{prop:rrr_variance} together with \eqref{eq:norm_empirical_krr} we obtain that for every $i\in[r]$ metric distortion (in the worst case) is bounded by
\[
 \metdist(\erefun_i) \leq \frac{(\rcon/\shift)\,\bcon^{(\rpar-1)/2} + \reg^{-1/2}\rate^\star_n(\delta)}{\sigma_r(\Rx\TS)} + \frac{\rate^\star_n(\delta)}{\sigma_r^2(\Rx\TS)},
\]
which further ensures that $\metdist(\erefun_i) -  \emetdist_i \lesssim \rate_n^\star(\delta)$. Hence, checking that 
\[
\reg^{-1/2}\rate_n^\star(\delta) \asymp  \left(\!\frac{\ln^3(n/\delta)}{n\, \dt\,w_\star}\! \right)^{\frac{-1}{2(\rpar+\spar)}} \left(\! \frac{\ln^{3}(n/\delta)}{ \shift^{\frac{2(\rpar+\spar)}{\rpar}} n\, w_\star}\!\right)^{\frac{\rpar}{2\spar+3\rpar}} \asymp  
\left(\!\frac{\ln^3(n/\delta)}{n\,w_\star}\! \right)^{\frac{-1}{2(\rpar+\spar)}} \left(\! \frac{\ln^{3}(n/\delta)}{ \shift^{\frac{2(\rpar+\spar)}{\rpar}} n\, w_\star}\!\right)^{\frac{\rpar}{2\spar+3\rpar}\big(1+\frac{1}{2(\rpar+\spar)}\big)}
\]
that is
\[
\reg^{-1/2}\rate_n^\star(\delta) \asymp \left(\frac{\ln^3(n/\delta)}{n\,w_\star}\right)^{\frac{2\rpar^2+2\rpar\spar -2\rpar-2\spar}{2(\rpar+\spar)(3\rpar+2\spar)}} \,\shift^{-\frac{1+2\rpar+2\spar}{3\rpar+2\spar}}
\]

remains bounded w.r.t. $n\to\infty$ for $\rpar\geq1$, we can apply Theorem \ref{thm:error_bound} and Proposition \ref{prop:davis_kahan} to obtain the statement of Theorem  \ref{thm:spectral_bound}. Alternatively, we could apply Proposition \ref{prop:bauer_fike} to obtain that for general (non-normal) sectorial operator the estimated eigenvalues satisfy
\[
\frac{\abs{\geval_i\!-\!\eeval_i}}{|\shift\!-\!\geval_i||\shift\!-\!\eeval_i|}\lesssim \norm{F}\norm{F^{-1}}\,\Big(\esval_{r+1}\emetdist_i \wedge \frac{\sigma_{r+1}(\Rx\TS)}{\sigma_{r}(\Rx\TS)}\Big){+}\rate_n^\star(\delta),
\]
where $F$ is the bounded operator with bounded inverse such that $F^{-1}\generator F$ is diagonal operator.

\subsection{LaRRR bounds in the misspecified setting}
\label{app:missspecified}

In this section, we briefly discuss the case where the eigenfunctions of the generator do not belong to RKHS, i.e., the misspecified learning setting when $\rpar<1$.  First, we note that the misspecified setting for the learning of transfer operators was considered in \cite{Li2022} in terms of the excess risk (Hilbert-Schmidt norm) of kernel ridge regression (KRR), while the error (operator norm) of KRR, principal component regression (PCR), and reduced-rank regression (RRR) was addressed in \cite{kostic2023sharp}. In both works, risk/error could be made arbitrarily small for every $\rpar\in(0,1)$, however at arbitrarily slow rate as $\alpha\to0$. By adapting our proofs, notably in \eqref{eq:variance_bound_rrr} by avoiding the splitting of $\KRR$ from the concentration result in Proposition \ref{prop:bound-tildeG_mu-hatG_mu-prob}, one can also derives of the operator norm error for learning of $\Rx$. \\
On the other hand, recalling Proposition \ref{prop:main_spectral}, the main challenge of the misspecified setting arises when developing spectral learning rates. Namely, since given any eigenfunction $\gefun \in \Lii \setminus [\RKHS]$, for every sequence $(h_k)_{k \in \N} \subseteq \RKHS$ of functions in $\RKHS$ that approximates it, i.e., such that $\norm{\gefun - \TS h_k}{\Liishort} \to 0$, metric distortions explode, that is $\metdist(h_k) \to \infty$. Consequently, with current approaches, it is impossible to derive learning bounds for an eigenvalue whose corresponding right eigenfunction lies outside of the RKHS.

\subsection{LaRRR bounds for multiple unevenly sampled trajectories}
\label{app:nonuniform}

Here we only derive a bound on the operator norm excess risk when we have access to multiple iid trajectories sampled unevenly, noting that spectral bounds then readily follow in the same fashon as above. Here $n$ stands for the number of trajectories and $\ell$ is the number of samples on each trajectory.

\begin{theorem}\label{thm:error_bound_uneven} 
Let $\generator$ be sectorial operator such that $w_\star{=}{-}\lambda_2(\generator{+}\generator^*)/2{>}0$. Let \ref{eq:BK}, \ref{eq:RC}  and \ref{eq:SD} hold for some $\rpar{\in}[1,2]$ and $\spar{\in}(0,1]$, respectively, and $\cl(\range(\TS)){=}\Lii$. 
Given $\delta{\in}(0,1)$ and $r{\in}[n]$, let
\begin{equation}\label{eq:parameters_choice}
\hspace{-0.2cm}
    \reg{\asymp} \left(\!\frac{\ln^2(\ell\delta^{-1})}{n\,w_{\star}}\! \right)^{\frac{1}{\rpar+\spar}} ,\, 
\rate^\star_{n}(\delta){=}\!\left(\! \frac{\ln^{2}(\ell\delta^{-1})}{ n\, w_\star}\!\right)^{\frac{\rpar}{2(\spar+\rpar)}}\!\!\!,
\end{equation}
then there exists a constant $c\,{>}\,0$, depending only on $\RKHS$ and gap $\sigma_r(\Rx\TS){-}\sigma_{r+1}(\Rx\TS)\,{>}\,0$, such that for large enough $n{\geq} r$ and $l\geq 1$ with probability at least $1\,{-}\,\delta$ in the draw of $\Data$ 
it holds that 
\begin{equation}
    \error(\ERRR)\leq
\esval_{r+1}\,\wedge\,\textstyle{\sigma_{r+1}}(\Rx\TS)+ c\left( \frac{\rate^\star_n(\delta)}{\shift} + 
 \kappa^2 \dt 
\right),
\end{equation}
where $\dt=\max_{j\in[\ell+1]}(t_j-t_{j-1})$ is the maximal time-step, $t_\ell$ is time-horizon and conditioning number $\kappa=\max_{j\in[\ell+1]}(t_j-t_{j-1})/\min_{j\in[\ell+1]}(t_j-t_{j-1})\in[1,+\infty)$ quantifies the unevenness of the sampling.
\end{theorem}

\begin{proof}
We proceed with the same decomposition \eqref{eq:dec} of the excess as in the proof of Theorem \ref{thm:error_bound}. The steps $\textbf{(0)}$, $\textbf{(I)}$ and $\textbf{(II)}$ remain unchanged. The integration bias in $\textbf{(III)}$ is again tackled with Proposition \ref{prop:approx_integral}. As for the estimator variance in step $\textbf{(IV)}$, we now use concentration result for multiple iid trajectories instead of the results we derived in App. \ref{app:concentration} for a single $\beta$-mixing trajectory. Specifically, Proposition \ref{prop:prob-Creg-1/2That-T} is replaced by Proposition 14 in \cite{kostic2023sharp}. Proposition 11 is replaced by Proposition 13 in \cite{kostic2023sharp} with $n$ replaced by $n(l+1)$. Proposition 14 is replaced by Proposition 17 in \cite{kostic2023sharp}. Hence instead of Proposition 15, we obtain the following control on the variance term. For $n$ and $\ell$ large enough, in view of \eqref{eq:approx_integral_nonuniform}, we have with probability at least $1-\delta$,
\begin{align}
\norm{\TS(\ERRR - \RRR)} \lesssim & \frac{1}{\shift} \left(\frac{\ln\left(\frac{l}{\delta} \right)}{\sqrt{ w_\star\,\reg^{\spar} \, n}}\right)+  \kappa^2 \dt +\tfrac{e^{- \shift t_\ell}}{\shift}
\end{align}
Thus we get w.p.a.l. $1-\delta$,
\begin{align}
    \error(\ERRR)-\sigma_{r+1}(\TB) &\lesssim  \frac{1}{\shift}\bigg( \underbrace{\reg^{\rpar/2}  + \frac{\ln\left(\frac{l}{\delta} \right)}{\sqrt{ w_\star\,\reg^{\spar} \, n}} }_{\textbf{(A)}}\bigg) +   \kappa^2 \dt + \tfrac{e^{- \shift t_\ell}}{\shift},
\end{align}
We can balance $\textbf{(A)}$ as in the proof of Theorem \ref{thm:error_bound}.\\ 
 Next set $\underline{\dt}= \min_{j\in[\ell+1]}(t_j-t_{j-1})$. Note we have the following conservative bound for $l$ large enough
$$
\tfrac{e^{- \shift t_\ell}}{\shift} \leq \tfrac{e^{- \shift \underline{\dt} l}}{\shift \underline{\dt}} \dt \lesssim \dt.
$$
The end of the proof follows from the same argument as in the proof of Theorem \ref{thm:error_bound}.
\end{proof}

\section{Experiments}\label{app:exp}

\begin{figure}[h!]
    \centering
   \includegraphics[scale=0.45]{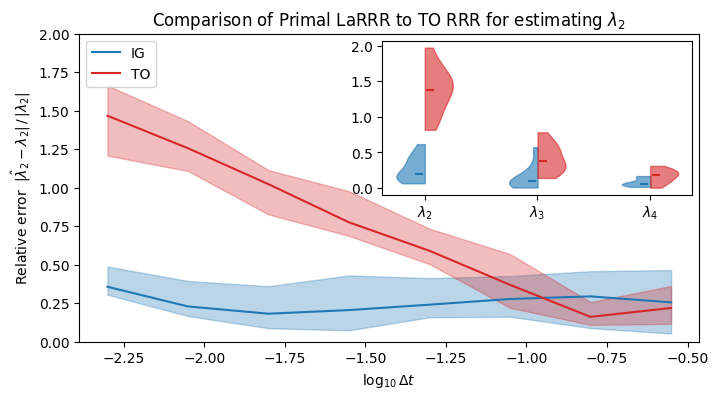}
    \caption{Comparison of the primal LaRRR algorithm to the TO RRR method for estimating the slowest timescales of the process in the basis of random Fourier features \citep{rahimi2007random}. As predicted by our theory, LaRRR (blue) remains stable as $t\to0$, whereas the error of TO method (red) diverges. The main plot shows the three quartiles of relative error for $\lambda_2$ across 10 independent trajectories of a 1D Langevin process on a triple-well potential. The inset displays the  distributions of the top three eigenvalues for $\Delta t=0.005$.}
    \label{fig:eigenvalue_error_primal}
\end{figure}

\begin{figure}[h!]
    \centering
   \includegraphics[scale=0.35]{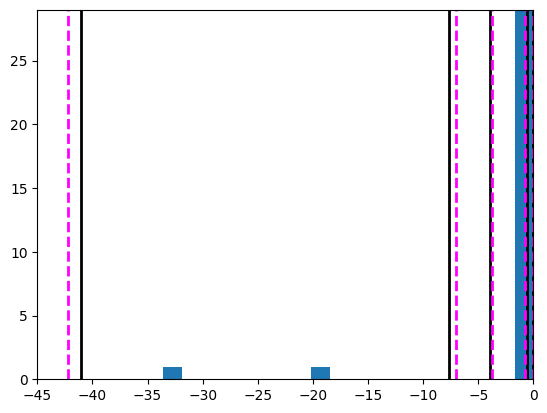}
   \includegraphics[scale=0.35]{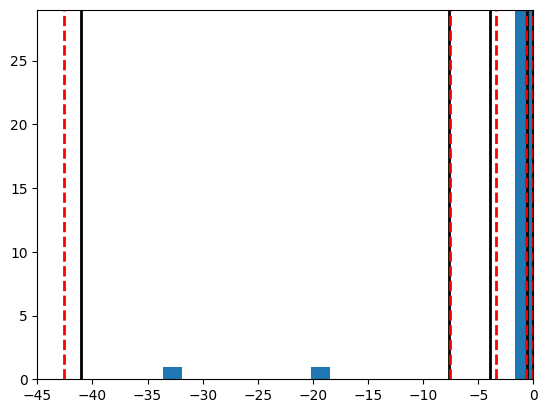}
    \caption{In the setting of experiment on  Overdamped 1D Langevin dynamics, we show how LaRRR (presented in red) compares to two physics-informed IG baselines: energy based regression of Dirichlet form by Kostic et al. (2004) (magenta), and Galerkin projection by Hou et al. (2023) (blue). The eigenvalues are shown on the horizontal axis, and the vertical axis indicates the histogram hight of those estimated by Galerkin projection. While both the method of Kostic et al. (2004) (left) and LaRRR (right) recover the eigenvalues of IG (black lines), Galerkin projection (due to the unbounded nature of IG in $\mathcal{L}^2_{\pi}$) produces many spurious eigenvalues near the origin.}
    \label{fig:baselines}
\end{figure}

\end{document}